\documentclass{article}

\usepackage{iclr2027_conference,times}
\usepackage{amsmath,amssymb,amsthm,mathtools}
\usepackage{booktabs}
\usepackage{array}
\usepackage{longtable}
\usepackage{graphicx}
\usepackage{float}
\usepackage{algorithm}
\usepackage[noend]{algpseudocode}
\floatstyle{ruled}
\restylefloat{algorithm}
\usepackage{microtype}
\usepackage{xcolor}
\usepackage{url}
\usepackage[hidelinks]{hyperref}


\newcommand{\E}{\mathbb{E}}
\newcommand{\KL}{D_{\mathrm{KL}}}
\newcommand{\TV}{d_{\mathrm{TV}}}
\newcommand{\Normal}{\mathcal{N}}
\newcommand{\occ}{\zeta_{\sigma}}
\newcommand{\aff}{\operatorname{aff}}

\newcommand{\ind}{\mathbf{1}}

\newtheorem{theorem}{Theorem}
\newtheorem{proposition}[theorem]{Proposition}
\newtheorem{lemma}[theorem]{Lemma}
\newtheorem{corollary}[theorem]{Corollary}
\theoremstyle{definition}

\theoremstyle{remark}

\title{The Geometry of Randomized Smoothing on Feasible Sets}

\author{%
  Syed Izhan Khilji \\
  Technische Universität Wien
  \And
  Alireza Furutanpey\thanks{Corresponding author: \texttt{a.furutanpey@neverblink.eu}.} \\
  NeverBlink
  \And
  Schahram Dustdar \\
  Technische Universität Wien \\
  Universitat Pompeu Fabra
}

\iclrfinalcopy 

\begin{document}

\maketitle

\makeatletter
\def\@oddhead{}
\makeatother

\begin{abstract}
Randomized smoothing certifies the probability of a fixed output event as the
center of Gaussian noise moves. Feasibility or confidence filtering reports
label probabilities only among retained proposals, producing a ratio. Its numerator
is a fixed Gaussian event mass, while its denominator is the probability of
retention and can change with the center. Substituting this ratio into the
ordinary smoothing formula can therefore certify a ball that contains a
decision boundary.
We separate the problem into a geometric question and a certification
question. Geometry determines when conditioning preserves Gaussian
comparisons. Convex retained sets preserve the full comparison, while general
sets require geometric control of the retained law as the center moves. Without
such control, conditional probabilities imply no positive universal radius.
Joint retention-and-label probabilities always yield a valid certificate for
the same filtered predictor. A uniform covariance bound transfers divergence
certificates to the retained law and can yield larger radii even when the
Gaussian event comparison fails. Both methods admit finite-sample bounds.
For a learned image classifier with a training-selected nonconvex filter,
conditional R\'enyi bounds certify more images than joint-mass bounds without
additional model evaluations. A released confidence filter
exhibits verified label changes inside radii obtained by conditional
substitution. An application of adaptive Gaussian composition covers causal
finite-horizon executions with history-dependent center shifts under a
pathwise energy bound.
\end{abstract}

\section{Introduction}

Randomized smoothing turns predictions under Gaussian noise into a robustness
certificate \citep{cohen2019certified}. Now consider a system that discards nine
out of ten noisy inputs. If one label receives nine out of ten surviving votes,
the reported probability is $0.9$. Yet, the event that an input is both retained
and assigned that label may have a Gaussian mass of only $0.09$. The standard
smoothing proof controls the event mass, not the fraction among survivors.

Let $s_y(a)$ be the Gaussian
probability that a proposal centered at $a$ is retained and assigned label
$y$. Let $\occ(a)$ be the probability of retention. The filtered vote is
$p_y(a)=s_y(a)/\occ(a)$. Dividing all label masses by the same value preserves
their order at a given center. It does not preserve the comparison between two
centers because $\occ(a)$ changes with $a$. A large majority among retained
proposals can therefore carry little Gaussian mass and can change label inside
the radius obtained by substituting $p_y(a)$ into the ordinary formula.

This observation leads to two questions. When does conditioning preserve the
Gaussian comparison, and how can the filtered predictor be certified when it
does not? The retained distribution answers the first question. Its
one-dimensional projections determine whether event probabilities obey the
Gaussian comparison. The curvature of log retention probability controls KL
and Rényi divergence, while conditional covariance describes the same
behavior locally. Convex retained sets satisfy the strongest condition and
preserve the full event comparison.

The second question has two practical answers. Joint retained-label masses are
probabilities of fixed Gaussian events, so they always produce a sound
certificate without changing the filtered predictor or adding model
evaluations. A uniform covariance bound instead permits direct certification
from conditional probabilities through the Rényi bound of
\citet{li2019certified}. This can yield a larger radius without requiring the
full Gaussian event comparison.
Without either geometric control or joint masses, the conditional label probabilities alone cannot determine a positive universal radius.

We test both conclusions on a learned image model. A training-selected
nonconvex filter on CIFAR-10.2 admits a global covariance proof, which permits
a direct comparison between covariance and joint-mass certificates. A
separate AuditVotes case study finds independently verified label changes
inside radii computed from conditional probabilities. The first study shows
how geometric control recovers a direct certificate. The second exhibits the
failure when that control is absent. Our derivations combine truncated
exponential-family identities, strong log-concavity, and rejection-aware
smoothing \citep{nielsen2022truncated,gopi2022private,
sheikholeslami2022rejection}.

Code and retained result records are publicly available
\footnote{\url{https://github.com/izhan19717/The-Geometry-of-Randomized-Smoothing-on-Feasible-Sets}}.

\paragraph{Contributions.}
\begin{itemize}
  \item We characterize when conditioning retains Gaussian event and
  divergence comparisons. The criteria cover arbitrary fixed measurable
  retained sets and identify conditional covariance as the exact local
  quantity.
  \item We prove that conditional probabilities alone provide no positive
  universal radius. Algorithm~\ref{alg:joint-mass} gives a finite-sample
  certificate for the unchanged filtered predictor from joint masses. A
  uniform covariance bound transfers divergence certificates to the retained
  law, with an explicit spatial requirement when the bound is local.
  \item Exact counterexamples and two learned-model studies demonstrate both
  failure and recovery. A finite-horizon extension covers history-dependent
  center shifts chosen before each Gaussian proposal block under one common
  causal execution.
\end{itemize}

The single-step results compare centers under the same measurable retention
and task rules. Appendix~\ref{app:alternative-kernels} treats fixed
post-processing alternatives. Appendix~\ref{app:adaptive-trajectory} develops
the causal finite-horizon result and
Algorithm~\ref{alg:ccfs-trajectory} gives its finite-sample procedure.

\section{Filtered Gaussian smoothing}
\label{sec:problem}

Fix a noise scale $\sigma>0$ and a finite-dimensional affine space $H$ with
direction space $L=H-H$. Let $I_L$ be the identity on $L$ and let
$G_a=\Normal_H(a,\sigma^2I_L)$ be the Gaussian proposal centered at $a\in H$.
The retention rule $R:H\to\{0,1\}$ and task rule
$f:H\to\mathcal Y$ are measurable and fixed, where $\mathcal Y$ is finite and
$|\mathcal Y|\geq2$.
Define
\[
 K=\{z:R(z)=1\},\qquad E_y=K\cap\{z:f(z)=y\}.
\]
Assume $G_a(K)>0$. This holds at every finite center once it holds at one
center. Gaussian measures with equal nonsingular covariance are mutually
absolutely continuous.

The joint retained-label mass, acceptance probability, and conditional label
probability are
\[
 s_y(a)=G_a(E_y),\qquad \occ(a)=G_a(K)=\sum_y s_y(a),\qquad
 p_y(a)=\frac{s_y(a)}{\occ(a)}.
\]
The filtered classifier uses one fixed tie rule and satisfies
\begin{equation}
\label{eq:conditional-joint-order}
 g(a)=\arg\max_y p_y(a)=\arg\max_y s_y(a).
\end{equation}
Thus the denominator affects the numerical probabilities but not their order.

Write $\Phi$ for the standard normal CDF. The Gaussian event comparison states
that every measurable $E\subseteq H$ and every $a,b\in H$ satisfy
\[
 \Phi\!\left(\Phi^{-1}(G_a(E))-\frac{\|a-b\|}{\sigma}\right)
 \le G_b(E)\le
 \Phi\!\left(\Phi^{-1}(G_a(E))+\frac{\|a-b\|}{\sigma}\right),
\]
using the extended Gaussian quantiles at zero and one. A certificate of radius
$r$ at $a$ is invalid if the smoothed decision changes at some $b$ with
$\|b-a\|<r$. In particular, if $A$ and $B$ are the top and runner-up labels at
$a$, the \emph{substituted conditional radius} is
\[
 r_{\mathrm{cond}}(a)=\frac{\sigma}{2}
 \left[\Phi^{-1}\!\left(p_A(a)\right)
       -\Phi^{-1}\!\left(p_B(a)\right)\right]_+.
\]
Here $[t]_+=\max\{t,0\}$.

Whole-set conditioning gives $Q_a=G_a(\cdot\mid K)$, whose label probabilities
are $p_y(a)$. The Gaussian event comparison applies to the joint mass $s_y(a)$,
not directly to this conditional ratio. By contrast, a projection, classifier,
or randomized repair applied after the Gaussian draw acts identically at both
centers. The comparison therefore carries through by data processing.
Appendix~\ref{app:alternative-kernels} formalizes this distinction.

The perturbation changes only the center $a$. For a learned center map
$a_\theta(x)$, an observation-space certificate also needs a verified bound on
$\|a_\theta(x')-a_\theta(x)\|$. A support or filter that changes with $x$
requires a separate analysis.

\section{Occupancy controls conditional Gaussian geometry}

Define the Gaussian occupancy
\[
  \occ(a)=G_a(K),\qquad \ell(a)=\log \occ(a).
\]
For centers $a,b\in\aff(K)$, conditioned KL differs from ordinary Gaussian KL
by the exact first-order Taylor remainder of log occupancy.
\begin{equation}
\label{eq:occupancy-kl}
 \KL(Q_a\|Q_b)=\frac{\|a-b\|^2}{2\sigma^2}
 +\ell(b)-\ell(a)-\langle\nabla\ell(a),b-a\rangle.
\end{equation}
The same exponential-family calculation gives a three-point correction for
finite-order R\'enyi divergence. Write $D_\alpha$ for order-$\alpha$ R\'enyi
divergence, with $D_1=\KL$. Appendix~\ref{app:occupancy} gives the
calculation.

\begin{theorem}[Global divergence-rate criterion]
\label{thm:global-validity}
Assume $K$ is measurable with positive intrinsic measure in $H=\aff(K)$. The
following are equivalent. First, $\ell$ is concave on $H$. Second,
$\KL(Q_a\|Q_b)\le\|a-b\|^2/(2\sigma^2)$ for every ordered pair. Third,
$D_\alpha(Q_a\|Q_b)\le\alpha\|a-b\|^2/(2\sigma^2)$ for every ordered pair and
every finite $\alpha>0$, with $D_1=\KL$.
\end{theorem}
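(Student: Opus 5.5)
The plan is to prove the cycle of implications (i)$\Rightarrow$(iii)$\Rightarrow$(ii)$\Rightarrow$(i). All three steps rest on the exponential-family structure of $Q_a$. Writing $Q_a(dz)\propto \ind_K(z)\exp(\langle a,z\rangle/\sigma^2)\,\gamma(dz)$ for a fixed reference measure $\gamma$ on $H$ (the centered Gaussian, after identifying $H$ with $L$ via a base point), the log-partition function is
\[
\psi(a)=\ell(a)+\frac{\|a\|^2}{2\sigma^2}
\]
up to an additive constant. Positive intrinsic measure of $K$ makes $\psi$ finite and smooth on all of $H$, so differentiation under the integral is justified. For this family the finite-order R\'enyi divergence is the three-point expression
\[
D_\alpha(Q_a\|Q_b)=\frac{1}{\alpha-1}\Bigl[\psi(\alpha a+(1-\alpha)b)-\alpha\psi(a)-(1-\alpha)\psi(b)\Bigr],
\]
as in Appendix~\ref{app:occupancy}. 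Its $\alpha\to1$ limit is the Bregman form \eqref{eq:occupancy-kl}.

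\textbf{(i)$\Rightarrow$(iii).} Substitute $\psi=\ell+q$ with $q(a)=\|a\|^2/(2\sigma^2)$. A direct computation shows that the quadratic part of the bracket equals
\[
\frac{\alpha(\alpha-1)\|a-b\|^2}{2\sigma^2}.
\]
After dividing by $\alpha-1$, this produces exactly the Gaussian value $\alpha\|a-b\|^2/(2\sigma^2)$. The remaining occupancy term is
\[
\frac{1}{\alpha-1}\bigl[\ell(\alpha a+(1-\alpha)b)-\alpha\ell(a)-(1-\alpha)\ell(b)\bigr].
\]
We show it is $\le0$ under concavity of $\ell$, splitting into cases. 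For $\alpha\in(0,1)$, the point $\alpha a+(1-\alpha)b$ is a convex combination, so the bracket is $\ge0$ by concavity; dividing by $\alpha-1<0$ makes the term $\le0$. For $\alpha>1$, write $a=\tfrac1\alpha c+(1-\tfrac1\alpha)b$ with $c=\alpha a+(1-\alpha)b$. Concavity gives $\alpha\ell(a)\ge\ell(c)+(\alpha-1)\ell(b)$, so the bracket is $\le0$, and dividing by $\alpha-1>0$ again gives a term $\le0$. For $\alpha=1$, the KL term $\ell(b)-\ell(a)-\langle\nabla\ell(a),b-a\rangle\le0$ is the tangent-line inequality for concave $\ell$.

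\textbf{(iii)$\Rightarrow$(ii)} is immediate: take $\alpha=1$.

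\textbf{(ii)$\Rightarrow$(i).} Use \eqref{eq:occupancy-kl}. Condition (ii) says
\[
\ell(b)\le\ell(a)+\langle\nabla\ell(a),b-a\rangle\quad\text{for all } a,b\in H,
\]
that is, $\ell$ lies below every one of its tangent planes. For a differentiable function on the convex set $H$, this first-order condition is equivalent to concavity: take $a=tx+(1-t)y$, apply the inequality with $b=x$ and with $b=y$, and combine with weights $t$ and $1-t$.

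The main obstacle is analytic rather than algebraic. We must verify that $\psi$ is finite and differentiable on all of $H$ and that the three-point R\'enyi formula holds for every $\alpha>0$, including extrapolated centers when $\alpha>1$. Finiteness holds because $K$ has positive measure and Gaussian moment generating functions are entire, so $\occ>0$ everywhere. The restriction to $H=\aff(K)$ is what ensures the affine-space family is minimal, so that gradients and divergences are taken within $H$.
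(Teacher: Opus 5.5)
Your proposal is correct and follows the paper's proof essentially step for step. The paper uses the same three-point R\'enyi identity \eqref{eq:occupancy-renyi}, obtained by square completion rather than through the log-partition $\psi$, which is only a cosmetic difference; it makes the same case split for $\alpha\in(0,1)$, for $\alpha>1$ via $a=\tfrac1\alpha c+(1-\tfrac1\alpha)b$, and for $\alpha=1$ via the supporting-hyperplane inequality, and it closes the cycle with the first-order characterization of concavity from \eqref{eq:occupancy-kl}.
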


Concavity of $\ell$ characterizes the divergence bound. Preserving every
Gaussian event comparison requires the stronger one-dimensional condition
below.

\begin{corollary}[Convex conditioning preserves the Gaussian comparison]
\label{cor:convex-event}
If $K$ is convex, then every measurable $E\subseteq K$ and every $a,b\in H$
satisfy
\[
 \Phi\!\left(\Phi^{-1}(Q_a(E))-\frac{\|a-b\|}{\sigma}\right)
 \leq Q_b(E)\leq
 \Phi\!\left(\Phi^{-1}(Q_a(E))+\frac{\|a-b\|}{\sigma}\right).
\]
Consequently $r_{\mathrm{cond}}$ is sound for every fixed convex retained set.
The same conclusion holds when nominal conditional label probabilities are
replaced by familywise-valid lower and upper bounds.
\end{corollary}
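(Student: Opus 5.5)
We reduce the claim to a one-dimensional statement about the retained law, as the text preceding the corollary suggests. Fix $a,b\in H$ and write $b=a+t u$ with $t=\|b-a\|$ and $u\in L$ a unit vector. The Gaussian density on $H$ factors as $\exp(\langle z-a,u\rangle t/\sigma^2-t^2/(2\sigma^2))$ times the density of $G_a$. Hence the conditional law $Q_b$ is the exponential tilt of $Q_a$ along $u$: $dQ_b/dQ_a(z)=\exp(t\langle z-a,u\rangle/\sigma^2)/Z$. The comparison is a statement about how much mass a fixed set $E\subseteq K$ can gain or lose under this tilt. The worst case for a tilt of this form is a half-space $\{\langle z-a,u\rangle\ge c\}$, since the likelihood ratio is monotone in the projection (Neyman--Pearson). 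So it suffices to show the following. Let $\mu_s$ be the law of $X_s=\langle Z,u\rangle/\sigma$ under $Q_{a+su}$, and let $F_s$ be its CDF. We need
\[
\Phi^{-1}(F_{0}(x))-\Phi^{-1}(F_{t}(x))\le t/\sigma \quad\text{for all } x.
\]
With the Neyman--Pearson reduction this gives the upper bound, and the lower bound follows by symmetry (swap $a,b$, or apply the bound to the complement $K\setminus E$).

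The key structural input is convexity. $Q_{a}$ has density proportional to a Gaussian times $\ind_K$ on $H$, which is $\sigma^{-2}$-strongly log-concave relative to Lebesgue measure on $H$. By Prékopa--Leindler, the marginal of a strongly log-concave measure along a direction is again strongly log-concave with the same constant. So the one-dimensional law of $\langle Z,u\rangle/\sigma$ has density $e^{-V}$ with $V''\ge1$, that is, it is "more log-concave than $\Normal(0,1)$". Here I would invoke Caffarelli's contraction theorem, or its elementary one-dimensional version. The monotone map $T=\Phi^{-1}\circ F$ sends $\mu_0$ to $\Normal(0,1)$, and its inverse is $1$-Lipschitz, so $T$ is expanding. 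Shifting the center by $t$ along $u$ tilts the one-dimensional density by $e^{(t/\sigma)x}$.

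The main obstacle is to show that this tilt moves the Gaussian quantile $\Phi^{-1}(F_s(x))$ by at most $t/\sigma$ uniformly in $x$. My first attempt is to differentiate in $s$. We have $\partial_s F_s(x)=-\operatorname{Cov}_{\mu_s}(\ind\{X\le x\},X)/\sigma$, so the claim becomes the inequality
\[
\operatorname{Cov}_{\mu}(\ind\{X> x\},X)\le \varphi(\Phi^{-1}(F(x)))
\]
for every one-dimensional law $\mu$ with $V''\ge1$. I would verify this as a Gaussian isoperimetric-type inequality. The Lipschitz transport writes $X=S(N)$ with $N\sim\Normal(0,1)$ and $S$ nondecreasing and $1$-Lipschitz. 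Then
\[
\operatorname{Cov}(\ind\{N>c\},S(N))=\int S'(v)\,\operatorname{Cov}(\ind\{N>c\},\ind\{N>v\})\,dv,
\]
and since $S'\le1$ this is bounded by the same quantity with $S$ the identity, namely $\varphi(c)$. Integrating the derivative bound in $s$ from $0$ to $t$ gives the displayed quantile inequality.

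Finally, convex $K$ gives a valid comparison for every measurable $E$. Applying it with $E=E_A$ and $E=E_B$ yields, for $\|b-a\|<r_{\mathrm{cond}}(a)$, the bounds $\Phi^{-1}(p_A(b))>\Phi^{-1}(p_A(a))-\|a-b\|/\sigma$ and $\Phi^{-1}(p_B(b))<\Phi^{-1}(p_B(a))+\|a-b\|/\sigma$. For labels other than $B$, use $p_y(a)\le p_B(a)$. The standard Cohen argument then gives $p_A(b)>p_y(b)$ for all $y\neq A$, by (\ref{eq:conditional-joint-order}) and the monotonicity of $\Phi$. The bounded-estimate version follows because the radius is monotone in a lower bound for $p_A$ and an upper bound for $p_B$, on the familywise event where those bounds hold.
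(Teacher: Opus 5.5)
Your proof is correct, but it takes a different route from the paper's. The paper proves the corollary in one step by citing the strongly log-concave tradeoff comparison of \citet[Theorem~13]{gopi2022private}. On convex $K$ the potentials $\|z-c\|^2/(2\sigma^2)$ are $\sigma^{-2}$-strongly convex and differ by an affine function with Lipschitz constant $\|a-b\|/\sigma^2$. This gives Gaussian tradeoff domination at separation $\|a-b\|/\sigma$ for all events at once, and applying the lower bound to $E$ and to $E^c$ gives both sides.

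You instead exploit the linearity of the log-likelihood ratio. Neyman--Pearson reduces everything to the halfspace criterion \eqref{eq:event-probit-criterion} of Proposition~\ref{prop:event-criterion}. You then verify that criterion by integrating $\partial_s\Phi^{-1}(F_s(x))\geq-1/\sigma$. The covariance bound $\operatorname{Cov}_\mu(\ind\{X>x\},X)\leq\varphi(\Phi^{-1}(F(x)))$ comes from three ingredients: Pr\'ekopa's marginal theorem, the one-dimensional Caffarelli contraction, and the representation $\operatorname{Cov}(\ind\{N>c\},S(N))=\int S'(v)\operatorname{Cov}(\ind\{N>c\},\ind\{N>v\})\,dv$ with $0\leq S'\leq1$. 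That last step is a clean argument.

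The comparison between the two routes is as follows. The paper's citation is shorter and does not depend on the potential difference being affine, but it hides the mechanism. Your route imports only Pr\'ekopa and one-dimensional Caffarelli (equivalently, the one-dimensional Gaussian isoperimetric comparison). It also ties the corollary directly to Proposition~\ref{prop:event-criterion} and isolates what convexity is used for. Tilting along $u$ leaves $V''$ unchanged, so your argument only needs the $u$-marginal of $Q_a$ to be $\sigma^{-2}$-strongly log-concave. Equivalently, the Gaussian mass of the slices of $K$ orthogonal to $u$ (orthogonal Gaussian centered at $a$) must be log-concave in the slice position. This is a pairwise condition that convexity implies but that nonconvex sets can satisfy. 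In the two-band example it holds for displacements parallel to the bands. It fails in the normal direction, which is exactly where the paper shows the comparison breaks.

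There are three harmless slips. First, $\partial_sF_s(x)=\operatorname{Cov}_{\mu_s}(\ind\{X\leq x\},X)/\sigma=-\operatorname{Cov}_{\mu_s}(\ind\{X>x\},X)/\sigma$. Your first formula has the wrong sign, although the inequality you go on to prove is the right one. Second, your displayed quantile inequality is the lower comparison for $Q_b(E)$, whose extremal sets are the halfspaces $\{X\leq x\}$; the upper comparison then follows from complements. Third, the intermediate bounds in your last paragraph should be non-strict, with strictness supplied by $\|b-a\|<r_{\mathrm{cond}}(a)$.
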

This is the strongly log-concave tradeoff comparison of
\citet[Theorem~13]{gopi2022private} specialized in the intrinsic affine hull.
It is sharp because $K=H$ recovers the ordinary Gaussian comparison. The
proof and scale calculation are in Appendix~\ref{app:occupancy}.

For a general retained set, the full comparison still has an exact
one-dimensional test.
\begin{proposition}[Ordered-pair event criterion]
\label{prop:event-criterion}
Let $a\neq b$, $u=(b-a)/\|b-a\|$, $\rho=\|b-a\|/\sigma$, and
\[
 X(z)=\langle u,z-a\rangle/\sigma,
 \qquad F_c(t)=Q_c\{X\leq t\}.
\]
Every measurable $E\subseteq K$ obeys the two-sided Gaussian event comparison
from $Q_a$ to $Q_b$ if and only if, for every $t\in\mathbb R$,
\begin{equation}
\label{eq:event-probit-criterion}
 \Phi^{-1}(F_b(t))\geq\Phi^{-1}(F_a(t))-\rho.
\end{equation}
Extended quantiles cover zero and one.
\end{proposition}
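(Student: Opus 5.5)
Since $Q_b$ has density proportional to $\exp(\langle b-a,z-a\rangle/\sigma^2)=e^{\rho X(z)}$ with respect to $Q_a$, the plan is to reduce everything to a monotone likelihood-ratio argument in the single statistic $X$. Concretely, $dQ_b/dQ_a(z)=e^{\rho X(z)}/\E_{Q_a}[e^{\rho X}]$, which is increasing in $X$. I would use the Neyman--Pearson lemma in this form: among measurable $E\subseteq K$ with $Q_a(E)=q$, the minimum of $Q_b(E)$ is attained on lower half-spaces $\{X\le t\}$, and the maximum on upper half-spaces $\{X>t\}$. If $X$ has atoms under $Q_a$, I would use randomized boundary sets, or equivalently the linear interpolation of the tradeoff curve. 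This needs a little care because $E$ must be a genuine set, but the extremal values coincide with the linearized tradeoff.

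\emph{Necessity.} Take $E=\{X\le t\}\cap K$. The lower comparison gives $\Phi^{-1}(F_b(t))\ge\Phi^{-1}(F_a(t))-\rho$ directly, since $\Phi^{-1}$ is monotone on the extended line.

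\emph{Sufficiency, lower bound.} Given $E$ with $Q_a(E)=q$, Neyman--Pearson gives $Q_b(E)\ge\beta_b(q)$, where $\beta_b(q)$ is the $Q_b$-mass of the optimal lower set of $Q_a$-mass $q$. At values of $q$ in the range of $F_a$, the criterion gives $\beta_b(q)\ge\Phi(\Phi^{-1}(q)-\rho)$. At intermediate $q$, the optimal randomized lower set interpolates linearly between $(F_a(t^-),F_b(t^-))$ and $(F_a(t),F_b(t))$. The map $q\mapsto\Phi(\Phi^{-1}(q)-\rho)$ is concave, because it is the Gaussian tradeoff, i.e.\ the ROC curve of $\Normal(0,1)$ against $\Normal(\rho,1)$. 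Hence the chord lies above the curve at the endpoints and therefore above it in between. This establishes the lower inequality. The left limits $F_c(t^-)$ also satisfy the criterion, by taking limits.

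\emph{Sufficiency, upper bound.} I would apply the lower bound to the complement $K\setminus E$, using $Q_c(K\setminus E)=1-Q_c(E)$ and the symmetry $\Phi(-x)=1-\Phi(x)$. This gives $Q_b(E)\le\Phi(\Phi^{-1}(Q_a(E))+\rho)$. The same argument also shows that one criterion \eqref{eq:event-probit-criterion} suffices for both sides.

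\emph{Main obstacle.} I expect the main difficulty to be the atom and interpolation step, together with the edge cases where $q\in\{0,1\}$ or $F_a(t)\in\{0,1\}$, which I would check with the extended quantiles. The most delicate point is verifying that the extremal sets can be taken to be contained in $K$; this holds because the likelihood ratio is defined on $K$ and all sets are intersected with $K$.
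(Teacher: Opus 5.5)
Your proof is correct and follows the paper's route. The likelihood ratio $dQ_b/dQ_a\propto e^{\rho X}$ is increasing in $X$, so Neyman--Pearson makes the halfspaces $\{X\le t\}$ extremal for the lower comparison, testing those halfspaces gives necessity, and the lower bound applied to $K\setminus E$ gives the upper side.

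The atom/interpolation step is never needed. $Q_a$ is absolutely continuous on $H$ and $X$ is a nonconstant affine function there, so $F_a$ is continuous; the paper uses exactly this. Within that step there is also a slip: $q\mapsto\Phi(\Phi^{-1}(q)-\rho)$ is convex, not concave, since its derivative at $q=\Phi(x)$ is $e^{\rho x-\rho^2/2}$, which increases in $q$. Convexity is precisely what your chord argument requires. The concave curve you had in mind is the ROC $q\mapsto\Phi(\Phi^{-1}(q)+\rho)$.
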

The likelihood ratio $dQ_b/dQ_a$ is increasing in $X$. Neyman--Pearson
ordering therefore reduces all measurable events to the nested halfspaces in
Proposition~\ref{prop:event-criterion}. The proof is in
Appendix~\ref{app:occupancy}.

The occupancy identities also yield a local diagnostic.
\begin{proposition}[Integrated and local covariance control]
\label{prop:local-diagnostic}
Let $X_c\sim Q_c$ and write $C_c=\operatorname{Cov}_{L}(X_c)$. For
$d=b-a$,
\begin{equation}
\label{eq:integrated-covariance}
 \KL(Q_a\|Q_b)=\frac{1}{\sigma^4}\int_0^1(1-t)d^\top C_{a+td}d\,dt.
\end{equation}
If $\dim L\ge1$, then for every unit $u\in L$ and every
finite $\alpha>0$, with $D_1=\KL$,
\[
 \lim_{t\to0}
 \frac{D_\alpha(Q_a\|Q_{a+tu})}{\alpha t^2/(2\sigma^2)}
 =\frac{\operatorname{Var}\langle u,X_a\rangle}{\sigma^2}.
\]
Consequently the worst local divergence ratio is
\[
  \Lambda(a)=\lambda_{\max}\!\left(\operatorname{Cov}_{L}(X_a)\right)/\sigma^2.
\]
If $\Lambda(a)>1$, the ordinary Gaussian KL bound fails along a
corresponding direction for all sufficiently small nonzero displacements.
If $\lambda_{\max}(C_c)\leq\Lambda\sigma^2$ along the segment from $a$ to
$b$, then
$\KL(Q_a\|Q_b)\leq\Lambda\|a-b\|^2/(2\sigma^2)$.
\end{proposition}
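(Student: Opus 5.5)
The plan is to treat $Q_c$ as a natural exponential family in the center and read everything off the log-partition function. On $H$, for $z\in K$,
\[
\frac{dQ_c}{d\lambda}(z)=\exp\!\Big(\frac{\langle c,z\rangle}{\sigma^2}-\frac{\|c\|^2}{2\sigma^2}-\ell(c)\Big)\frac{e^{-\|z\|^2/(2\sigma^2)}}{(2\pi\sigma^2)^{k/2}},
\]
where $\lambda$ is Lebesgue measure on $H$ and $k=\dim L$. With natural parameter $\theta=c/\sigma^2$, the log-partition is $A(\theta)=\|c\|^2/(2\sigma^2)+\ell(c)$, up to a constant. Standard exponential-family identities then give $\nabla_c A=\E[X_c]/\sigma^2$ and $\nabla_c^2 A=C_c/\sigma^4$. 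Differentiation under the integral is justified by Gaussian tails, and $A$ is smooth on $H$ because $K$ has positive intrinsic measure. Equivalently, $\nabla^2\ell(c)=C_c/\sigma^4-I_L/\sigma^2$. The KL divergence in an exponential family is the Bregman divergence of $A$:
\[
\KL(Q_a\|Q_b)=A(b)-A(a)-\langle\nabla A(a),b-a\rangle,
\]
which is consistent with \eqref{eq:occupancy-kl}. Taylor's theorem with integral remainder applied to $t\mapsto A(a+td)$ then gives
\[
\KL(Q_a\|Q_b)=\int_0^1(1-t)\,d^\top\nabla^2A(a+td)\,d\,dt,
\]
and substituting $\nabla^2A=C/\sigma^4$ yields \eqref{eq:integrated-covariance}.

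For the last claim, assume $\lambda_{\max}(C_c)\le\Lambda\sigma^2$ along the segment. Then $d^\top C_{a+td}d\le\Lambda\sigma^2\|d\|^2$. Since $\int_0^1(1-t)\,dt=1/2$, the integrated formula gives $\KL(Q_a\|Q_b)\le\Lambda\|d\|^2/(2\sigma^2)$.

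For the local limit, I will use the R\'enyi identity in an exponential family:
\[
D_\alpha(Q_a\|Q_b)=\frac{1}{\alpha-1}\Big[\alpha A(a)+(1-\alpha)A(b)-A(\alpha a+(1-\alpha)b)\Big]
\quad\text{for }\alpha\ne1.
\]
This is the three-point correction mentioned before Theorem~\ref{thm:global-validity}. Taking $b=a+tu$, the third point is $a+(1-\alpha)tu$, which lies in $H$, so $A$ is defined there for every finite $\alpha>0$.

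Expand $A$ to second order at $a$, using continuity of the Hessian. The constant terms cancel, and the linear terms cancel because $\alpha\cdot0+(1-\alpha)t-(1-\alpha)t=0$. The quadratic coefficient of $\tfrac12 t^2u^\top\nabla^2A(a)u$ is
\[
(1-\alpha)-(1-\alpha)^2=\alpha(1-\alpha).
\]
Dividing by $\alpha-1$ gives
\[
D_\alpha(Q_a\|Q_{a+tu})=\frac{\alpha t^2}{2}\,\frac{u^\top C_a u}{\sigma^4}+o(t^2).
\]
Dividing by $\alpha t^2/(2\sigma^2)$ and letting $t\to0$ gives the stated limit $\operatorname{Var}\langle u,X_a\rangle/\sigma^2$. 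The case $\alpha=1$ follows directly from the Taylor formula above. The main technical point is to control the $o(t^2)$ remainder, which only needs $A\in C^2$ near $a$; finiteness of the moment generating function on all of $H$ ensures this.

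The worst local ratio is the supremum of $u^\top C_a u/\sigma^2$ over unit $u\in L$, which equals $\lambda_{\max}(C_a)/\sigma^2=\Lambda(a)$. If $\Lambda(a)>1$, take the top eigenvector $u$. The local limit of the ratio $D_\alpha(Q_a\|Q_{a+tu})/(\alpha t^2/(2\sigma^2))$ then exceeds $1$. So for all sufficiently small $|t|\neq0$ the ratio is above $1$, and the ordinary Gaussian bound fails along $u$.
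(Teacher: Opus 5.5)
Your route is the paper's own. Your $A(c)=\|c\|^2/(2\sigma^2)+\ell(c)$ is the function $\psi$ used in the proof of Proposition~\ref{prop:diameter-radii}, and its Bregman divergence is \eqref{eq:occupancy-kl}. The integral-remainder derivation of \eqref{eq:integrated-covariance}, the three-point Taylor expansion, the Rayleigh-quotient step and the segment bound all match the appendix argument. The integrated formula, the $\alpha=1$ limit and the last two claims are fine as you wrote them.

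The local R\'enyi step, however, contains a sign error that a second sign error hides. Since $\int_K q_a^\alpha q_b^{1-\alpha}\,d\lambda=\exp\bigl(A(\alpha a+(1-\alpha)b)-\alpha A(a)-(1-\alpha)A(b)\bigr)$, the correct identity is
\[
 D_\alpha(Q_a\|Q_b)=\frac{1}{\alpha-1}\Bigl[A(\alpha a+(1-\alpha)b)-\alpha A(a)-(1-\alpha)A(b)\Bigr].
\]
Your display is the negative of this. Because $A$ is convex, your version would give $D_\alpha\le0$ for every $\alpha\neq1$.

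Your expansion of your own bracket is correct: its quadratic coefficient is $\alpha(1-\alpha)$. But dividing that by $\alpha-1$ gives $-\alpha$, not the $+\alpha$ you write. So your displayed asymptotic does not follow from your displayed identity; the two sign slips happen to cancel.

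With the corrected identity, the quadratic coefficient is $(1-\alpha)^2-(1-\alpha)=\alpha(\alpha-1)$. Dividing by $\alpha-1$ gives $\alpha\,t^2u^\top C_au/(2\sigma^4)+o(t^2)$, and the rest of your argument goes through unchanged.
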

Appendix~\ref{app:covariance-holdout} compares this local quantity with KL at
finite displacement. A local ratio above one proves failure of the ordinary
Gaussian KL bound. Certifying a decision radius additionally requires either
locating a label change or bounding covariance throughout the candidate ball.

\paragraph{The conditional substitution.}
The Gaussian comparison controls the joint event probability $G_b(E_y)$ from
the nominal mass $G_a(E_y)$. For the filtered classifier, the latter equals
$\occ(a)p_y(a)$ rather than $p_y(a)$. The lower comparison would require
$G_a(E_A)\geq \underline p_A$, while a conditional premise gives only
$G_a(E_A)\geq\occ(a)\underline p_A$. Dividing perturbed joint masses by
$\occ(b)$ preserves their order, but it cannot restore the missing nominal
factor.

\begin{proposition}[Confidence-filter counterexample]
\label{prop:confidence-filter}
Let $H=\mathbb R$, $\sigma=1$, and let the two logits at proposal $u$ be
$(0,(\log 9)u)$. Retain a proposal when its largest softmax probability is
strictly above $0.9$. Thus
$K=(-\infty,-1)\cup(1,\infty)$, with label zero on the left tail and label one
on the right tail. At center $a=-0.1$,
\[
 p_0(a)=\frac{\Phi(-0.9)}{\Phi(-0.9)+\Phi(-1.1)}
       =0.575680\ldots,
 \qquad r_{\mathrm{cond}}(a)=0.190855\ldots .
\]
The conditional decision boundary is at zero. Center $b=0.01$ therefore has
the opposite label although $|b-a|=0.11<r_{\mathrm{cond}}(a)$. The joint-mass
radius in Theorem~\ref{thm:joint-mass} equals $0.1$, the exact distance from
$a$ to the boundary.
\end{proposition}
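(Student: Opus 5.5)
The proof is a direct computation with the one-dimensional Gaussian, carried out in the order the statement lists its claims.

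\textbf{The retained set and labels.} With logits $(0,(\log 9)u)$ the softmax probability of label one is $9^u/(1+9^u)$. The largest probability exceeds $0.9$ exactly when $9^{|u|}>9$, that is, when $|u|>1$. Label one wins on $u>1$ and label zero wins on $u<-1$. This gives $K=(-\infty,-1)\cup(1,\infty)$ with $E_0=(-\infty,-1)$ and $E_1=(1,\infty)$.

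\textbf{The masses and the conditional radius.} Under $G_a=\Normal(a,1)$,
\[
 s_0(a)=\Phi(-1-a),\qquad s_1(a)=\Phi(a-1).
\]
At $a=-0.1$ these are $s_0=\Phi(-0.9)$ and $s_1=\Phi(-1.1)$, which gives the displayed $p_0(a)$ after dividing by $\occ(a)$. Since $p_1=1-p_0$, the conditional radius is
\[
 r_{\mathrm{cond}}(a)=\tfrac12\bigl[\Phi^{-1}(p_0)-\Phi^{-1}(1-p_0)\bigr]=\Phi^{-1}(p_0).
\]
Numerically $\Phi(-0.9)\approx0.184060$ and $\Phi(-1.1)\approx0.135666$, so $p_0\approx0.57568$ and $\Phi^{-1}(p_0)\approx0.19086$. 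I only need to confirm these to the stated digits; the soundness conclusion needs only $0.11<r_{\mathrm{cond}}(a)$, which holds with a wide margin.

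\textbf{The decision boundary.} By \eqref{eq:conditional-joint-order}, $g(c)$ is determined by comparing $s_0(c)=\Phi(-1-c)$ with $s_1(c)=\Phi(c-1)$. Because $\Phi$ is strictly increasing, $s_1(c)>s_0(c)$ holds if and only if $c-1>-1-c$, that is, $c>0$. The boundary is therefore exactly $0$, and the tie rule matters only at $c=0$. The new center $b=0.01>0$ gets label one, while $a=-0.1$ gets label zero. Since $|b-a|=0.11<r_{\mathrm{cond}}(a)$, the substituted certificate is invalid.

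\textbf{The joint-mass radius.} Theorem~\ref{thm:joint-mass} appears later in the paper, so I take it as the Cohen-type radius computed on the joint masses:
\[
 \frac{\sigma}{2}\Bigl[\Phi^{-1}(s_A)-\Phi^{-1}\bigl(s_B+(1-\occ)\bigr)\Bigr]_+ .
\]
Here $s_A=s_0$, and the runner-up bound is $\overline{p}_B=s_B+(1-\occ)=1-s_0$, because the abstain mass must be assigned to the competitor. This gives
\[
 \tfrac12\bigl[\Phi^{-1}(\Phi(-0.9))-\Phi^{-1}(1-\Phi(-0.9))\bigr]=\tfrac12(-0.9-0.9)<0.
\]
That is negative, so this cannot be the right form of the theorem.

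A formula that treats the abstain region as neutral instead uses $s_A$ and $s_B$ directly:
\[
 \tfrac12\bigl[\Phi^{-1}(s_0)-\Phi^{-1}(s_1)\bigr]=\tfrac12(-0.9+1.1)=0.1 .
\]
This matches the statement and the exact boundary distance. I therefore expect Theorem~\ref{thm:joint-mass} to certify $g$ by comparing the two fixed events $E_A$ and $E_B$ separately, each via the Gaussian event comparison, with no complement assignment. Under that reading, the value $0.1$ follows immediately from $\Phi^{-1}(\Phi(x))=x$.

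\textbf{Main obstacle.} The hard step is matching the precise statement of Theorem~\ref{thm:joint-mass}, which I have inferred rather than read. Everything else is elementary monotonicity of $\Phi$ together with numerical evaluation.
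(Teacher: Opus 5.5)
Your proposal is correct and follows the same direct computation the paper relies on: the joint masses $\Phi(-1-c)$ and $\Phi(c-1)$, monotonicity of $\Phi$ to place the boundary at zero, and $r_{\mathrm{cond}}=\Phi^{-1}(p_0)$ by the binary symmetry $p_1=1-p_0$. Your second reading of Theorem~\ref{thm:joint-mass} is exactly its stated population form, with $L_A=s_A(a)$, $U_B=\max_{y\neq A}s_y(a)$, and no rejected mass charged to the competitor. So drop the first variant and the stated obstacle, and the value $\tfrac12[-0.9+1.1]=0.1$ follows directly.
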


\begin{proposition}[Compact finite-displacement witness]
\label{prop:lshape-witness}
Let
\[
 K=([0,1]\times[0,\tfrac12])\cup
   ([0,\tfrac12]\times[\tfrac12,1]),\qquad \sigma=\tfrac3{20},
\]
and take $a=(13/20,49/100)$ and $b=(66/100,49/100)$, so
$\|a-b\|=1/100$.  For
$E=K\cap\{z_1\le633/1000\}$, exact-rational outward interval
propagation certifies
\[
 Q_a(E)>0.52747658,
 \qquad Q_b(E)<0.49888514,
 \qquad Q_a(E)-\Phi(1/15)>9/10000.
\]
It also certifies
\[
 \KL(Q_a\|Q_b)-\frac{\|a-b\|^2}{2\sigma^2}>\frac1{5000},
 \qquad
 \TV(Q_a,Q_b)-\left(2\Phi\!\left(\frac{\|a-b\|}{2\sigma}\right)-1\right)
 >\frac1{500}.
\]
Consequently the binary rule $f=\ind_E$ changes its conditioned label at $b$,
although $b$ lies strictly inside $r_{\mathrm{cond}}(a)$.
\end{proposition}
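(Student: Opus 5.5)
The plan is a direct computation in which every Gaussian quantity is reduced to products of one-dimensional normal CDFs. Each is then enclosed by rigorous outward-rounded rational intervals. Because $G_c$ is a product measure and $K$ is a disjoint union of axis-aligned rectangles, any rectangle mass factors. Writing $\Delta_c(\alpha,\beta)=\Phi((\beta-c)/\sigma)-\Phi((\alpha-c)/\sigma)$, we have
\[
 G_c([\alpha_1,\beta_1]\times[\alpha_2,\beta_2])=\Delta_{c_1}(\alpha_1,\beta_1)\,\Delta_{c_2}(\alpha_2,\beta_2).
\]
I split $K$ as $R_1=[0,1]\times[0,\tfrac12]$ and $R_2=[0,\tfrac12]\times(\tfrac12,1]$, and intersect with the halfspace $\{z_1\le 633/1000\}$. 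This gives $\occ(c)$ and $G_c(E)$ at $c=a,b$ as finite sums of such products, and hence $Q_c(E)=G_c(E)/\occ(c)$. Since $a_1=13/20$ and $b_1=66/100$ both lie to the right of the threshold $0.633$, and $E$ drops the part of $R_1$ right of it, the shift decreases $G_c(E)$ relative to $\occ(c)$. That is the mechanism behind the label change. Each $\Phi$ value is enclosed via a validated series or continued-fraction expansion of $\operatorname{erfc}$ with explicit tail bounds, evaluated in exact rationals with outward rounding. Quotients use lower numerator over upper denominator, and vice versa. This yields $Q_a(E)>0.52747658$ and $Q_b(E)<0.49888514$. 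It also gives $Q_a(E)-\Phi(1/15)>9/10000$, where $\|a-b\|/\sigma=1/15$ and $\Phi(1/15)$ is enclosed the same way.

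The consequence then follows from the definitions. With $f=\ind_E$, the two labels have conditional probabilities $Q_c(E)$ and $1-Q_c(E)$. At $a$ the top label is $1$, with $p_A=Q_a(E)>1/2$. At $b$ we have $Q_b(E)<1/2$, so the filtered label changes. By symmetry of $\Phi^{-1}$ we get $r_{\mathrm{cond}}(a)=\sigma\Phi^{-1}(Q_a(E))$, so $\|a-b\|<r_{\mathrm{cond}}(a)$ is equivalent to $Q_a(E)>\Phi(\|a-b\|/\sigma)=\Phi(1/15)$. That is exactly the certified margin.

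For the KL inequality I use the occupancy identity \eqref{eq:occupancy-kl}, which needs $\ell(a)$, $\ell(b)$ and $\nabla\ell(a)$. Since $b-a=(1/100,0)$, only $\partial_1\ell(a)=\partial_1\occ(a)/\occ(a)$ is needed. Differentiating the product formula gives $\partial_1\Delta_{c_1}(\alpha,\beta)=(\varphi((\alpha-c_1)/\sigma)-\varphi((\beta-c_1)/\sigma))/\sigma$. This is again a closed form in $\varphi$ and $\Phi$, enclosed by interval arithmetic, with the logarithm bounded by a validated series. The target is the remainder $\ell(b)-\ell(a)-\tfrac1{100}\partial_1\ell(a)>1/5000$. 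For total variation I use that the log-likelihood ratio $\log(dQ_b/dQ_a)=\langle b-a,z\rangle/\sigma^2+\text{const}$ is monotone in $z_1$ (as in Proposition~\ref{prop:event-criterion}). So $\TV(Q_a,Q_b)=Q_a\{z_1\le t^\ast\}-Q_b\{z_1\le t^\ast\}$, where the crossing point is $t^\ast=(a_1+b_1)/2+\sigma^2(\ell(b)-\ell(a))/(b_1-a_1)$. Instead of locating $t^\ast$ exactly, it suffices to exhibit a rational $t$ with $Q_a\{z_1\le t\}-Q_b\{z_1\le t\}$ certified above $2\Phi(1/30)-1+1/500$, because any fixed event gives a lower bound on TV. I would try $t$ near the enclosure of $t^\ast$, or simply $t=633/1000$ itself. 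That choice already gives a gap of about $0.0286$, against $2\Phi(1/30)-1\approx0.0266$.

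The main obstacle is numerical rather than conceptual. The KL excess is a small second-order remainder, roughly of size $10^{-4}$ to $10^{-3}$, obtained from a cancellation among $\ell(b)$, $\ell(a)$ and the gradient term. Proving it requires tight enclosures of $\Phi$ and $\log$, to roughly $10^{-8}$ or better. I would first try a high-precision rational interval evaluation at moderate series depth. If the remainder enclosure is too loose, I would instead use the integrated-covariance form \eqref{eq:integrated-covariance}, lower-bounding $\operatorname{Var}(X_1)$ under $Q_{a+td}$ on a fine partition of $t\in[0,1]$, which avoids the cancellation.
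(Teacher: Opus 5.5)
Your proposal is correct and follows essentially the same route as the paper's proof in Appendix~\ref{app:lshape-witness}. Both factor rectangle masses into one-dimensional $\Phi$ differences and enclose every primitive by outward rational intervals. Both obtain $\nabla\ell(a)$ from the first-moment formula (your $\partial_1\Delta$), apply the occupancy KL identity, use the monotone likelihood ratio in $z_1$ with crossing point $x_*$ for TV, and reduce the radius claim to $Q_a(E)>\Phi(1/15)$ via $r_{\mathrm{cond}}(a)=\sigma\Phi^{-1}(Q_a(E))$.

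One quantitative caution concerns the margins. The shortcut $t=633/1000$ for the TV bound is valid, but it clears $2\Phi(1/30)-1+1/500$ by only about $2\times10^{-7}$, and the KL excess clears $1/5000$ by only about $1.3\times10^{-6}$. Your planned $10^{-8}$-level enclosures are therefore genuinely needed, whereas a $t$ near $x_*\approx0.6232$ gives a TV margin of about $5\times10^{-5}$.
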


\begin{figure}[H]
  \centering
  \includegraphics[width=\linewidth]{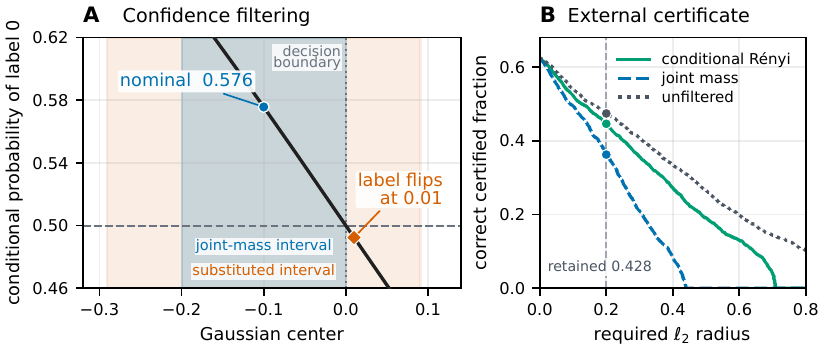}
  \caption{Normalization failure and geometry-controlled recovery. Panel A
  shows Proposition~\ref{prop:confidence-filter}. The conditional label changes
  at zero. The substituted interval crosses this boundary, while the joint-mass
  interval ends at it. Panel B uses the projected-band filter selected before
  CIFAR-10.2 access. Curves give the fraction of all test images with the
  correct selected label and a positive valid radius at least as large as the
  abscissa. Conditional R\'enyi and joint mass certify the same filtered predictor.
  Unfiltered smoothing is a separate baseline. The vertical line marks radius
  $0.2$.}
  \label{fig:exact-witness}
\end{figure}

Pr\'ekopa log-concavity also makes a convex support satisfy
Theorem~\ref{thm:global-validity} \citep{prekopa1971logarithmic}. Convexity is
sufficient but not necessary for log occupancy to be concave. The confidence
filter and the connected L-shape show that simple fixed nonconvex supports need
not satisfy either the substituted decision radius or the Gaussian divergence
rate. The outward interval algorithm and certified enclosures are in
Appendix~\ref{app:lshape-witness}. Figure~\ref{fig:exact-witness-support} shows
the retained supports.

\section{Certifying the unchanged filtered classifier}

Equation~\eqref{eq:conditional-joint-order} gives a direct correction. The
conditional predictor can be left unchanged while its certificate is computed
from the probabilities of the fixed Gaussian events $E_y$.

\begin{theorem}[Joint retained-label certificate]
\label{thm:joint-mass}
Fix the retention rule, task rule, label set, and noise scale from
Section~\ref{sec:problem}. Let $A$ be the selected label at center $a$. Suppose
numbers $L_A$ and $U_y$ satisfy
\[
  L_A\leq s_A(a),\qquad s_y(a)\leq U_y\quad\text{for every }y\neq A.
\]
Set $U_B=\max_{y\neq A}U_y$ and set $r_{\mathrm{mass}}(a)=0$ when
$L_A\leq U_B$. Otherwise, the conditional classifier is constant throughout
the open ball of radius
\begin{equation}
\label{eq:joint-mass-radius}
 r_{\mathrm{mass}}(a)=\frac{\sigma}{2}
 \left[\Phi^{-1}(L_A)-\Phi^{-1}(U_B)\right].
\end{equation}
At the population level one may take $L_A=s_A(a)$ and
$U_B=\max_{y\neq A}s_y(a)$.
\end{theorem}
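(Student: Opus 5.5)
The plan is to reduce the statement to the ordinary Gaussian event comparison applied to the fixed events $E_y$, and then pass to the conditional classifier through the order identity \eqref{eq:conditional-joint-order}. Since $E_y=K\cap\{f=y\}$ does not depend on the center, each $s_y(b)=G_b(E_y)$ is a Gaussian event mass to which the comparison of Section~\ref{sec:problem} applies directly. No geometry of $K$ is needed; the denominator $\occ$ only enters at the end, through the argmax.

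Fix $b$ with $\|b-a\|<r_{\mathrm{mass}}(a)$ and write $\delta=\|b-a\|/\sigma$. The lower comparison, together with monotonicity of $\Phi$ and $\Phi^{-1}$ and the premise $L_A\le s_A(a)$, gives
\[
s_A(b)\ge\Phi\!\left(\Phi^{-1}(s_A(a))-\delta\right)\ge\Phi\!\left(\Phi^{-1}(L_A)-\delta\right).
\]
For each $y\neq A$, the upper comparison with $s_y(a)\le U_y\le U_B$ gives
\[
s_y(b)\le\Phi\!\left(\Phi^{-1}(U_B)+\delta\right).
\]
The radius condition $\delta<\tfrac12[\Phi^{-1}(L_A)-\Phi^{-1}(U_B)]$ is equivalent to $\Phi^{-1}(L_A)-\delta>\Phi^{-1}(U_B)+\delta$. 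Strict monotonicity of $\Phi$ then yields $s_A(b)>s_y(b)$ for every $y\neq A$. Because $\occ(b)>0$, dividing by it preserves this strict order, so $p_A(b)>p_y(b)$. Hence $A$ is the unique maximizer at $b$, the tie rule never applies, and $g(b)=A$. The population-level claim is the special case $L_A=s_A(a)$, $U_B=\max_{y\ne A}s_y(a)$.

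The main point requiring care is the extended quantiles. If $L_A=1$, then $\Phi^{-1}(L_A)=+\infty$ and the radius is infinite. In that case $s_A(a)=1$ forces $s_A(b)=1$ by mutual absolute continuity, and every other mass is zero. If $U_B=0$ (possibly with $U_B<0$ given as a bound, which I would clip to $0$), then $\Phi^{-1}(U_B)=-\infty$ and the analogous argument shows $s_y(b)=0<s_A(b)$. I also need $s_A(b)>0$ in this case, which holds because $L_A>U_B\ge0$ gives $s_A(a)>0$, and absolute continuity carries this to $s_A(b)>0$. I would handle these boundary cases separately before the generic computation, and note that $L_A>U_B$ guarantees the bracket is positive, so the radius is well defined.
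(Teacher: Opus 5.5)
Your proposal is correct and matches the paper's proof step for step: the lower Gaussian comparison on the fixed event $E_A$, the upper comparison on each $E_y$, strict separation of the two bounds inside $r_{\mathrm{mass}}(a)$, and then division by the positive acceptance probability $\occ(b)$ to transfer the order to the conditional classifier. Your separate handling of the extended-quantile endpoints is a harmless addition that the paper leaves implicit; note that $U_B<0$ cannot actually occur, since $U_y\geq s_y(a)\geq 0$.
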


The proof applies the lower Gaussian comparison to $E_A$ and the upper
comparison to each $E_y$. Their bounds remain strictly ordered inside
\eqref{eq:joint-mass-radius}. Equation~\eqref{eq:conditional-joint-order} then
transfers the result to the conditional classifier. The full proof is in
Appendix~\ref{app:joint-mass}.

\citet{sheikholeslami2022rejection} previously retained rejection as an output
and certified joint selected-label and selected-or-rejection masses.
If $L_{A\cup\bot}$ lower bounds the combined mass of the selected and rejected
outputs, then $1-L_{A\cup\bot}$ upper bounds every task competitor. Explicit
competitor bounds can be smaller in multiclass problems. Neither finite-sample
construction uniformly dominates because they divide the confidence budget
differently. Algorithm~\ref{alg:joint-mass} combines both bounds within one
confidence event.

\begin{proposition}[Sharpness and conditional-only impossibility]
\label{prop:joint-sharpness}
For every $0<s_B<s_A$ with $s_A+s_B\leq1$, there is a one-dimensional fixed
filter and binary task whose joint masses at the nominal center are $s_A$ and
$s_B$ and whose nearest decision boundary is exactly
$\frac{\sigma}{2}[\Phi^{-1}(s_A)-\Phi^{-1}(s_B)]$. Hence, no uniformly larger
binary radius can be determined from these two masses alone. In contrast, for
every $p\in(1/2,1)$ and every $\varepsilon>0$, there is a fixed filter with
conditional top probability $p$ whose decision boundary is less than
$\varepsilon$ away. Conditional probabilities and $\sigma$ alone therefore
give no positive universal radius over arbitrary fixed filters.
\end{proposition}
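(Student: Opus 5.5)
The plan is to build both halves from the same one-dimensional template. Take $H=\mathbb R$ and a retained set made of two disjoint Gaussian tails: the left tail carries label $A$ and the right tail carries label $B$. Since $E_A$ and $E_B$ are halfspaces, their masses move under a shift of the center exactly as the Gaussian comparison allows, and the conditional decision boundary can be computed in closed form.

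\emph{Sharpness.} Fix $0<s_B<s_A$ with $s_A+s_B\le1$, center $a=0$, and set
\[
E_A=\bigl(-\infty,\sigma\Phi^{-1}(s_A)\bigr),\qquad E_B=\bigl(\sigma\Phi^{-1}(1-s_B),\infty\bigr),\qquad K=E_A\cup E_B.
\]
The condition $s_A+s_B\le1$ is exactly what makes $\Phi^{-1}(s_A)\le\Phi^{-1}(1-s_B)$, so the two pieces are disjoint and $R=\ind_K$ with $f$ equal to $A$ on the left and $B$ on the right is a fixed measurable filter and binary task. The joint masses at the nominal center are $s_A$ and $s_B$ by construction. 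At a center $b$,
\[
s_A(b)=\Phi\bigl(\Phi^{-1}(s_A)-b/\sigma\bigr),\qquad s_B(b)=\Phi\bigl(\Phi^{-1}(s_B)+b/\sigma\bigr).
\]
The first is strictly decreasing in $b$ and the second strictly increasing. By \eqref{eq:conditional-joint-order} the label is therefore $A$ exactly while $\Phi^{-1}(s_A)-b/\sigma>\Phi^{-1}(s_B)+b/\sigma$. For $b<0$ both inequalities favour $A$, so the nearest boundary lies at $b^*=\frac{\sigma}{2}[\Phi^{-1}(s_A)-\Phi^{-1}(s_B)]$. There the two masses tie, and beyond it $B$ wins strictly. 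Any rule assigning a strictly larger radius to the pair $(s_A,s_B)$ would therefore be invalid for this instance, while Theorem~\ref{thm:joint-mass} attains it.

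\emph{Conditional-only impossibility.} Fix $p\in(1/2,1)$ and $\varepsilon>0$, and work in units of $\sigma$. For a large $M>0$, take
\[
E_A=(-\infty,-\sigma M),\qquad E_B=(\sigma M',\infty),
\]
with $M'$ chosen so that $\Phi(-M')/\Phi(-M)=(1-p)/p$. Such an $M'$ exists by continuity, and $M'>M$ because $p>1/2$. Then $p_A(0)=p$. By the same computation as above, the decision boundary sits at $b^*=\sigma(M'-M)/2$.

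The one real step is showing $M'-M\to0$ as $M\to\infty$. I would use concavity of $t\mapsto\log\Phi(-t)$, which holds by Gaussian log-concavity. Its derivative is $-\varphi(t)/\Phi(-t)$, and the Mills-ratio inequality $\varphi(t)/\Phi(-t)\ge t$ for $t>0$ bounds this derivative by $-t$. Integrating from $M$ to $M'$ gives
\[
\log\frac{p}{1-p}=\log\Phi(-M)-\log\Phi(-M')\ge M(M'-M),
\]
so $b^*\le\sigma\log\bigl(p/(1-p)\bigr)/(2M)$. Choosing $M$ large enough makes $b^*<\varepsilon$. Since the same conditional probability $p$ and the same $\sigma$ are compatible with an arbitrarily close boundary, no positive radius can be a function of these quantities alone, which proves the final claim.
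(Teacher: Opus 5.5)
Your proof is correct, and your constructions are the paper's. The sharpness instance is the same pair of oppositely oriented half-lines. Your tails $(-\infty,-\sigma M)\cup(\sigma M',\infty)$, viewed from center $0$, are a translate of the paper's symmetric tails $K_T=(-\infty,-T]\cup[T,\infty)$ viewed from center $-d$, via $\sigma M=T-d$ and $\sigma M'=T+d$.

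The two routes differ only in the quantitative step. The paper fixes $T$ and inverts the conditional probability $F_T$ in the center to obtain $d_T$. It then uses Gaussian tail asymptotics, uniform over bounded $c$, together with monotonicity, to conclude $Td_T\to c_*=\frac{\sigma^2}{2}\log\frac{p}{1-p}$.

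You instead fix the near tail and choose the far tail so that the conditional probability is exactly $p$. You read off the boundary as the midpoint $\sigma(M'-M)/2$. You then integrate the Mills-ratio bound $\varphi(t)/\Phi(-t)\ge t$ to get the explicit estimate $b^*\le\sigma\log\bigl(p/(1-p)\bigr)/(2M)$. This is more elementary and names a concrete admissible $M$.

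If you keep the unweakened integral $(M'^2-M^2)/2$, the same computation gives $Td\le c_*$ for every $T$. This is a non-asymptotic, one-sided form of the paper's limit. What the paper's asymptotics add is that $c_*/T$ is the exact decay rate, which the proposition does not need.

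Concavity of $t\mapsto\log\Phi(-t)$ is never actually used in your argument. The pointwise Mills bound on $[M,M']$ already suffices.
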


The sharp construction uses two oppositely oriented half-lines and rejects the
interval between them. The impossibility construction moves the two retained
tails outward while moving the nominal center toward their symmetry point.
Appendix~\ref{app:joint-mass} gives both calculations.

\paragraph{Transferring divergence certificates through conditioning.}
A covariance bound controls divergences of the conditional law even when the
Gaussian event comparison fails. It therefore permits direct use of the
R\'enyi classification bound of \citet{li2019certified}. For $p>q>0$ and
$p+q\leq1$, define
\[
 C_\alpha(p,q)=-\log\!\left(1-p-q+2M_{1-\alpha}(p,q)\right),\qquad
 M_t(p,q)=\left(\frac{p^t+q^t}{2}\right)^{1/t},
\]
with $M_0(p,q)=\sqrt{pq}$. Zero probabilities use continuous limits.
This is the least reverse categorical R\'enyi divergence permitting a tie.
\begin{proposition}[Conditional divergence certificates]
\label{prop:diameter-radii}
Fix $a$ and $R\in(0,\infty]$. Let $X_c\sim Q_c$ and write
$C_c=\operatorname{Cov}_{L}(X_c)$. Suppose
$\lambda_{\max}(C_c)\leq\Lambda\sigma^2$ for every $c$ with
$\|c-a\|<R$, where $\Lambda>0$. If $p_A(a)>p_B(a)$, the conditional decision
is fixed throughout the open ball of radius
\begin{equation}
\label{eq:conditional-renyi-radius}
 r_{\mathrm R}=\sup_{\alpha\geq1}\min\!\left\{
 \frac{R}{\alpha},\ \sigma\sqrt{\frac{2C_\alpha(p_A,p_B)}{\alpha\Lambda}}
 \right\}.
\end{equation}
For any finite set of orders containing one, its maximum is also valid.
For conditional confidence bounds, replace $p_A,p_B$ by
$L_A,\min\{U_B,1-L_A\}$, returning zero when these are not strictly ordered.
\end{proposition}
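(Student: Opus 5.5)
The plan is to bound the order-$\alpha$ Rényi divergence $D_\alpha(Q_b\|Q_a)$ (or the reverse, whichever orientation the bound of \citet{li2019certified} needs) for every $b$ in the candidate ball, and then invoke the Rényi classification lemma. Fix $\alpha\ge1$ and a target $b$ with $\|b-a\|<r$, where $r$ is the minimum in \eqref{eq:conditional-renyi-radius} for that $\alpha$.

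\textbf{Step 1: Rényi divergence of the conditioned law.} Write $Q_c$ as the exponential family $dQ_c/d\mu \propto \exp(\langle c,z\rangle/\sigma^2)$ restricted to $K$, with log-partition $\psi(c)=\ell(c)+\|c\|^2/(2\sigma^2)$ up to a constant. The three-point identity from Appendix~\ref{app:occupancy} gives
\[
(\alpha-1)D_\alpha(Q_b\|Q_a)=\alpha\psi(b)+(1-\alpha)\psi(a)-\psi(a+\alpha(b-a)).
\]
Here $\nabla^2\psi(c)=C_c/\sigma^4$. The right-hand side is a second-order Taylor expression of $\psi$ along the line through $a$ and $b$. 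It evaluates $\psi$ at $a$, at $b$, and at the extrapolated point $c_\alpha=a+\alpha(b-a)$, and all three lie on the segment from $a$ to $c_\alpha$. Writing both $\psi(b)$ and $\psi(c_\alpha)$ by Taylor's theorem with integral remainder around $a$ cancels the linear terms. The remainder is then an integral of $d^\top\nabla^2\psi\,d$ over that segment, with weights that are nonnegative and total $\alpha(\alpha-1)\|d\|^2/2$.

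\textbf{Step 2: Bounding the remainder.} Applying $\lambda_{\max}(C_c)\le\Lambda\sigma^2$ gives
\[
D_\alpha\le \frac{\alpha\Lambda\|b-a\|^2}{2\sigma^2}.
\]
This uses the covariance bound only at points $c$ on the segment, so we need $\|c_\alpha-a\|=\alpha\|b-a\|<R$. That is exactly the constraint $\|b-a\|<R/\alpha$ in the first term of the minimum. For $\alpha=1$, Proposition~\ref{prop:local-diagnostic} already gives the KL bound. This step is the main obstacle: the extrapolated center escapes the ball unless the radius is shrunk by $\alpha$. I would verify carefully that the kernel weights in the three-point remainder are nonnegative for $\alpha>1$. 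I would first try writing the combination as $\alpha\int_0^1(1-t)\,\cdots$ minus $\int_0^\alpha(\alpha-t)\,\cdots$ and reorganizing it into a single positive kernel on $[0,\alpha]$.

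\textbf{Step 3: Classification consequence.} \citet{li2019certified} show that if the categorical label distributions $(p_y(a))$ and $(p_y(b))$ have argmax changed, then by data processing $D_\alpha(Q_b\|Q_a)\ge C_\alpha(p_A,p_B)$. Here $C_\alpha$ is the minimal reverse divergence allowing a tie, which is monotone in the margin. Requiring $\alpha\Lambda\|b-a\|^2/(2\sigma^2)<C_\alpha$ gives the second term of the minimum. Taking the supremum over $\alpha$ is valid because each $\alpha$ separately certifies its own ball, and the union of nested balls is the ball of the supremal radius; the same holds for any finite set of orders.

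\textbf{Step 4: Confidence bounds.} Replacing $(p_A,p_B)$ by $(L_A,\min\{U_B,1-L_A\})$ is valid because $C_\alpha(p,q)$ is nondecreasing in $p$ and nonincreasing in $q$. I would check this monotonicity on the relevant domain $p>q$, $p+q\le1$. The cap $1-L_A$ is legitimate because $p_B\le 1-p_A$.
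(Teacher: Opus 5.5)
Your plan follows the paper's proof step for step. Both use the exponential-family three-point identity with $\psi(c)=\|c\|^2/(2\sigma^2)+\ell(c)$ and $\nabla^2\psi=C_c/\sigma^4$. Both bound the second derivative along the segment out to the extrapolated center $a+\alpha(b-a)$, which is where $R/\alpha$ comes from. The remaining steps also match: the categorical bound of \citet{li2019certified} via data processing, the union of concentric balls, and monotonicity of $C_\alpha$ for the confidence-bound version.

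\textbf{Sign error in Step 1.} Your displayed identity has the wrong sign. The correct form is
\[
 (\alpha-1)D_\alpha(Q_b\|Q_a)=\psi(a+\alpha(b-a))-\alpha\psi(b)+(\alpha-1)\psi(a),
\]
which is nonnegative by convexity of $\psi$. With your sign, the kernel you propose to reorganize, $\alpha(1-t)\ind_{[0,1]}(t)-(\alpha-t)\ind_{[0,\alpha]}(t)$, is nonpositive, and it would yield $D_\alpha\le 0$.

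With the correct sign, write $g(t)=\psi(a+td)$. The remainder is then
\[
 \int_0^1(\alpha-1)t\,g''(t)\,dt+\int_1^\alpha(\alpha-t)\,g''(t)\,dt .
\]
This kernel is nonnegative with total mass $\alpha(\alpha-1)/2$. It gives $D_\alpha(Q_b\|Q_a)\le\alpha\Lambda\|b-a\|^2/(2\sigma^2)$, as you intended.

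\textbf{How the paper gets the same bound.} It avoids the explicit kernel. It sets $v=\Lambda\|d\|^2/\sigma^2$ and notes that $g(t)-vt^2/2$ is concave on $[0,\alpha]$. It then compares the value at $1$ with the chord between $0$ and $\alpha$. Your kernel computation is the explicit version of that argument.

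\textbf{The case $\alpha=1$.} You need the reverse orientation $\KL(Q_b\|Q_a)$. Proposition~\ref{prop:local-diagnostic} with $a$ and $b$ interchanged supplies it, because the segment is the same.
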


The covariance premise bounds $D_\alpha(Q_b\|Q_a)$ by
$\alpha\Lambda\|b-a\|^2/(2\sigma^2)$. Orders above one require control as far
as the tilted center $a+\alpha(b-a)$, which explains $R/\alpha$.
Covariance at $a$ alone is insufficient. The order-one case uses reverse KL.
It improves on the forward-KL radius
$r_{\mathrm{cov}}=\min\{R,\sigma\sqrt{2\mathcal J(p_A,p_B)/\Lambda}\}$,
where $\mathcal J(p,q)=p\log(2p/(p+q))+q\log(2q/(p+q))$.
Order selection reuses one confidence event and adds no model evaluations.
Appendix~\ref{app:joint-mass} proves the transfer and the improvement.

\paragraph{Certified nonconvex recovery.}
Let $\sigma=1$ and retain the two horizontal bands
$K=\mathbb R\times([-1,-0.9]\cup[0.9,1])$, labeled by band. At
$a=(0,-0.1)$ the nearest decision boundary is exactly $0.1$ away. The law
factorizes and Popoviciu gives $\operatorname{Cov}(Q_c)\preceq I$ at every
center. Thus $\Lambda=1$ is certified globally although $K$ is nonconvex and
has infinite diameter. Outward rational intervals certify
\[
 r_{\mathrm{mass}}=0.04067\ldots <
 r_{\mathrm{cov}}=0.09470\ldots <0.1<
 r_{\mathrm{cond}}=0.11888\ldots .
\]
This example shows that the ordinary Gaussian KL bound can hold even when the
full Gaussian event comparison fails. Even the forward-KL certificate is larger
than the joint-mass certificate while remaining below the exact decision
boundary. The construction extends to a unit vector $u$ learned on separate
data and then fixed in the filter
$R(x)=\ind\{\alpha\leq|u^\top x+b|\leq\beta\}$, where
$0<\alpha<\beta\leq\sigma$. The same global covariance bound holds.
Appendix~\ref{app:two-band} gives the calculation and this extension, and
Figure~\ref{fig:two-band-covariance-radius} gives a continuous sweep.

Write $\operatorname{CP}_{\rm lower}(c,n,\eta)$ and
$\operatorname{CP}_{\rm upper}(c,n,\eta)$ for one-sided Clopper--Pearson
endpoints with tail failure probability $\eta$ \citep{clopper1934confidence}.

\begin{algorithm}[H]
\caption{Filtered Gaussian certification with familywise competitor bounds}
\label{alg:joint-mass}
\begin{algorithmic}[1]
\Statex \textbf{Fixed inputs}\quad center $a$, scale $\sigma$, filter $R$, task rule $f$, labels $\mathcal Y$, tie rule
\Statex \textbf{Statistical inputs}\quad selection size $N_0$, estimation size $N$, error $\delta$
\State Draw $Z_i^{(0)}\overset{\mathrm{iid}}{\sim}G_a$ for $i=1,\ldots,N_0$
\State $C_y^{(0)}\gets\sum_i\ind\{R(Z_i^{(0)})=1,\ f(Z_i^{(0)})=y\}$ for every $y\in\mathcal Y$
\State Select $\widehat A$ from $(C_y^{(0)})_y$ using the fixed tie rule
\State Draw a fresh batch $Z_i\overset{\mathrm{iid}}{\sim}G_a$ for $i=1,\ldots,N$
\State $C_y\gets\sum_i\ind\{R(Z_i)=1,\ f(Z_i)=y\}$ for every $y\in\mathcal Y$
\State $C_\bot\gets N-\sum_{y\in\mathcal Y}C_y$
\State $L_{\widehat A}\gets\operatorname{CP}_{\rm lower}(C_{\widehat A},N,\delta/3)$
\State $L_{\widehat A\cup\bot}\gets\operatorname{CP}_{\rm lower}(C_{\widehat A}+C_\bot,N,\delta/3)$
\For{$y\in\mathcal Y\setminus\{\widehat A\}$}
  \State $U_y\gets\operatorname{CP}_{\rm upper}(C_y,N,\delta/[3(|\mathcal Y|-1)])$
\EndFor
\State $U_B\gets\min\{\max_{y\neq\widehat A}U_y,\ 1-L_{\widehat A\cup\bot}\}$
\If{$L_{\widehat A}\leq U_B$}
  \State \Return abstain
\EndIf
\State $r\gets\frac{\sigma}{2}[\Phi^{-1}(L_{\widehat A})-\Phi^{-1}(U_B)]$
\State \Return $(\widehat A,r)$
\end{algorithmic}
\end{algorithm}

With probability at least $1-\delta$, the bounds hold jointly for each input.
The algorithm computes two valid upper bounds on the strongest competitor and
uses the smaller one. One comes from explicit competitor counts and the other
from the selected-or-rejected mass \citep{sheikholeslami2022rejection}. Rejected
proposals count toward the common denominator $N$, so both bounds use the same
model evaluations. Normalizing by the retained count instead estimates
conditional probabilities and does not support the ordinary Gaussian radius.
A single-batch variant bounds every label before selection.

\paragraph{Which comparison applies.}
A fixed classifier, projection, or randomized repair is common
post-processing and uses the ordinary event certificate. Convex conditioning
uses the same certificate by Corollary~\ref{cor:convex-event}. Log-concave
occupancy gives the KL and R\'enyi rate in
Theorem~\ref{thm:global-validity}, but not by itself the event comparison. A
certified uniform covariance bound gives
$r_{\mathrm R}$. Arbitrary fixed conditioning uses joint retained-label
masses. Appendix~\ref{app:adaptive-trajectory} gives the corresponding
finite-horizon comparison for predictable Gaussian blocks under one common
causal program.

\section{Experiments}

We evaluate a geometry-controlled certificate on a learned predictor, then
measure normalization error and locate individual conditional-label changes
for a different released filter.

\paragraph{Global covariance certificate on a learned model.}
\label{sec:external-covariance}
Using CIFAR-10 training images, we selected a unit direction and offset for the
nonconvex filter
$\alpha\leq|\langle u,z\rangle+b|\leq\beta$ with
$0<\alpha<\beta=\sigma=0.25$. This support satisfies the analytic bound
$\operatorname{Cov}(Q_c)\preceq\sigma^2I$ at every center. We evaluated the
resulting filter on all $2{,}000$ CIFAR-10.2 images. All images, including
abstentions, remain in each reported denominator. The joint confidence bounds
have familywise error at most $0.001$.

At radius $0.2$, conditional R\'enyi, joint-mass, and unfiltered smoothing
correctly certify $892$, $725$, and $948$ images. The original forward-KL
analysis certifies $829$. At radius $0.5$, R\'enyi certifies $381$ images,
where forward KL and joint mass certify none and unfiltered smoothing
certifies $529$. The R\'enyi calculation is a reanalysis of the same saved
counts and confidence bounds, with no additional model evaluations.
On $877$ correctly classified images, its lower radius bound exceeds the
joint-mass upper radius bound by at least $0.01$. Figure~\ref{fig:exact-witness}B
gives the curves. Appendix~\ref{app:cifar102} reports the original experiment,
reanalysis, and a matched-model-call comparison. Unfiltered smoothing remains
stronger at these radii, including with matched model calls.

\paragraph{Released image model.}
We use the AuditVotes Gaussian image code, released CIFAR-10
\citep{krizhevsky2009learning} ResNet-110 checkpoint, and $\sigma=0.25$
\citep{lai2026auditvotes}. The filter retains a proposal when its largest
softmax probability exceeds $0.9$. We evaluate all $10{,}000$ CIFAR-10 test
images and all $2{,}021$ CIFAR-10.1 v4 images \citep{recht2018cifar101}. A
deterministic protocol specified the latter dataset, full evaluation, and
attack cohort before model inference. Each image uses independent batches of
$N_0=100$ proposals for label selection and $N=10{,}000$ for estimation at
$\delta=0.001$. Appendix~\ref{app:cifar101} gives provenance and full results.

The released certificate uses conditional label probabilities as the nominal
inputs to the Gaussian halfspace formula. Its implementation estimates those
probabilities using the retained count as the binomial denominator
\citep{lai2026auditvotes}.

For each image, every method uses the same selection and estimation batches.
The released one-sided calculation
uses $\sigma[\Phi^{-1}(\underline p_A)]_+$, where $\underline p_A$ is a
retained-count binomial lower bound. The explicit-competitor calculation
keeps $N$ as the denominator and applies Theorem~\ref{thm:joint-mass}. The
rejection-complement calculation is due to
\citet{sheikholeslami2022rejection}. Algorithm~\ref{alg:joint-mass} combines
both runner bounds in one familywise confidence event. Each curve reports
the fraction of all images with the correct selected label and radius greater
than $\varepsilon$. The joint-mass and unfiltered curves report certified
accuracy. The conditional-substitution curve is diagnostic.

\begin{figure}[t]
  \centering
  \ifdefined\arxivversion
  \includegraphics[width=0.9\linewidth]{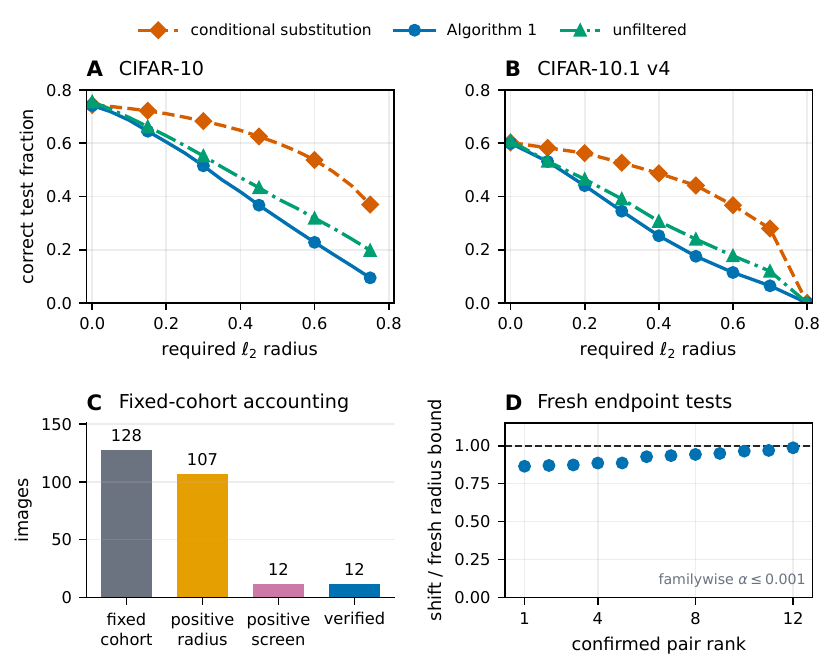}
  \else
  \includegraphics[width=\linewidth]{figures/auditvotes_recertification.pdf}
  \fi
  \caption{Released-model evaluation and fixed-cohort attack. Panels A and B
  use all CIFAR-10 and CIFAR-10.1 v4 images with $N_0=100$, $N=10{,}000$, and
  $\delta=0.001$. The diagnostic conditional-substitution curve shares proposal
  batches with the certified calculations. Panel C includes all $128$
  deterministically
  selected images. Panel D divides twelve independently confirmed shifts by
  familywise lower bounds on their substituted radii. Every ratio
  is below one, its endpoint labels differ, and familywise error is at most
  $0.001$.}
  \label{fig:auditvotes-recertification}
\end{figure}

At radius $0.5$, Algorithm~\ref{alg:joint-mass} certifies $32.02\%$ of
CIFAR-10 and $17.52\%$ of CIFAR-10.1 images. Unfiltered smoothing certifies
$39.17\%$ and $23.95\%$. The conditional substitution reports $60.07\%$ and
$44.14\%$. Mean retention is $49.34\%$ and $35.11\%$. A
nine-threshold study selects no confidence filtering on development data and
confirms the choice on untouched indices. Appendices~\ref{app:auditvotes}--
\ref{app:threshold-selection} give the complete calculations.

The endpoint study reports all $128$ images in a cohort selected without model
output. Fresh independent samples verify opposite population conditional
labels in twelve endpoint pairs. Every displacement is smaller than its
familywise lower radius bound, with displacement-to-bound ratios from $0.864$
to $0.986$. Nine cases occur among the $80$ correctly classified images with a
positive conditional-substitution radius.
Appendix~\ref{app:cifar101-attack} gives the search, joint inference,
and every confirmed case.

\paragraph{Interpreting the comparisons.}

Figure~\ref{fig:exact-witness}B compares two sound certificates for one
projected-band predictor under a global covariance proof. Figure
\ref{fig:auditvotes-recertification} studies a distinct confidence filter, for
which no covariance premise is asserted.

Algorithm~\ref{alg:joint-mass} reuses the conditional calculation's model
evaluations. Rejected proposals remain in the fixed denominator. Sampling to a
fixed number of acceptances requires stopping-rule-aware inference.

\paragraph{Sequential extension.}
We also test the finite-horizon result on SafetyPointGoal2
\citep{ji2023safetygymnasium}. Across three PPO-Lagrangian controllers and two
proposal settings per controller, the lower confidence bound on cost avoidance
remains above $0.5$ for every center-shift sequence with cumulative Euclidean
shift budget $B\leq0.02$. This certificate concerns avoidance of positive
native cost until goal or timeout. It does not establish task completion,
observation robustness, or forward invariance.

\section{Related work}

Gaussian smoothing certifies fixed output events under isotropic noise
\citep{lecuyer2019certified,li2019certified,cohen2019certified}.
Likelihood-ratio methods cover broader noise laws
\citep{dvijotham2020framework,yang2020allshapes}. Rejection-aware smoothing
treats rejection as an output
\citep{sheikholeslami2022rejection,daeubener2024uncertainty}. We retain that
joint-event foundation and study probabilities conditioned on acceptance.

Projection is fixed post-processing \citep{pfrommer2023projected}, while
known-domain methods certify bounded inputs
\citep{kou2022bounded,voracek2023box}. Neither divides by a center-dependent
acceptance probability. Input-dependent
smoothing changes the noise law \citep{sukenik2022inputdependent}, whereas
Double Sampling keeps its primary law fixed \citep{li2022double}.

Truncated exponential-family identities yield our occupancy calculation
\citep{nielsen2011renyi,nielsen2022truncated}. Work on bounded Gaussian
mechanisms studies related normalizers
\citep{chen2022boundedgaussian,fu2023truncatedrdp,hu2024boundedsupport}.
Action-constrained learning maps, masks, projects, or truncates actions
\citep{brahmanage2023flowpg,hung2025acceptance,stolz2024masking,
lee2025truncatedpolicy,stolz2026truncated,chzhen2026fab}.
Policy Smoothing certifies adaptive trajectories \citep{kumar2022policy},
and Adaptive Randomized Smoothing certifies multi-step computation
\citep{lyu2024adaptive}. Our trajectory result specializes fully adaptive
Gaussian composition \citep{smith2022adaptivegdp,koskela2023individual} to
correlated proposal blocks and a common causal feasibility program.

\section{Limitations and conclusion}

The single-step guarantees require unchanged retention and task rules.
Observation perturbations also require a verified map to Gaussian centers.
The learned-model studies use one checkpoint and noise scale. The CIFAR-10.2
experiment evaluates one training-selected filter. The AuditVotes $12/128$
result concerns a selected cohort, not dataset prevalence. Outward arithmetic
validates the CIFAR-10.2 count-to-radius calculations, not noise generation or
model execution \citep{voracek2023sound}. The trajectory theorem
certifies cost avoidance under proposal-center shifts, not task completion,
observation robustness, or forward invariance.

Filtering changes a Gaussian event probability into a ratio whose denominator
depends on the center. Convex retention controls this denominator strongly
enough to preserve the ordinary event comparison. General nonconvex retention
requires a verified geometric condition or joint retained-label events. The
resulting certificates leave the predictor unchanged.



\clearpage
\bibliography{references}
\bibliographystyle{iclr2027_conference}

\clearpage
\appendix
\raggedbottom
\section{Occupancy identities and global characterization}
\label{app:occupancy}

This appendix gives proofs and supporting arguments for the analytic statements
and the provenance of the finite nonconvex witness. All measures and
derivatives are intrinsic to the affine hull introduced in
Section~\ref{sec:problem}. No ambient density is used when the feasible set is
lower dimensional.

Write $\lambda_H$ for intrinsic Lebesgue measure on $H$ and
\[
 \varphi_{\sigma,a}(z)
 =(2\pi\sigma^2)^{-\dim L/2}
   \exp\!\left(-\frac{\|z-a\|^2}{2\sigma^2}\right)
\]
for the intrinsic Gaussian density.  Thus
$\occ(a)=\int_K\varphi_{\sigma,a}\,d\lambda_H$ and
$Q_a$ has density
$q_a(z)=\ind_K(z)\varphi_{\sigma,a}(z)/\occ(a)$.  Positive intrinsic measure
and strict positivity of the Gaussian density imply $\occ(a)>0$ for every
$a\in H$.  Let $m_a=\E_{Q_a}X_a$ and
$C_a=\operatorname{Cov}_{L}(X_a)$.

\begin{lemma}[Differentiation envelope]
\label{lem:occupancy-envelope}
Fix a compact set of centers $U\subset H$.  For each derivative order $k$,
every $k$th center derivative of $\varphi_{\sigma,a}(z)$, uniformly over
$a\in U$, is bounded in absolute value by an integrable polynomial times a
Gaussian density with a variance strictly larger than $\sigma^2$.
Consequently $\occ$ and $\ell=\log\occ$ are smooth on $H$, and their
derivatives may be taken under the integral sign.
\end{lemma}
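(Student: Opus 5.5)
The plan is to compute the center derivatives of the Gaussian density explicitly and dominate them uniformly on $U$. Since $\varphi_{\sigma,a}(z)$ depends on $a$ only through $z-a$, every $k$th derivative in $a$ along directions $v_1,\dots,v_k\in L$ has the form
\[
 \partial_{v_1}\cdots\partial_{v_k}\varphi_{\sigma,a}(z)=P_k(z-a;v_1,\dots,v_k)\,\varphi_{\sigma,a}(z),
\]
where $P_k$ is a polynomial of degree at most $k$ in $z-a$ with coefficients multilinear in the $v_i$ (Hermite polynomials scaled by $\sigma$). This follows by induction on $k$, since $\partial_v\varphi_{\sigma,a}=\sigma^{-2}\langle v,z-a\rangle\varphi_{\sigma,a}$ and the derivative of a polynomial in $z-a$ is again a polynomial of lower degree. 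For unit directions, $|P_k|\le c_k(1+\|z-a\|)^k$.

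Next comes the uniform envelope. Let $U$ sit inside a ball of radius $\rho$ about some fixed $a_0$, and fix $\tau>\sigma$. For $a\in U$ we have $\|z-a\|\ge\|z-a_0\|-\rho$, hence
\[
 \|z-a\|^2\ge(1-\epsilon)\|z-a_0\|^2-C_\epsilon\rho^2
\]
for any $\epsilon\in(0,1)$, which is Young's inequality. Choosing $\epsilon$ with $\sigma^2/(1-\epsilon)=\tau^2$ gives
\[
 \varphi_{\sigma,a}(z)\le c\,\varphi_{\tau,a_0}(z)
\]
uniformly over $a\in U$. The polynomial factor satisfies $(1+\|z-a\|)^k\le(1+\rho+\|z-a_0\|)^k$, again uniformly. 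The envelope is therefore a polynomial in $\|z-a_0\|$ times $\varphi_{\tau,a_0}$. This is integrable against $\lambda_H$ because all moments of a Gaussian are finite, and restricting to $K$ only shrinks the integral.

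The conclusion then follows in three steps.

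\textbf{Smoothness of $\occ$.} Dominated convergence, applied iteratively (the standard differentiation-under-the-integral theorem with a locally uniform integrable bound on the next derivative), shows that $\occ(a)=\int_K\varphi_{\sigma,a}\,d\lambda_H$ is $C^k$ for every $k$. The derivatives are the integrals of the differentiated integrand over $K$.

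\textbf{Smoothness of $\ell$.} Since $\occ(a)>0$ everywhere, by positive intrinsic measure of $K$, the function $\ell=\log\occ$ is a composition of smooth maps and hence smooth.

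\textbf{Choice of compact set.} Each center has a compact neighborhood, so local smoothness gives smoothness on all of $H$.

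The only step needing care is the uniform Gaussian domination with a strictly larger variance. I expect the Young-inequality absorption of the shift to handle it, with the tradeoff between $\epsilon$ and the constant $C_\epsilon$ being the main thing to get right. Everything else is bookkeeping.
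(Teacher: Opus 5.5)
Your proposal is correct and follows the paper's proof essentially step for step. The steps are: derivatives as a polynomial in $z-a$ times $\varphi_{\sigma,a}$; uniform domination $\varphi_{\sigma,a}\le c\,\varphi_{\tau,a_0}$ for a wider $\tau>\sigma$ on a ball containing $U$, where your Young's-inequality absorption is exactly the paper's ``completing the square''; integrability of the polynomial factor against the wider Gaussian; differentiation under the integral; and positivity of $\occ$ for the logarithm. One small tidy-up: get the shift bound from $\|z-a\|^2\ge\|z-a_0\|^2-2\rho\|z-a_0\|\ge(1-\epsilon)\|z-a_0\|^2-\rho^2/\epsilon$, which holds for all $z$, rather than by squaring $\|z-a_0\|-\rho$, which can be negative.
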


\begin{proof}
Each center derivative is a degree-$k$ polynomial in $(z-a)/\sigma^2$ times
$\varphi_{\sigma,a}(z)$.  Fix $a_\star\in H$ and $r$ with
$U\subseteq\{a:\|a-a_\star\|\le r\}$.  If $\sigma'>\sigma$, completing the
square shows that, uniformly on this ball,
\[
 \varphi_{\sigma,a}(z)
 \le c\,\varphi_{\sigma',a_\star}(z)
\]
for a finite $c=c(\sigma,\sigma',r)$.  The polynomial factor is bounded by a
polynomial in $\|z-a_\star\|+r$, and every polynomial is integrable against
the wider Gaussian.  Restricting the integral to measurable $K$ preserves
the bound.  Differentiation under the integral follows at every order.
Continuity and positivity of $\occ$ then give smoothness of its logarithm.
\end{proof}

Differentiating once and twice gives the identities
\begin{equation}
\label{eq:occupancy-derivatives}
 \nabla\ell(a)=\frac{m_a-a}{\sigma^2},
 \qquad
 \nabla^2\ell(a)=\frac{C_a}{\sigma^4}-\frac{I_L}{\sigma^2}.
\end{equation}
Indeed,
\[
 \nabla\occ(a)
 =\frac1{\sigma^2}\int_K(z-a)\varphi_{\sigma,a}(z)\,d\lambda_H(z),
\]
and division by $\occ(a)$ yields the gradient.  A second derivative gives
\[
 \frac{\nabla^2\occ(a)}{\occ(a)}
 =\frac{\E_{Q_a}[(X_a-a)(X_a-a)^\top]}{\sigma^4}
  -\frac{I_L}{\sigma^2}.
\]
Subtracting $\nabla\ell(a)\nabla\ell(a)^\top$ yields the covariance form.

For completeness, the common-support likelihood ratio is integrable and
equals, $Q_a$-almost surely,
\[
 \log\frac{q_a(z)}{q_b(z)}
 =\frac{\|z-b\|^2-\|z-a\|^2}{2\sigma^2}
   +\ell(b)-\ell(a).
\]
Taking its $Q_a$ expectation, expanding the two squares, and using the first
identity in \eqref{eq:occupancy-derivatives} proves
\eqref{eq:occupancy-kl}.

For $\alpha\in(0,\infty)\setminus\{1\}$, the same common support gives
\begin{align*}
 \int_K q_a(z)^\alpha q_b(z)^{1-\alpha}\,d\lambda_H(z)
 &=\frac{1}{\occ(a)^\alpha\occ(b)^{1-\alpha}}
   \int_K\varphi_{\sigma,a}(z)^\alpha
             \varphi_{\sigma,b}(z)^{1-\alpha}\,d\lambda_H(z).
\end{align*}
The square-completion identity
\[
 \alpha\|z-a\|^2+(1-\alpha)\|z-b\|^2
 =\|z-(\alpha a+(1-\alpha)b)\|^2
   +\alpha(1-\alpha)\|a-b\|^2
\]
holds for every real $\alpha$.  Hence the integral is finite for every finite
positive order, including $\alpha>1$, and
\begin{equation}
\label{eq:occupancy-renyi}
 D_\alpha(Q_a\|Q_b)
 =\frac{\alpha\|a-b\|^2}{2\sigma^2}
 +\frac{
   \ell(\alpha a+(1-\alpha)b)-\alpha\ell(a)-(1-\alpha)\ell(b)}
  {\alpha-1}.
\end{equation}

\begin{proof}[Proof of Theorem~\ref{thm:global-validity}]
Suppose first that $\ell$ is concave.  Its supporting-hyperplane inequality
makes the correction in \eqref{eq:occupancy-kl} nonpositive, proving the KL
bound.  If $0<\alpha<1$, concavity makes the numerator in
\eqref{eq:occupancy-renyi} nonnegative while $\alpha-1<0$.  If $\alpha>1$,
put $c=\alpha a+(1-\alpha)b$ and observe that
\[
 a=\frac1\alpha c+\left(1-\frac1\alpha\right)b.
\]
Concavity now makes the same numerator nonpositive while $\alpha-1>0$.
Thus the R\'enyi correction is nonpositive at every finite positive order.

Conversely, the KL bound and \eqref{eq:occupancy-kl} imply
\[
 \ell(b)\le \ell(a)+\langle\nabla\ell(a),b-a\rangle
 \qquad\text{for every }a,b\in H.
\]
This is the first-order characterization of concavity for the smooth
function $\ell$.  Finally, the all-order statement contains its declared
order-one KL case, so it implies the KL statement.  This proves all three
equivalences.
\end{proof}

\begin{proof}[Proof of Corollary~\ref{cor:convex-event}]
Use orthonormal coordinates on $H$. On the convex set $K$, the density of
$Q_c$ is proportional to $\exp(-F_c)$, where
$F_c(z)=\|z-c\|^2/(2\sigma^2)$. Both $F_a$ and $F_b$ are
$\sigma^{-2}$-strongly convex. Their difference is affine with Lipschitz
constant $\|a-b\|/\sigma^2$. The strongly log-concave tradeoff comparison of
\citet[Theorem~13]{gopi2022private} therefore dominates the tradeoff curve of
two unit Gaussians separated by
$(\|a-b\|/\sigma^2)/\sqrt{\sigma^{-2}}=\|a-b\|/\sigma$.
Applying its lower bound to $E$ gives the left inequality. Applying the same
bound to $E^c$ gives the right inequality. The usual top-versus-runner
argument then gives $r_{\mathrm{cond}}$ and its simultaneous-bound version.
The zero-dimensional case is immediate.
\end{proof}

\begin{proof}[Proof of Proposition~\ref{prop:event-criterion}]
On the common support $K$,
\[
 \frac{dQ_b}{dQ_a}(z)=C\exp\!\left(\rho X(z)\right)
\]
for a positive constant $C$. The likelihood ratio is therefore increasing in
$X$. The projection $X$ has a continuous law under $Q_a$ because $K$ has
positive intrinsic measure. For a prescribed value $p\in(0,1)$, the
Neyman--Pearson lemma shows that the event with $Q_a$ mass $p$ and smallest
$Q_b$ mass is $\{X\leq F_a^{-1}(p)\}$. Thus the lower Gaussian event
comparison holds for every event exactly when
\[
 F_b(F_a^{-1}(p))
 \geq \Phi\!\left(\Phi^{-1}(p)-\rho\right)
 \qquad\text{for every }p\in(0,1).
\]
This is equivalent to \eqref{eq:event-probit-criterion}. Applying the lower
comparison to complements gives the upper comparison. Mutual absolute
continuity supplies the endpoint cases.
\end{proof}

\section{Integrated and local covariance identities}
\label{app:local}

\begin{proof}[Proof of Proposition~\ref{prop:local-diagnostic}]
First write $d=b-a$. The second-order integral remainder gives
\[
 \ell(b)-\ell(a)-\langle\nabla\ell(a),d\rangle
 =\int_0^1(1-t)d^\top\nabla^2\ell(a+td)d\,dt.
\]
Combining this identity with \eqref{eq:occupancy-kl} and
\eqref{eq:occupancy-derivatives} yields
\[
 \KL(Q_a\|Q_b)
 =\int_0^1(1-t)d^\top
 \left(\frac{I_L}{\sigma^2}+\nabla^2\ell(a+td)\right)d\,dt
 =\frac{1}{\sigma^4}\int_0^1(1-t)d^\top C_{a+td}d\,dt.
\]
The segmentwise eigenvalue bound follows immediately because
$\int_0^1(1-t)\,dt=1/2$.

Fix $a\in H$, a unit $u\in L$, and set $b=a+tu$.  Lemma
\ref{lem:occupancy-envelope} permits a second-order Taylor expansion of
$\ell$ around $a$.  For $\alpha\ne1$, the third center in
\eqref{eq:occupancy-renyi} is
$\alpha a+(1-\alpha)b=a+(1-\alpha)tu$.  The constant and linear Taylor terms
cancel, while the quadratic occupancy numerator is
\[
 \frac{t^2}{2}\bigl((1-\alpha)^2-(1-\alpha)\bigr)
 \langle u,\nabla^2\ell(a)u\rangle+o(t^2)
 =\frac{\alpha(\alpha-1)t^2}{2}
 \langle u,\nabla^2\ell(a)u\rangle+o(t^2).
\]
Substitution in \eqref{eq:occupancy-renyi} gives
\[
 D_\alpha(Q_a\|Q_{a+tu})
 =\frac{\alpha t^2}{2\sigma^2}
  \left(1+\sigma^2
   \langle u,\nabla^2\ell(a)u\rangle\right)+o(t^2).
\]
For $\alpha=1$, the same expansion follows directly from
\eqref{eq:occupancy-kl}.  Equation~\eqref{eq:occupancy-derivatives} identifies
the factor in parentheses as
\[
 1+\sigma^2\langle u,\nabla^2\ell(a)u\rangle
 =\frac{\langle u,C_a u\rangle}{\sigma^2}
 =\frac{\operatorname{Var}\langle u,X_a\rangle}{\sigma^2}.
\]
Dividing proves the claimed limit.  Maximizing this Rayleigh quotient over
unit vectors gives
$\lambda_{\max}(C_a)/\sigma^2$.  If that value is strictly larger than one,
the expansion and the definition of $o(t^2)$ make the corresponding
divergence strictly larger than its Gaussian comparator for every
sufficiently small nonzero $t$.
\end{proof}

\section{Joint-mass proofs and geometry-controlled radii}
\label{app:joint-mass}

\begin{proof}[Proof of Theorem~\ref{thm:joint-mass}]
If $L_A\leq U_B$, the stated open ball is empty. Assume $L_A>U_B$.
Let $d=\|a-b\|$. The Gaussian event comparison applied to $E_A$ gives
\[
 s_A(b)\geq
 \Phi\!\left(\Phi^{-1}(s_A(a))-d/\sigma\right)
 \geq\Phi\!\left(\Phi^{-1}(L_A)-d/\sigma\right).
\]
For every $y\neq A$, the other side of the comparison gives
\[
 s_y(b)\leq
 \Phi\!\left(\Phi^{-1}(s_y(a))+d/\sigma\right)
 \leq\Phi\!\left(\Phi^{-1}(U_B)+d/\sigma\right).
\]
The first displayed lower bound is strictly larger than the second displayed
upper bound whenever
\[
 d<\frac{\sigma}{2}
 \left[\Phi^{-1}(L_A)-\Phi^{-1}(U_B)\right].
\]
Thus $s_A(b)>s_y(b)$ for every competitor. Division by the common positive
acceptance probability $\occ(b)$ preserves these inequalities, so the
conditional classifier also selects $A$.
\end{proof}

\begin{proof}[Proof of Proposition~\ref{prop:joint-sharpness}]
First fix $0<s_B<s_A$ with $s_A+s_B\leq1$ and write
$q_A=\Phi^{-1}(s_A)$ and $q_B=\Phi^{-1}(s_B)$. At nominal center zero, let
\[
 E_A=(-\infty,\sigma q_A],\qquad
 E_B=[-\sigma q_B,\infty).
\]
The condition on the masses makes these half-lines disjoint. The interval
between them is rejected. At center $t$ their joint masses are
\[
 s_A(t)=\Phi(q_A-t/\sigma),\qquad
 s_B(t)=\Phi(q_B+t/\sigma).
\]
They are equal first at
$t=\frac{\sigma}{2}(q_A-q_B)$, exactly the population radius in
Theorem~\ref{thm:joint-mass}. This proves binary sharpness from the two masses.

For the second assertion, fix $p\in(1/2,1)$. For $T>0$, retain the symmetric
tails
\[
 K_T=(-\infty,-T]\cup[T,\infty)
\]
and assign one label to each tail. At center $-d$, the conditional probability
of the left label is
\[
 F_T(d)=
 \frac{\Phi((d-T)/\sigma)}
 {\Phi((d-T)/\sigma)+\Phi(-(T+d)/\sigma)}.
\]
The function is continuous and strictly increasing from $F_T(0)=1/2$ toward
one, so there is a unique $d_T>0$ with $F_T(d_T)=p$. Put
\[
 c_*=\frac{\sigma^2}{2}\log\frac{p}{1-p}.
\]
Gaussian tail asymptotics, applied at $d=c/T$, give uniformly for $c$ in any
fixed bounded interval
\[
 \log\frac{\Phi((c/T-T)/\sigma)}{\Phi(-(T+c/T)/\sigma)}
 =\frac{2c}{\sigma^2}+o(1).
\]
For every $\eta\in(0,c_*)$, the last display is eventually smaller than
$\log\{p/(1-p)\}$ at $c=c_*-\eta$ and larger at $c=c_*+\eta$.
Monotonicity therefore gives
\[
 c_*-\eta<Td_T<c_*+\eta
\]
for all sufficiently large $T$. Hence $Td_T\to c_*$ and $d_T\to0$. Symmetry
places the decision boundary at zero. Taking $T$ large enough makes its
distance from the nominal center smaller than any prescribed
$\varepsilon$ while keeping the same conditional probability $p$.
\end{proof}

\begin{proof}[Proof of Proposition~\ref{prop:diameter-radii}]
Let $d=b-a$ and let $\psi(c)=\|c\|^2/(2\sigma^2)+\log\occ(c)$,
using coordinates on the affine hull. Log-normalizer differentiation gives
$\nabla^2\psi(c)=\operatorname{Cov}_{L}(X_c)/\sigma^4$.
Write $g(t)=\psi(a+td)$. Whenever the complete segment from $a$ to
$a+\alpha d$ lies in the stated ball,
\[
 g''(t)\leq v,\qquad v=\Lambda\|d\|^2/\sigma^2.
\]
The exponential-family identity gives, for $\alpha>1$,
\[
 D_\alpha(Q_b\|Q_a)=
 \frac{g(\alpha)-\alpha g(1)+(\alpha-1)g(0)}{\alpha-1}
 \leq\frac{\alpha v}{2}.
\]
For the inequality, $g(t)-vt^2/2$ is concave, so its value at one is at
least the interpolation of its values at zero and $\alpha$.
The limit at one is
$\KL(Q_b\|Q_a)=g(0)-g(1)+g'(1)\leq v/2$.
Thus the order-one bound requires only the segment to $b$, while order
$\alpha>1$ requires $\alpha\|d\|<R$.

For completeness, consider nominal category probabilities
$(p,q,s)$, where $s=1-p-q$ and $p>q$. Among distributions whose second
category ties or exceeds the first, the reverse R\'enyi divergence is
minimized on the tie. This follows from convexity of
$\sum_j x_j^\alpha p_j^{1-\alpha}$ and the location of its unconstrained
minimum. Setting the tied probabilities to $t$ leaves the objective
\[
 t^\alpha(p^{1-\alpha}+q^{1-\alpha})+(1-2t)^\alpha s^{1-\alpha}.
\]
Put $m=M_{1-\alpha}(p,q)$ and $S=2m+s$. The minimizer is
$(m/S,m/S,s/S)$, and the objective is $S^{1-\alpha}$.
The minimum divergence is therefore $-\log S=C_\alpha(p,q)$,
as in \citet[Lemma~1]{li2019certified}. At order one, the same calculation
uses $m=\sqrt{pq}$ and gives the reverse-KL minimum.
Boundary probabilities follow by continuity. Data processing to the selected
label, any competitor, and their complement now rules out a tie whenever
$\alpha\Lambda\|d\|^2/(2\sigma^2)<C_\alpha(p_A,p_B)$.
This proves each radius in \eqref{eq:conditional-renyi-radius}. Their
supremum certifies the union of these concentric open balls.

For $\alpha\geq1$ and $p>q>0$, differentiating $S=1-p-q+2m$ gives
\[
 \partial_p S=-1+(m/p)^\alpha<0,\qquad
 \partial_q S=-1+(m/q)^\alpha>0.
\]
Thus $C_\alpha$ increases with $p$ and decreases with $q$. The same upper
bound on the strongest competitor covers every smaller competitor. Replacing
the probabilities by their stated confidence bounds is conservative.
Every order is a deterministic function of the same bounds, so maximizing
over orders requires no additional probability statement.
\end{proof}

\paragraph{Why reverse KL improves the forward bound.}
Let $s=p+q$ and $t=(p-q)/s$. Then
\[
 C_1(p,q)=-\log(1-s+s\sqrt{1-t^2})
 \geq s(1-\sqrt{1-t^2})\geq\mathcal J(p,q).
\]
The first inequality uses $-\log(1-x)\geq x$. For the second, the derivative
of $1-\sqrt{1-t^2}$ is $t/\sqrt{1-t^2}$, which is at least
$\operatorname{arctanh}(t)$, the derivative of
$\{(1+t)\log(1+t)+(1-t)\log(1-t)\}/2$. Both vanish at zero.
Consequently the order-one radius is never smaller than $r_{\mathrm{cov}}$
under the same covariance premise. The forward cost is at most $\log2$,
which imposes the ceiling $\sigma\sqrt{2\log2/\Lambda}$.
The reverse cost has no such ceiling as $(p,q)$ approaches $(1,0)$.

The categorical bound is an existing R\'enyi smoothing result. The
conditional application here follows from the covariance identity, with the
additional spatial requirement $R/\alpha$ for a local covariance certificate.
It does not require the Gaussian event comparison for the retained law.

\paragraph{Forward-KL and bounded-support alternatives.}
The uniform covariance premise and \eqref{eq:integrated-covariance} give
\[
 \KL(Q_a\|Q_b)\leq
 \frac{\Lambda}{2\sigma^2}\|a-b\|^2
\]
whenever the segment from $a$ to $b$ stays in the stated ball. Suppose a
competitor $y$ ties or overtakes $A$ at $b$. Data processing under the
three-cell partition formed by $E_A$, $E_y$, and their complement in $K$
gives
\[
 \KL(Q_a\|Q_b)\geq\mathcal J(p_A(a),p_y(a)).
\]
Indeed, among categorical laws in which the second cell ties or overtakes the
first, forward KL from $(p,q,1-p-q)$ is minimized at
$((p+q)/2,(p+q)/2,1-p-q)$. The attained value is $\mathcal J(p,q)$. For
$p>q>0$,
\[
 \partial_p\mathcal J=\log\frac{2p}{p+q}>0,
 \qquad
 \partial_q\mathcal J=\log\frac{2q}{p+q}<0.
\]
Thus every competitor requires KL at least
$\mathcal J(p_A(a),p_B(a))$. The covariance KL upper bound stays strictly
below this value inside $r_{\mathrm{cov}}$, which rules out a tie. The same
monotonicity permits simultaneous bounds $L_A$ and
$\widetilde U_B=\min\{U_B,1-L_A\}$ in the displayed radius. The second term is
also an upper bound because the conditional label probabilities sum to one.
If $p_B=0$, every competitor is null and Gaussian mutual absolute
continuity makes the decision constant over all centers.

If $K$ has diameter $D$, every scalar projection of a random variable supported
on $K$ has range at most $D$. Popoviciu's inequality therefore gives
\[
 \lambda_{\max}(C_c)\leq D^2/4.
\]
Taking $\Lambda=D^2/(4\sigma^2)$ gives the first of the two radii
\begin{equation}
\label{eq:diameter-radii}
 r_{\mathrm{KL}}=\frac{2\sigma^2}{D}\sqrt{2\mathcal J(p_A,p_B)},
 \qquad r_{\mathrm{odds}}=\frac{\sigma^2}{D}\log\frac{p_A}{p_B}.
\end{equation}

For the remaining radius, the likelihood ratio $dQ_b/dQ_a$ has logarithmic
oscillation at most $D\|b-a\|/\sigma^2$ over $K$. Hence
\[
 \frac{Q_b(E_A)}{Q_b(E_B)}
 \geq \exp\!\left(-\frac{D\|b-a\|}{\sigma^2}\right)
       \frac{Q_a(E_A)}{Q_a(E_B)}.
\]
The right side exceeds one inside $r_{\mathrm{odds}}$. If $p_B=0$, Gaussian
mutual absolute continuity makes the competitor null at every center, giving
the stated infinite value.

\paragraph{Finite-sample coverage.}
In Algorithm~\ref{alg:joint-mass}, the selection batch is independent of the
estimation batch. Conditional on the selected label, its lower tail and the
selected-or-rejected lower tail each fail with probability at most
$\delta/3$. The union of the $|\mathcal Y|-1$ explicit competitor upper-tail
failures has probability at most $\delta/3$. When none fail, both runner upper
bounds in Algorithm~\ref{alg:joint-mass} are valid, so their minimum is valid
and Theorem~\ref{thm:joint-mass} applies. If one batch is used for selection and
estimation, one may form all label lower bounds, all label-or-rejection lower
bounds, and all label upper bounds before selection. Assigning total error
$\delta/3$ to each family gives simultaneous coverage.

For the covariance-controlled radius, condition on a positive retained count.
The retained label counts are multinomial with probabilities $(p_y(a))_y$.
An independent selection batch or simultaneous conditional bounds therefore
provides $L_A$ and $U_B$ with the stated coverage. The covariance upper bound
needs its own simultaneous guarantee over the complete center ball. A sample
covariance at the anchor does not supply that guarantee.

This statement treats binomial endpoints and Gaussian quantiles as real
numbers. The empirical implementation uses SciPy to evaluate them. It therefore
does not by itself address adversarial floating-point execution. A deployment
requiring a machine-arithmetic guarantee must use validated probability and
quantile routines, as in sound floating-point smoothing
\citep{voracek2023sound}. The compact witness in
Appendix~\ref{app:lshape-witness} uses separate outward rational intervals.

\section{Outward-rational certificate for the compact witness}
\label{app:lshape-witness}

We prove Proposition~\ref{prop:lshape-witness} without treating floating-point
quadrature as exact.  Write $\phi$ and $\Phi$ for the standard-normal density
and CDF.  For a rectangle
$R=\prod_{k=1}^d[r_k^-,r_k^+]$, center $c$, and scale $\sigma$, define
\[
 \alpha_k=(r_k^- -c_k)/\sigma,
 \qquad \beta_k=(r_k^+ -c_k)/\sigma,
 \qquad p_k=\Phi(\beta_k)-\Phi(\alpha_k).
\]
Gaussian factorization gives $G_c(R)=\prod_k p_k$.  Because the two rectangles
in the witness have disjoint interiors, their masses add exactly.  The same
factorization gives the unnormalized first-coordinate moment
\[
 \left[c_1p_1+\sigma\{\phi(\alpha_1)-\phi(\beta_1)\}\right]
 \prod_{k\ne1}p_k.
\]
These formulas yield both occupancies and the conditional mean used in the KL
identity~\eqref{eq:occupancy-kl}.

The corresponding unnormalized second moment of the first coordinate is
\[
 \left[c_1^2p_1+2c_1\sigma\{\phi(\alpha_1)-\phi(\beta_1)\}
 +\sigma^2\{p_1+\alpha_1\phi(\alpha_1)-\beta_1\phi(\beta_1)\}\right]
 \prod_{k\ne1}p_k.
\]
Adding this quantity over the two rectangles, dividing by occupancy, and
subtracting the squared conditional mean gives the local variance ratio reported
in the certified enclosure table below.

For $b=a+de_1$ with $d>0$, the common-support likelihood ratio obeys
\[
 \log\frac{q_a(z)}{q_b(z)}
 =\frac{d}{\sigma^2}(x_*-z_1),
 \qquad
 x_* = \frac{a_1+b_1}{2}
       +\frac{\sigma^2}{d}\log\frac{\occ(b)}{\occ(a)}.
\]
Thus $q_a>q_b$ precisely to the left of $x_*$, up to a null hyperplane, and
\[
 \TV(Q_a,Q_b)
 =Q_a\{z_1\le x_*\}-Q_b\{z_1\le x_*\}.
\]

All primitive evaluations are enclosed by rational intervals.  We use the
alternating series for $e^{-y}$ and
\[
 \int_0^t e^{-x^2/2}\,dx
 =\sum_{n\ge0}\frac{(-1)^nt^{2n+1}}{2^n n!(2n+1)},
\]
after their terms decrease. Adjacent partial sums enclose the value.  For
$x>0$, we enclose
\[
 \log x=2\sum_{k\ge0}\frac{z^{2k+1}}{2k+1},
 \qquad z=\frac{x-1}{x+1},
\]
with tail at most
$2|z|^{2N+3}/((2N+3)(1-z^2))$.  Machin's identity and alternating arctangent
series enclose $\pi$.  Closed rational interval arithmetic then propagates
these bounds through rectangle masses, normalization, logarithms, and event
differences. Every divisor interval has a strictly positive lower endpoint.

At the rational data in Proposition~\ref{prop:lshape-witness}, the resulting
outward enclosures are
\begin{center}
\begin{tabular}{@{}lr@{}}
\toprule
Quantity & certified enclosure \\
\midrule
$\KL(Q_a\|Q_b)$ & $[0.002423524,\,0.002423525]$ \\
Gaussian KL comparator & $1/450=0.002222\ldots$ \\
$\operatorname{Var}_{Q_a}(Z_1)/\sigma^2$
  & $[1.091538667648430022208362,\,1.091538667648430022208363]$ \\
$\TV(Q_a,Q_b)$ & $[0.028641422,\,0.028641423]$ \\
Gaussian TV comparator & $[0.026591227,\,0.026591228]$ \\
$Q_a(E)$ & $[0.527476587,\,0.527476588]$ \\
$Q_b(E)$ & $[0.498885133,\,0.498885134]$ \\
$Q_a(E)-\Phi(1/15)$ & $[0.000900122,\,0.000900123]$ \\
$x_*$ & $[0.623175082,\,0.623175083]$ \\
\bottomrule
\end{tabular}
\end{center}
The KL excess is therefore greater than $0.0002013>1/5000$, and the TV excess
is greater than $0.0020501>1/500$. The event bounds give opposite strict binary
labels at $a$ and $b$.  Finally,
$Q_a(E)>\Phi(1/15)$ implies
\[
 \sigma\Phi^{-1}(Q_a(E))>
 \frac{3}{20}\frac1{15}=\frac1{100}=\|a-b\|,
\]
so the changed label is strictly inside the imported ambient-Gaussian radius.
This proves every assertion in Proposition~\ref{prop:lshape-witness}.

The retained verifier evaluates the same rational series and records every
remainder and propagated interval. A focused test checks the printed strict
inequalities and fails if an endpoint is interchanged. Figure
\ref{fig:exact-witness-support} shows the connected support and the certified
displacement.

\begin{figure}[H]
  \centering
  \includegraphics[width=0.88\linewidth]{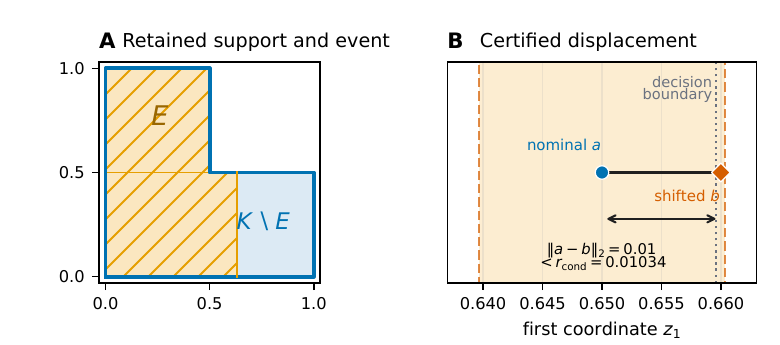}
  \caption{Compact nonconvex witness. Panel A shows the connected L-shaped
  support, the event $E$, and its complement $K\setminus E$. Panel B shows the
  tested displacement, the conditional decision boundary, and the substituted
  radius. The outward-certified bounds above establish the label change.}
  \label{fig:exact-witness-support}
\end{figure}

\section{A covariance-controlled nonconvex family}
\label{app:two-band}

Let $L=L_0\oplus L_1$ be an orthogonal decomposition and let
$K=L_0\times S$, where $S\subseteq L_1$ has positive measure and diameter
at most $2\sigma$. The conditioned Gaussian factors into an unrestricted
Gaussian on $L_0$ and a Gaussian conditioned on $S$ in $L_1$. For every center
$c$, the covariance has blocks $\sigma^2 I_{L_0}$ and
$\operatorname{Cov}(Y_c\mid Y_c\in S)$. Every unit projection of the second
block has range at most $\operatorname{diam}(S)$, so Popoviciu's inequality
gives $\operatorname{Cov}(Q_c)\preceq\sigma^2 I_L$ for every $c$. Thus the
covariance radius in Proposition~\ref{prop:diameter-radii} applies globally
with $\Lambda=1$ even for a disconnected $S$. The result does not assert the
Gaussian event comparison.

This construction gives a proof-carrying learned filter. Fix a unit vector
$u$, an offset $b$, and $0<\alpha<\beta\leq\sigma$ after training, and set
\[
 R(x)=\ind\{\alpha\leq |u^\top x+b|\leq\beta\}.
\]
For $X_c\sim\Normal(c,\sigma^2I)$, the component orthogonal to $u$ is
independent of $u^\top X_c+b$ and is unchanged by retention. The retained
scalar lies in $[-\beta,\beta]$. Popoviciu's inequality therefore gives
\[
 \operatorname{Cov}(X_c\mid R(X_c)=1)
 =\sigma^2(I-uu^\top)
  +\operatorname{Var}(u^\top X_c\mid R(X_c)=1)uu^\top
 \preceq\sigma^2I
\]
for every center $c$. The direction and offset may be learned on separate
data because the proof uses only their fixed values and $\|u\|=1$.

The same statement holds for an orthonormal matrix
$U=[u_1,\ldots,u_k]$ and the product rule
$\prod_j\ind\{\alpha_j\leq|u_j^\top x+b_j|\leq\beta_j\}$ with
$\beta_j\leq\sigma$. The Gaussian coordinates in the columns of $U$ are
independent, the retained event factorizes, and every retained coordinate has
variance at most $\beta_j^2$. The orthogonal complement remains unchanged.

Let $H=\mathbb R^2$, $\sigma=1$, and
\[
 K=\mathbb R\times\bigl([-1,-0.9]\cup[0.9,1]\bigr).
\]
The two horizontal bands receive different labels. For
$a_\delta=(0,-\delta)$, reflection gives the lower-band and upper-band joint
masses
\[
 s_-(\delta)=\Phi(1-\delta)-\Phi(0.9-\delta),
 \qquad
 s_+(\delta)=\Phi(1+\delta)-\Phi(0.9+\delta).
\]
The lower band is selected for $\delta>0$. Reflection makes the masses equal
when the second center coordinate is zero. Pointwise Gaussian likelihood
ordering makes their order strict on either side. The nearest decision
boundary to $a_\delta$ is therefore exactly $\delta$ away.

Conditioning does not affect the first coordinate, so
\[
 \operatorname{Cov}(Q_c)
 =\operatorname{diag}\!\left(1,
   \operatorname{Var}(Y\mid Y\in[-1,-0.9]\cup[0.9,1])\right).
\]
The retained second coordinate lies in $[-1,1]$. Popoviciu's inequality gives
variance at most one for every center. Hence
$\lambda_{\max}(\operatorname{Cov}(Q_c))\leq1$ globally. Proposition
\ref{prop:diameter-radii} applies with $R=\infty$ and $\Lambda=1$. Notice that
$K$ has infinite diameter, so the two diameter-based formulas in
\eqref{eq:diameter-radii} are unavailable.

This numerical witness uses the forward-KL alternative proved in
Appendix~\ref{app:joint-mass}. Writing $p_-=s_-/(s_-+s_+)$ and $p_+=1-p_-$,
the compared radii are
\[
 r_{\mathrm{cond}}=\Phi^{-1}(p_-),\qquad
 r_{\mathrm{mass}}=\frac12\left[\Phi^{-1}(s_-)-\Phi^{-1}(s_+)\right],
 \qquad
 r_{\mathrm{cov}}=\sqrt{2\mathcal J(p_-,p_+)}.
\]
At $\delta=0.1$, exact rational interval propagation gives
\begin{center}
\begin{tabular}{@{}lr@{}}
\toprule
Quantity & Certified enclosure \\
\midrule
$G_{a_\delta}(K)$ & $[0.05078446622171157326,\ 0.05078446622171157327]$ \\
$p_-(a_\delta)$ & $[0.54731840865059743984,\ 0.54731840865059743985]$ \\
$r_{\mathrm{mass}}$ & $[0.04067994841241465132,\ 0.04067994841241465133]$ \\
$r_{\mathrm{cov}}$ & $[0.09470767664273030523,\ 0.09470767664273030524]$ \\
$r_{\mathrm{cond}}$ & $[0.11888914386462333130,\ 0.11888914386462333131]$ \\
\bottomrule
\end{tabular}
\end{center}
The R\'enyi certificate is at least as large as $r_{\mathrm{cov}}$, but it
is not the quantity enclosed in this table. The strict ordering printed in
the main text follows. Figure
\ref{fig:two-band-covariance-radius} traces the four quantities. The same
covariance bound also gives the global KL and finite-order R\'enyi rates through
Theorem~\ref{thm:global-validity}. The substituted event radius nevertheless
crosses the exact boundary. Thus Gaussian divergence contraction does not
imply the Gaussian event comparison.

\begin{figure}[H]
  \centering
  \includegraphics[width=.82\linewidth]{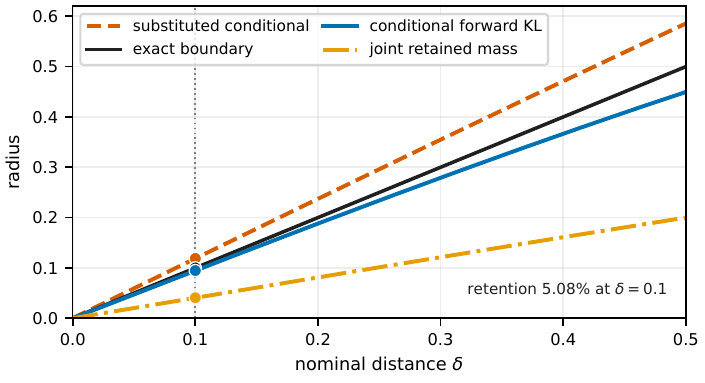}
  \caption{End-to-end radii for the nonconvex two-band family. The curves use
  the analytic Gaussian CDF formulas above. The points at $\delta=0.1$ use the
  outward-certified calculation. The forward-KL radius remains
  close to the exact boundary and is more than twice the joint-mass radius at
  the certified point. The substituted conditional radius crosses the
  boundary.}
  \label{fig:two-band-covariance-radius}
\end{figure}

The verifier uses exact fractions, alternating Gaussian-integral brackets, an
atanh logarithm bound, and outward square roots. It records all interval
endpoints and strict checks in the supplement. The displayed sweep uses
standard floating-point Gaussian CDF evaluations and is illustrative. The
certified point does not depend on those evaluations.

\section{Alternative feasible kernels}
\label{app:alternative-kernels}

A common repair channel and a fixed-weight mixture change the sampling kernel
rather than recertifying whole-set conditioning. They remain useful when the
application permits that change.

\begin{proposition}[Fixed randomized repair]
\label{prop:fixed-randomized-repair}
Let $U\sim\rho$ be independent of the center and let
$T:H\times\mathcal U\to K$ be one measurable map used unchanged at both
centers. Then $P_a=T_{\#}(G_a\otimes\rho)$ satisfies the Gaussian event and
total-variation comparisons and every finite-order Gaussian divergence bound.
This includes a fixed categorical component followed by a component-specific
projection.
\end{proposition}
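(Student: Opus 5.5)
The plan is to view $P_a$ as the image of the Gaussian $G_a$ under one fixed Markov kernel, and then use that every comparison in the statement is monotone under post-processing. Define the kernel $\kappa(z,\cdot)=\rho\{u:T(z,u)\in\cdot\}$ on $K$. Measurability of $T$ and Fubini show that $\kappa$ is a Markov kernel that does not depend on the center. Independence of $U$ from the center then gives $P_a=G_a\kappa$ and $P_b=G_b\kappa$ with the same $\kappa$.

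For the event comparison, fix a measurable $E\subseteq K$. Put $h(z)=\kappa(z,E)\in[0,1]$, so that $P_c(E)=\E_{G_c}h$. The Gaussian event comparison in Section~\ref{sec:problem} is stated for indicator events. I will extend it to $[0,1]$-valued test functions by randomization. Introduce an independent uniform $V$ on $[0,1]$ and the event $\tilde E=\{(z,v):v\le h(z)\}$ in $H\times[0,1]$. Then $G_c\otimes\mathrm{Unif}(\tilde E)=P_c(E)$. The product measures $G_a\otimes\mathrm{Unif}$ and $G_b\otimes\mathrm{Unif}$ are compared through the Gaussian coordinate only. The Neyman--Pearson reduction used in Proposition~\ref{prop:event-criterion} (with $K=H$) shows that the extremal test is a halfspace in $z$. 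Equivalently, the optimal tradeoff curve of the pair is that of two unit Gaussians separated by $\|a-b\|/\sigma$, and randomized tests do no better than deterministic ones because the likelihood ratio is continuous. Hence $\Phi(\Phi^{-1}(P_a(E))-\|a-b\|/\sigma)\le P_b(E)$. Applying the same bound to $K\setminus E$ gives the upper inequality.

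For total variation and divergences, I will invoke the data-processing inequality for $f$-divergences and for R\'enyi divergences of every finite positive order under the kernel $\kappa$. This gives $\TV(P_a,P_b)\le\TV(G_a,G_b)=2\Phi(\|a-b\|/(2\sigma))-1$ and $D_\alpha(P_a\|P_b)\le D_\alpha(G_a\|G_b)=\alpha\|a-b\|^2/(2\sigma^2)$. The closed forms are standard Gaussian computations, which also appear as the case $K=H$ of \eqref{eq:occupancy-renyi}, where the occupancy correction vanishes. For the final sentence, take $\mathcal U=\mathcal I\times\mathcal U'$ with a categorical component $I\sim\pi$. Set $T(z,(i,u'))=T_i(z,u')$, where each $T_i$ is a measurable projection or repair into $K$. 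This is a single measurable $T$ with center-independent $\rho$, so it is covered by the general statement.

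The main obstacle is the event comparison step: the Gaussian comparison is stated only for sets, so it must be lifted to fractional tests $h$. I will first try the product-space randomization above. Its validity needs only that the Gaussian comparison holds on $H\times[0,1]$ for the product with a common uniform factor, which again follows from the Neyman--Pearson halfspace argument. The remaining points are measurability of $\kappa$ and of $\tilde E$, and the endpoint cases $P_a(E)\in\{0,1\}$. The latter follow from mutual absolute continuity, as in Proposition~\ref{prop:event-criterion}.
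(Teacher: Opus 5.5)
Your proof is correct. For total variation and the R\'enyi bounds it matches the paper up to packaging. The paper passes to $M_a=G_a\otimes\rho$, notes that $dM_b/dM_a$ depends only on the Gaussian coordinate so that $D_\alpha(M_a\|M_b)=D_\alpha(G_a\|G_b)$, and then applies data processing through the deterministic map $T$. You apply data processing once through the kernel $\kappa$.

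The event comparison is where you genuinely take a different route. The paper fixes $u$ and applies the ordinary Gaussian comparison to each section $A_u=\{z:T(z,u)\in E\}$. It then averages over $u$ using convexity of $h_r(p)=\Phi(\Phi^{-1}(p)-r)$ and Jensen's inequality. This invokes the stated set-level comparison only as a black box, and it reuses the convexity fact already established for the fixed-weight mixture theorem.

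You instead rederive the tradeoff curve of the product pair $G_a\otimes\mathrm{Unif}$, $G_b\otimes\mathrm{Unif}$ by Neyman--Pearson. This rests on the same observation the paper uses for divergences: the likelihood ratio ignores the auxiliary coordinate. Your route therefore treats all three comparisons uniformly and needs no convexity lemma. The cost is reopening Neyman--Pearson on a product space rather than citing the comparison.

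Your uniform coordinate is unnecessary. The set $T^{-1}(E)\subseteq H\times\mathcal U$ already has $(G_a\otimes\rho)$-mass $P_a(E)$ and $(G_b\otimes\rho)$-mass $P_b(E)$. The same Neyman--Pearson argument applies to it directly, with no need to pass through $h$.

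Your phrase ``the likelihood ratio is continuous'' should be stated more precisely. The relevant fact is that $\langle b-a,z\rangle$ has an atomless law under $G_a$. Because of this, deterministic lower level sets attain every prescribed mass, so randomized tests give nothing extra.
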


Suppose $K=\bigcup_{j=1}^m K_j$, where the $K_j$ are convex subsets of one
affine space with positive intrinsic measure and every distinct pair has
zero-intrinsic-measure intersection. All component-conditioned laws
$Q_{a,j}=G_a(\cdot\mid K_j)$ use scale $\sigma$. For a center-independent
probability vector $w$, define $P_a^w=\sum_jw_jQ_{a,j}$.

\begin{theorem}[Fixed-weight comparison]
\label{thm:mode-stable}
For every finite $\alpha>0$,
\[
 D_\alpha(P_a^w\|P_b^w)\leq
 \frac{\alpha\|a-b\|^2}{2\sigma^2},\qquad D_1=\KL.
\]
The family also satisfies the Gaussian event comparison and
\[
 \TV(P_a^w,P_b^w)
 \leq2\Phi\!\left(\frac{\|a-b\|}{2\sigma}\right)-1.
\]
\end{theorem}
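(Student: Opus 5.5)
The plan is to treat $P_a^w$ as a joint law on labelled pairs and then apply data processing. On the product space $\{1,\ldots,m\}\times H$, define $\widetilde P_a=\sum_j w_j\,\delta_j\otimes Q_{a,j}$. The weights $w$ do not depend on the center, and the map $(j,z)\mapsto z$ is fixed. Hence $P_a^w$ is the pushforward of $\widetilde P_a$ under one measurable map used unchanged at both centers. Every $f$-divergence, and every R\'enyi divergence and tradeoff curve, can only contract under this map. So it suffices to prove each comparison for $\widetilde P_a$ versus $\widetilde P_b$.

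For the divergence bounds, I would use the mixture structure directly. Both joint laws put the same mass $w_j$ on each label $j$. Conditioned on $j$, they are $Q_{a,j}$ and $Q_{b,j}$. For R\'enyi divergence with a common first marginal, $\exp((\alpha-1)D_\alpha(\widetilde P_a\|\widetilde P_b))=\sum_j w_j\exp((\alpha-1)D_\alpha(Q_{a,j}\|Q_{b,j}))$. For $\alpha=1$, the chain rule gives $\KL(\widetilde P_a\|\widetilde P_b)=\sum_j w_j\KL(Q_{a,j}\|Q_{b,j})$. Each $K_j$ is convex with positive intrinsic measure. By Pr\'ekopa, each $\ell_j=\log G_\cdot(K_j)$ is concave, so Theorem~\ref{thm:global-validity}, applied to $K_j$ in its own affine hull $H$, gives $D_\alpha(Q_{a,j}\|Q_{b,j})\le\alpha\|a-b\|^2/(2\sigma^2)$ for every finite $\alpha>0$. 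In both the sign cases $\alpha<1$ and $\alpha>1$, the weighted exponential average then stays below the common bound. This proves the first display. The zero-measure intersections are needed only to make $P_a^w$ well defined as stated; they play no role in the inequalities.

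For the event comparison, I would work with tradeoff functions. Corollary~\ref{cor:convex-event}, via the Gopi et al.\ strongly log-concave comparison, gives that for each $j$ the pair $(Q_{a,j},Q_{b,j})$ is at least as hard to distinguish as $\Normal(0,1)$ versus $\Normal(\rho,1)$, where $\rho=\|a-b\|/\sigma$. In tradeoff form, $T(Q_{a,j},Q_{b,j})\ge G_\rho$ with $G_\rho(x)=\Phi(\Phi^{-1}(1-x)-\rho)$. Now take a test $\phi(j,z)$ on the product space with type-I error $\sum_j w_j x_j=x$. Its type-II error is at least $\sum_j w_j G_\rho(x_j)$. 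Since $G_\rho$ is convex, Jensen gives $\sum_j w_jG_\rho(x_j)\ge G_\rho(x)$. So $\widetilde P_a,\widetilde P_b$ is $G_\rho$-dominated, and so is $P_a^w,P_b^w$ after post-processing. Applying this to an event $E$ and to its complement yields the two-sided event comparison, exactly as in the proof of Corollary~\ref{cor:convex-event}.

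The TV bound then follows from the event comparison. It equals $\sup_E(P_a^w(E)-P_b^w(E))\le\sup_p\{p-\Phi(\Phi^{-1}(p)-\rho)\}$, and this supremum is attained at $\Phi^{-1}(p)=\rho/2$, giving $2\Phi(\rho/2)-1$.

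The main obstacle is the mixture step for events. Event-wise inequalities of the form $G_b(E)\ge\Phi(\Phi^{-1}(G_a(E))-\rho)$ do not average directly. The fix is to lift to the labelled product space and use convexity of the Gaussian tradeoff curve, which is what makes the fixed weights harmless. A second point to check is that Corollary~\ref{cor:convex-event} really delivers the full tradeoff domination, including randomized tests, not only deterministic events. I would verify this from Gopi et al.'s Theorem~13, which is stated for tradeoff functions.
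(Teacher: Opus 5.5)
Your proposal is correct and follows essentially the paper's route: tagged laws with the R\'enyi mixture identity and the KL chain rule, data processing when the tag is forgotten, convexity of $p\mapsto\Phi(\Phi^{-1}(p)-\rho)$ with Jensen for the event comparison, and maximization at the midpoint for total variation. The only difference is cosmetic. The paper averages the componentwise inequalities for the common event $E$ directly, so they \emph{do} average, by exactly the convexity you invoke. That makes your product-space lift and your concern about randomized tests unnecessary.
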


For a bounded common-affine partition and $\tau\in[0,1]$, let
$\pi_j(a)=G_a(K_j)/G_a(K)$, fix anchor $a_0$, and define
\[
 w_j^{(\tau,a_0)}(a)=
 \frac{\pi_j(a_0)^{1-\tau}\pi_j(a)^\tau}
 {\sum_k\pi_k(a_0)^{1-\tau}\pi_k(a)^\tau},
 \qquad
 \nu_a^{(\tau,a_0)}=\sum_jw_j^{(\tau,a_0)}(a)Q_{a,j}.
\]

\begin{proposition}[Anchored comparison]
\label{prop:anchored-comparison}
If $D_K=\sup_{x,y\in K}\|x-y\|$, then
\[
 \KL\!\left(\nu_b^{(\tau,a_0)}\middle\|\nu_c^{(\tau,a_0)}\right)
 \leq\left(\frac{1}{2\sigma^2}+\frac{\tau^2D_K^2}{8\sigma^4}\right)
 \|b-c\|^2.
\]
For $p_y^\nu(a)=\nu_a^{(\tau,a_0)}(f^{-1}(y))$, write the coefficient as
$C_\tau$. A top label $A$ and runner-up $B$ at $a$ have the valid radius
\[
 r_{\mathrm{Pin}}^\nu(a,\tau,a_0)
 =\frac{p_A^\nu(a)-p_B^\nu(a)}{\sqrt{2C_\tau}}.
\]
\end{proposition}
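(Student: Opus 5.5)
The plan is to treat $\nu_a^{(\tau,a_0)}$ as the $X$-marginal of a joint law on pairs $(J,X)$. Under this joint law, $J\sim w^{(\tau,a_0)}(a)$ and $X\mid J=j\sim Q_{a,j}$. Data processing to the $X$-coordinate and the chain rule for KL give
\[
 \KL\!\left(\nu_b\middle\|\nu_c\right)\le \KL\!\left(w(b)\middle\|w(c)\right)+\sum_j w_j(b)\,\KL(Q_{b,j}\|Q_{c,j}).
\]
The two terms are bounded separately, and they produce the two summands of the coefficient.

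\textbf{Component term.} Each $K_j$ is convex with positive intrinsic measure. By Pr\'ekopa, its log occupancy $\ell_j(a)=\log G_a(K_j)$ is concave. Theorem~\ref{thm:global-validity} then gives $\KL(Q_{b,j}\|Q_{c,j})\le\|b-c\|^2/(2\sigma^2)$ for each $j$. The weights sum to one, so the mixture-weighted sum obeys the same bound. This yields the $1/(2\sigma^2)$ summand.

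\textbf{Weight term (the main obstacle).} The common denominator $G_a(K)$ cancels in the softmax. Hence $w(a)$ is a categorical exponential family with natural parameter $\theta_j(a)=(1-\tau)\log\pi_j(a_0)+\tau\ell_j(a)$. Write $A(\theta)=\log\sum_j e^{\theta_j}$ for its log-partition. Then $\KL(w(b)\|w(c))$ is the Bregman divergence of $A$ between $\theta(c)$ and $\theta(b)$. The integral remainder writes it as
\[
 \int_0^1(1-s)\operatorname{Var}_{j\sim w_s}\!\left(\theta_j(c)-\theta_j(b)\right)ds,
\]
where $w_s$ is the softmax at $\theta(b)+s(\theta(c)-\theta(b))$. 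Popoviciu bounds each variance by $W^2/4$, where $W$ is the range over $j$ of $\theta_j(c)-\theta_j(b)$. This gives $\KL\le W^2/8$.

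To control $W$, I use the gradient identity \eqref{eq:occupancy-derivatives} applied to each $K_j$: $\nabla\ell_j(x)=(m_{x,j}-x)/\sigma^2$, where $m_{x,j}$ is the mean of $Q_{x,j}$. Integrating along the segment from $b$ to $c$ gives
\[
 \theta_j(c)-\theta_j(b)-\theta_k(c)+\theta_k(b)=\frac{\tau}{\sigma^2}\int_0^1\langle m_{x_t,j}-m_{x_t,k},\,c-b\rangle\,dt .
\]
The conditional means lie in the closed convex hull of $K$, whose diameter is $D_K$. Hence the integrand is at most $D_K\|b-c\|$ in absolute value, so $W\le\tau D_K\|b-c\|/\sigma^2$. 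Substituting gives the second summand $\tau^2D_K^2\|b-c\|^2/(8\sigma^4)$. Lemma~\ref{lem:occupancy-envelope} justifies the differentiation. Boundedness of $K$ (the stated bounded partition) makes $D_K$ finite.

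\textbf{Radius.} Let $b$ be a center with $\|b-a\|<r^\nu_{\mathrm{Pin}}$, and suppose some competitor ties or beats $A$ there. Then the total variation between $\nu_a$ and $\nu_b$ must be at least $(p^\nu_A(a)-p^\nu_B(a))/2$. This holds because $p_A$ must drop and $p_y$ must rise by a combined amount of at least $p_A-p_y\ge p_A-p_B$. Pinsker and the KL bound give $\TV\le\sqrt{C_\tau/2}\,\|a-b\|$. Combining the two inequalities forces
\[
 \|a-b\|\ge\frac{p^\nu_A-p^\nu_B}{2\sqrt{C_\tau/2}}=\frac{p^\nu_A-p^\nu_B}{\sqrt{2C_\tau}},
\]
which contradicts $\|b-a\|<r^\nu_{\mathrm{Pin}}$. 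No competitor can tie inside the open ball, which proves the stated radius.
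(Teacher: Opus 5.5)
Your proof is correct and follows the paper's route. Both arguments tag the component index, split the KL by the chain rule plus data processing, and bound the component term with the convex-component rate. Both then bound the categorical term by a range argument using the occupancy gradient identity and $\|m_{x,j}-m_{x,k}\|\le D_K$, and finish with Pinsker at the factor $\sqrt{2C_\tau}$. The only difference is cosmetic: you re-derive Hoeffding's lemma through the Bregman integral remainder and Popoviciu on tilted variances, whereas the paper writes the weight KL as $\log\E_{v_b}e^{\delta_J}-\E_{v_b}\delta_J$ and cites Hoeffding's lemma directly.
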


The endpoint $\tau=0$ has fixed weights and the Gaussian event certificate.
Positive temperatures use only the corrected bound above. The proofs are in
Appendices~\ref{app:mode-stable} and \ref{app:fixed-repair}.
Exact evaluation requires certified component occupancies, and positive
temperatures also require a certified diameter. These quantities can be
intractable for neural confidence regions. We therefore record the construction
as an alternative for applications with an explicit convex partition rather
than use it in the learned-model experiment.

\section{Fixed-weight and anchored comparisons}
\label{app:mode-stable}

We first record the two component facts used below. If $K_j$ is convex,
Theorem~\ref{thm:global-validity} and Corollary~\ref{cor:convex-event} give,
for every finite $\alpha>0$,
\begin{equation}
\label{eq:component-renyi}
 D_\alpha(Q_{a,j}\|Q_{b,j})
 \le \frac{\alpha\|a-b\|^2}{2\sigma^2}.
\end{equation}
and, for every measurable $E$,
\begin{equation}
\label{eq:component-event}
 \Phi\!\left(\Phi^{-1}(Q_{a,j}(E))-r\right)
 \le Q_{b,j}(E)\le
 \Phi\!\left(\Phi^{-1}(Q_{a,j}(E))+r\right),
 \qquad r=\frac{\|a-b\|}{\sigma}.
\end{equation}
The intrinsic scale in both statements is $r=\|a-b\|/\sigma$.

\begin{proof}[Proof of Theorem~\ref{thm:mode-stable}]
Attach the component index to a sample and define the tagged laws
\[
 \widetilde P_a(j,dz)=w_jQ_{a,j}(dz),
 \qquad
 \widetilde P_b(j,dz)=w_jQ_{b,j}(dz).
\]
For $\alpha\ne1$, let
$d_{\alpha,j}=D_\alpha(Q_{a,j}\|Q_{b,j})$.  Directly from the definition,
\[
 \exp\!\left((\alpha-1)
 D_\alpha(\widetilde P_a\|\widetilde P_b)\right)
 =\sum_jw_j\exp\!\left((\alpha-1)d_{\alpha,j}\right).
\]
Let $C_\alpha=\alpha\|a-b\|^2/(2\sigma^2)$.  Equation
\eqref{eq:component-renyi} gives $d_{\alpha,j}\le C_\alpha$.  For
$\alpha>1$, the exponential is increasing and division by $\alpha-1$ keeps
the resulting upper bound.  For $0<\alpha<1$, the exponential is decreasing,
so the sum is bounded below by
$\exp((\alpha-1)C_\alpha)$. Division by the negative $\alpha-1$ reverses the
inequality and gives the same upper bound.  At $\alpha=1$, the tagged KL chain
rule gives
\[
 \KL(\widetilde P_a\|\widetilde P_b)
 =\sum_jw_j\KL(Q_{a,j}\|Q_{b,j})
 \le \frac{\|a-b\|^2}{2\sigma^2}.
\]
Forgetting the tag is a measurable channel.  Data processing for every
positive R\'enyi order \citep{vanerven2014renyi} transfers the tagged bounds
to $P_a^w$ and $P_b^w$.

For the event statement, set
$h_r(p)=\Phi(\Phi^{-1}(p)-r)$.  For $r>0$, writing
$p=\Phi(x)$ gives
\[
 h_r'(p)=\exp(rx-r^2/2),
\]
which is increasing in $p$. Hence $h_r$ is convex.  It is the identity when
$r=0$.  Applying the lower half of \eqref{eq:component-event}, averaging, and
using Jensen's inequality gives
\[
 P_b^w(E)
 \ge\sum_jw_jh_r(Q_{a,j}(E))
 \ge h_r\!\left(\sum_jw_jQ_{a,j}(E)\right)
 =h_r(P_a^w(E)).
\]
Applying this lower bound to $E^c$ gives the upper event comparison.  Finally,
for $p=\Phi(x)$, both possible endpoint differences are maximized at the
midpoint of the two shifted Gaussian means and equal
$2\Phi(r/2)-1$.  Taking the supremum over events proves the total-variation
bound.
\end{proof}

\begin{proof}[Proof of Proposition~\ref{prop:anchored-comparison}]
Let $Z_j(x)=G_x(K_j)>0$ and let $m_{x,j}$ be the mean of $Q_{x,j}$.  From
\eqref{eq:occupancy-derivatives}, applied to component $K_j$,
\begin{equation}
\label{eq:component-log-mass-gradient}
 \nabla_x\log Z_j(x)=\frac{m_{x,j}-x}{\sigma^2}.
\end{equation}
Because the common-affine union is bounded, conditional means lie in the
closures of their convex components, and therefore
$\|m_{x,j}-m_{x,k}\|\le D_K$.

Write $v_x(j)=w_j^{(\tau,a_0)}(x)$ for the anchored categorical law.  For two
centers $b,c$, define
\[
 \delta_j=\tau\bigl(\log Z_j(c)-\log Z_j(b)\bigr).
\]
Terms common to every component cancel on normalization, so
$v_c(j)=v_b(j)e^{\delta_j}/\E_{v_b}e^{\delta_J}$.  Consequently
\begin{equation}
\label{eq:categorical-kl-mgf}
 \KL(v_b\|v_c)=\log\E_{v_b}e^{\delta_J}-\E_{v_b}\delta_J.
\end{equation}
For every $j,k$, integration of
\eqref{eq:component-log-mass-gradient} along the segment from $b$ to $c$
gives
\[
 |(\delta_j-\delta_k)|
 \le\frac{\tau D_K}{\sigma^2}\|b-c\|.
\]
Thus the range of the random variable $\delta_J$ has width at most the
right-hand side.  Hoeffding's lemma applied to
\eqref{eq:categorical-kl-mgf} yields
\begin{equation}
\label{eq:categorical-kl-bound}
 \KL(v_b\|v_c)
 \le\frac{\tau^2D_K^2}{8\sigma^4}\|b-c\|^2.
\end{equation}

Attach the component tag to $\nu_b^{(\tau,a_0)}$ and
$\nu_c^{(\tau,a_0)}$.  The tagged KL chain rule, the convex-component
order-one case of \eqref{eq:component-renyi}, and
\eqref{eq:categorical-kl-bound} give
\[
 \KL(\widetilde\nu_b\|\widetilde\nu_c)
 \le\left(\frac1{2\sigma^2}
      +\frac{\tau^2D_K^2}{8\sigma^4}\right)\|b-c\|^2.
\]
Forgetting the tag and applying KL data processing proves the asserted bound
with coefficient $C_\tau$.

Pinsker's inequality then gives, for any $a,b\in H$,
\[
 \TV(\nu_a^{(\tau,a_0)},\nu_b^{(\tau,a_0)})
 \le \sqrt{\frac{C_\tau}{2}}\,\|a-b\|.
\]
Every fixed label event changes by at most this quantity.  If $A$ is top at
$a$ and $y\ne A$, then
\[
 p_A^\nu(b)-p_y^\nu(b)
 \ge p_A^\nu(a)-p_y^\nu(a)-\sqrt{2C_\tau}\,\|a-b\|.
\]
Since $p_B^\nu(a)$ is the largest competing probability, every such
difference is strictly positive whenever
\[
 \|a-b\|<
 \frac{p_A^\nu(a)-p_B^\nu(a)}{\sqrt{2C_\tau}}.
\]
This proves the radius and independently checks its factor of two.
\end{proof}

\section{Fixed randomized repair is a common channel}
\label{app:fixed-repair}

\begin{proof}[Proof of Proposition~\ref{prop:fixed-randomized-repair}]
Let $U\sim\rho$ be auxiliary randomness whose law does not depend on the
center, and define $M_a=G_a\otimes\rho$.  The likelihood ratio between $M_a$
and $M_b$ depends only on the Gaussian coordinate. Hence
\[
 D_\alpha(M_a\|M_b)=D_\alpha(G_a\|G_b),
 \qquad
 \TV(M_a,M_b)=\TV(G_a,G_b)
\]
for every finite positive order, with the order-one convention.  Applying the
same measurable map $T:H\times\mathcal U\to K$ at both centers and using data
processing gives
\[
 D_\alpha(P_a\|P_b)
 \le D_\alpha(G_a\|G_b)
 =\frac{\alpha\|a-b\|^2}{2\sigma^2},
 \qquad 0<\alpha<\infty,
\]
with $D_1=\KL$, and
\[
 \TV(P_a,P_b)
 \le 2\Phi\!\left(\frac{\|a-b\|}{2\sigma}\right)-1.
\]

For completeness, the event statement also survives the auxiliary mixture.
For $E\subseteq K$ and fixed $u$, put
$A_u=\{z:T(z,u)\in E\}$ and
$h_r(p)=\Phi(\Phi^{-1}(p)-r)$ with $r=\|a-b\|/\sigma$.
The ordinary Gaussian event comparison gives
$G_b(A_u)\ge h_r(G_a(A_u))$.  The function $h_r$ is convex, as shown in
Appendix~\ref{app:mode-stable}. Averaging over $u$ and applying Jensen yields
\[
 P_b(E)\ge \E_\rho h_r(G_a(A_U))
          \ge h_r\!\left(\E_\rho G_a(A_U)\right)=h_r(P_a(E)).
\]
Applying the lower bound to $E^c$ gives the upper event comparison.

A deterministic repair is recovered when $\rho$ is a point mass. A
component-indexed repair takes $U$ to be a categorical component index with a
fixed center-independent law and lets $T(\cdot,U)$ be a component map fixed in
advance, such as clipping or projection.
This argument permits a state-specific channel $T_s$ when the compared centers
belong to the same fixed state fibre. It does not compare different states,
different feasible sets, center-dependent auxiliary laws, or channels that are
recomputed after the center perturbation.
\end{proof}

\section{Adaptive feasible-trajectory comparison}
\label{app:adaptive-trajectory}

At step $t$, a fixed controller maps history $h_t$ to $m_t(h_t)$. A causal
shift rule chooses $\Delta_t$ before the current proposal block. Fix
$N_t\geq1$, $\sigma_t>0$, and $0\leq\rho_t<1$. Let $\mathcal F_t$ contain the
public pre-block history and any private seed used by the shift rule. Fresh
variables $\xi_{t,0},\ldots,\xi_{t,N_t}$ are conditionally independent
standard Gaussians given $\mathcal F_t$ and are independent of
$\mathcal F_t$. The proposals are
\[
 Z_{t,k}=m_t(h_t)+\Delta_t+\sigma_t
 \bigl(\sqrt{\rho_t}\xi_{t,0}+\sqrt{1-\rho_t}\xi_{t,k}\bigr),
 \qquad
 \kappa_t=\frac{N_t}{1+(N_t-1)\rho_t}.
\]
One fixed causal program receives the complete block and returns its first
verified action, uses a finite fallback library, or abstains without a system
transition.

Policy Smoothing gives Gaussian event comparisons for adaptive trajectory
perturbations under a cumulative Euclidean budget
\citep[Theorem~1]{kumar2022policy}. Adaptive Randomized Smoothing applies
Gaussian differential privacy to adaptive multi-step computation
\citep[Theorem~2.3]{lyu2024adaptive}. Below, the fully adaptive Gaussian
composition result of \citet{smith2022adaptivegdp} and
\citet{koskela2023individual} gives a comparison under a pathwise budget.
Whitening computes the exact Mahalanobis cost of each correlated proposal
block, and the common causal feasibility program preserves the comparison.

\begin{theorem}[Adaptive feasible-trajectory comparison]
\label{thm:adaptive-trajectory}
Fix a finite horizon, a standard-Borel history space, and a Borel action subset
of $\mathbb R^d$. Let $P_0$ and $P_\Delta$ be the public-trace laws obtained
from the same measurable controller, verifier, fallback, dynamics, initial
law, and stopping rule, with the requested shifts respectively suppressed and
applied. History-dependent feasible sets are allowed. If the fresh Gaussian
innovations underlying the current block are independent of the pre-block
information and every reachable history and private shift seed obeys
\[
 \sum_t \kappa_t\|\Delta_t\|_2^2/\sigma_t^2\leq\mu^2,
\]
then every measurable public-trace event $E$ obeys
\[
 \Phi\!\left(\Phi^{-1}(P_0(E))-\mu\right)
 \leq P_\Delta(E)\leq
 \Phi\!\left(\Phi^{-1}(P_0(E))+\mu\right).
\]
\end{theorem}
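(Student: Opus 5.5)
We reduce the trajectory comparison to fully adaptive Gaussian differential privacy composition. The key is to realize both executions on a common probability space. In it, the only center-dependent ingredient at each step is the mean of a Gaussian block. Everything else is post-processing by one fixed causal program.

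\textbf{Step 1: whitening.} Fix step $t$ and condition on $\mathcal F_t$. The block $(Z_{t,1},\ldots,Z_{t,N_t})$ is Gaussian in $\mathbb R^{dN_t}$. Its mean is $\mathbf 1\otimes(m_t(h_t)+\Delta_t)$ and its covariance is $\Sigma_t\otimes I_d$, where
\[
 \Sigma_t=\sigma_t^2\bigl((1-\rho_t)I_{N_t}+\rho_t\mathbf 1\mathbf 1^\top\bigr).
\]
The vector $\mathbf 1$ is an eigenvector of $\Sigma_t$ with eigenvalue $\sigma_t^2(1+(N_t-1)\rho_t)$. Hence the Mahalanobis length of the shift $\mathbf 1\otimes\Delta_t$ is
\[
 N_t\|\Delta_t\|^2\big/\bigl(\sigma_t^2(1+(N_t-1)\rho_t)\bigr)=\kappa_t\|\Delta_t\|^2/\sigma_t^2 .
\]
The common noise $\xi_{t,0}$ enters only through this covariance. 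Applying $\Sigma_t^{-1/2}\otimes I_d$, which is a fixed invertible map, turns the block into an isotropic unit Gaussian. Its shift $\mu_t(\mathcal F_t)$ is $\mathcal F_t$-measurable and satisfies $\|\mu_t\|^2=\kappa_t\|\Delta_t\|^2/\sigma_t^2$. The case $\rho_t=0$ is included, and $\rho_t<1$ guarantees invertibility.

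\textbf{Step 2: an adaptive Gaussian mechanism.} View the whole execution as an interactive sequence of Gaussian mechanisms. At step $t$ the adversary, using the past trace and the private seed, chooses a mean shift $\mu_t$ with $\sum_t\|\mu_t\|^2\le\mu^2$ pathwise. It then observes the whitened unit Gaussian block. The unshifted run corresponds to mean zero. The fully adaptive (filter) Gaussian composition theorem of \citet{smith2022adaptivegdp} and \citet{koskela2023individual} has the hypothesis that innovations are independent of pre-block information, which is exactly what is assumed. Under a pathwise bound on the summed squared sensitivities, it states that the joint law of all blocks, the seed, and the exogenous randomness is $\mu$-GDP between the two worlds. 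In other words, the tradeoff function is at least $G_\mu$. To fit their framework, treat the initial law, dynamics noise, verifier, fallback choice, and stopping rule as deterministic measurable functions of the history. Any exogenous randomness among them is drawn independently of the center and so is common to both worlds. Abstention without a transition and early stopping are handled by padding with null blocks, which have zero shift and so do not increase the budget. The standard-Borel assumption supplies the regular conditional distributions needed for this adaptive construction.

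\textbf{Step 3: post-processing.} The public trace is a fixed measurable function of the de-whitened blocks, the seed, and the common exogenous randomness. History-dependent feasible sets only change what this function is. The same function is used in both worlds. Post-processing preserves the $G_\mu$ tradeoff bound, so for every measurable event $E$ we get
\[
 P_\Delta(E)\ge\Phi(\Phi^{-1}(P_0(E))-\mu).
\]
Applying this to $E^c$ gives the upper inequality, with extended quantiles at the endpoints.

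\textbf{Main obstacle.} The hardest step is Step 2: matching the execution model precisely to the hypotheses of the fully adaptive composition theorem. The budget here is a pathwise (filter-type) constraint, not a fixed sequence of sensitivities. The shift rule also uses a private seed that is not part of the public trace. The plan for this step is to include the seed in the adversary's view, since conditioning on more only strengthens the statement, and then marginalize by post-processing. It must also be checked that whitening each block is measurable in $\mathcal F_t$, so that the per-step shift is predictable. If the cited theorem requires a deterministic number of rounds, padding to the finite horizon handles this.
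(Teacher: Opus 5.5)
Your proposal is correct and follows the paper's route. You whiten each correlated block to Mahalanobis cost $\kappa_t\|\Delta_t\|_2^2/\sigma_t^2$, apply fully adaptive Gaussian composition under the pathwise budget (coupling the private seed across both laws and padding with zero-shift blocks after stopping), and finish with common causal post-processing. The only difference is that the paper proves the composition step inline instead of citing it. It appends an unobserved scalar Gaussian with predictable shift $u_*=(\mu^2-\sum_t r_t^2)^{1/2}$ so the total energy is exactly $\mu^2$, and uses iterated Gaussian moment generating functions to show that $M=\sum_t\langle u_t,X_t\rangle+u_*X_*$ is exactly $\Normal(0,\mu^2)$ under the nominal law. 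Neyman--Pearson on the likelihood ratio $\exp(M-\mu^2/2)$ then gives the two tail bounds, which removes the hypothesis-matching step you identified as the main obstacle.
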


\begin{proof}[Proof of Theorem~\ref{thm:adaptive-trajectory}]
Let the action dimension be $d$. For step $t$, write
\[
 C_t=(1-\rho_t)I_{N_t}+\rho_t\mathbf 1\mathbf 1^\top.
\]
The proposal block conditioned on the current history is Gaussian with
covariance $\sigma_t^2 C_t\otimes I_d$. Since $0\leq\rho_t<1$, this covariance
is nonsingular. The nominal and perturbed conditional means differ by
$\mathbf 1\otimes\Delta_t$. Whitening gives squared displacement
\begin{align*}
 r_t^2
 &=\frac{1}{\sigma_t^2}
 (\mathbf 1\otimes\Delta_t)^\top
 (C_t^{-1}\otimes I_d)
 (\mathbf 1\otimes\Delta_t) \\
 &=\frac{\mathbf 1^\top C_t^{-1}\mathbf 1}{\sigma_t^2}
 \|\Delta_t\|_2^2
 =\frac{\kappa_t\|\Delta_t\|_2^2}{\sigma_t^2}.
\end{align*}
The last equality follows because $\mathbf 1$ is an eigenvector of $C_t$ with
eigenvalue $1+(N_t-1)\rho_t$.

The shift rule chooses $\Delta_t$ from the previously visible history and any
private randomness before the current Gaussian block is generated. Couple its
private seed under both laws and include it in the latent filtration. Write
$u_t$ for the whitened mean displacement and $X_t$ for the corresponding
standard-normal block under the nominal law. Then $u_t$ is predictable and
$\|u_t\|^2=r_t^2$. After the last block, append one unobserved scalar Gaussian
with predictable displacement
\[
 u_* = \left(\mu^2-\sum_t r_t^2\right)^{1/2}.
\]
This quantity is real by the pathwise energy condition. The enlarged
experiment has total squared displacement exactly $\mu^2$.

Under its nominal law, define
\[
 M=\sum_t\langle u_t,X_t\rangle+u_*X_*.
\]
Iterated conditional expectation of the Gaussian moment generating function
gives, for every $\lambda\in\mathbb R$,
\[
 \mathbb E_0\exp\!\left(\lambda M-\frac{\lambda^2\mu^2}{2}\right)=1.
\]
Thus $M\sim\Normal(0,\mu^2)$. The sequential Gaussian density ratio of the
enlarged shifted experiment with respect to the enlarged nominal experiment is
\[
 \exp\!\left(M-\frac{\mu^2}{2}\right).
\]
For any event of nominal probability $p$, Neyman--Pearson ordering places the
smallest shifted probability on the lower tail of $M$ and the largest on its
upper tail. Direct evaluation of these two Gaussian tails gives
\[
 \Phi\!\left(\Phi^{-1}(p)-\mu\right)
 \quad\text{and}\quad
 \Phi\!\left(\Phi^{-1}(p)+\mu\right).
\]
An event that ignores the appended scalar is an event of the enlarged
experiment, so the same bounds hold before the scalar is appended. Forgetting
the private seed is also common post-processing.

At a fixed generated history, proposal ordering, clipping, feasibility tests,
fallback construction, dynamics, process randomness, and stopping use the
same conditional kernel under both laws. Their causal composition is common
post-processing of the Gaussian blocks. This remains true when the feasible
action set changes with the generated state. It fails if the perturbation
changes the verifier, fallback, dynamics, or feasible set through another
input. After stopping, append zero-shift dummy blocks to the fixed horizon.
Standard-Borel history and action spaces ensure the required regular
conditional kernels exist.

For a measurable public-trace event $E$, the Gaussian tradeoff gives
\[
 \Phi\!\left(\Phi^{-1}(P_0(E))-\mu\right)
 \leq P_\Delta(E)\leq
 \Phi\!\left(\Phi^{-1}(P_0(E))+\mu\right).
\]
Applying the same comparison to all measurable events yields
$\TV(P_0,P_\Delta)\leq2\Phi(\mu/2)-1$. The enlarged likelihood-ratio experiment
also has R\'enyi divergence $\alpha\mu^2/2$ in both directions at every order
$\alpha>1$. Therefore data processing gives
\[
 D_\alpha(P_\Delta\|P_0)\leq\frac{\alpha\mu^2}{2},\qquad
 D_\alpha(P_0\|P_\Delta)\leq\frac{\alpha\mu^2}{2}.
\]
Letting $\alpha\downarrow1$ gives both directed KL bounds $\mu^2/2$.

Suppose $\sigma_t=\sigma$ and $\kappa_t=\kappa$. Every perturbation with
$\sum_t\|\Delta_t\|_2^2\leq B^2$ has $\mu\leq\sqrt{\kappa}B/\sigma$. Replacing
$P_0(E)$ by a simultaneous lower confidence bound $p_L$ obtained from
independent nominal trajectories and solving
\[
 \Phi\!\left(\Phi^{-1}(p_L)-\frac{\sqrt{\kappa}B}{\sigma}\right)\geq q
\]
for a target probability $q\in(0,1)$ gives
\[
 B_{\rm cert}=\frac{\sigma}{\sqrt{\kappa}}
 \left[\Phi^{-1}(p_L)-\Phi^{-1}(q)\right]_+.
\]
When $p_L\geq q$, this radius inherits the coverage of $p_L$. When $p_L<q$,
the target probability is not certified even at $B=0$.

The public trace includes every externally visible selection index,
fallback result, and abstention symbol. If abstention occurs, the stopping rule
does not execute a physical action. An event requiring no abstention therefore
charges finite-library exhaustion directly. The result bounds that event's
probability. It does not prove that the feasible-action fibre is nonempty or
that the physical dynamics are forward invariant.
\end{proof}

\paragraph{Executable certification procedure.}
Fix before sampling the horizon, event $E$, schedules
$(N_t,\sigma_t,\rho_t)$, controller, candidate map, verifier, ordered finite
fallback rule $\mathcal L_t(h_t)$, transition kernel, and stopping rule. For
$N_t>1$, a desired multiplier $\lambda_t\in(1,N_t]$ can be set exactly by
\[
 \rho_t=\frac{N_t/\lambda_t-1}{N_t-1},
 \qquad \kappa_t=\lambda_t.
\]
For $N_t=1$, use $\rho_t=0$ and $\kappa_t=\lambda_t=1$. An attacked comparison may
choose $\Delta_t$ from the pre-block history, but must commit it before the
current proposal block and cannot alter another program input.
Use separate proposal, attack, and transition random streams. Do not expose
the proposal or transition stream to the shift rule. Evaluate that rule on an
isolated copy of the pre-block history.
For each of $M$ independent nominal trajectories, set $\Delta_t=0$. At history
$h_t$, compute $m_t(h_t)$ and draw the complete correlated proposal block.
Apply the common candidate map and verifier in index order. If every proposal
fails, inspect $\mathcal L_t(h_t)$ in its declared order. Record
$\bot$ and stop without a system transition when the library is exhausted.
Otherwise execute the selected verified action through the common transition
kernel. Record the selected proposal or fallback index, every system outcome,
and any abstention.

Let $X_i$ indicate whether trace $i$ satisfies $E$, and let $p_L$ be a valid
one-sided lower confidence bound for $P_0(E)$. Iid initial-law draws permit an
exact Clopper--Pearson bound based on $\sum_iX_i$. A fixed finite cohort with
one independent draw at each support point permits a Hoeffding bound for the
uniform cohort average. For target $q\in(0,1)$, define
\[
 \mu_{\rm cert}=\Phi^{-1}(p_L)-\Phi^{-1}(q)
\]
only when $p_L\geq q$. No radius is certified when $p_L<q$. For constant
$\sigma_t=\sigma$ and $\kappa_t=\kappa$, return
$B_{\rm cert}=\sigma\mu_{\rm cert}/\sqrt{\kappa}$. With varying schedules,
retain the certified ellipsoid
$\sum_t\kappa_t\|\Delta_t\|_2^2/\sigma_t^2\leq\mu_{\rm cert}^2$.
The event must be fixed independently of these nominal samples or covered by
simultaneous inference. The energy condition must hold for every reachable
history and attack seed. One observed trace cannot establish this condition.
Algorithm~\ref{alg:ccfs-trajectory} gives the complete certification
procedure. The supplement includes tested implementations of the proposal
block, totalized selector, finite trajectory rollout, energy calculation, and
count-to-radius calculation.

\begin{algorithm}[H]
\caption{Capped feasible rollout and trajectory-event certification}
\label{alg:ccfs-trajectory}
\begin{algorithmic}[1]
\Statex \textbf{Fixed program}\quad horizon $T$, event $E$, schedules $(N_t,\sigma_t,\rho_t)$, controller $m_t$
\Statex \hspace{\algorithmicindent}verifier $V_t$, ordered fallback library $\mathcal L_t$, dynamics, stopping rule
\Statex \textbf{Statistical inputs}\quad trials $M$, valid lower-bound rule $\mathcal B$, error $\delta$, target probability $q$
\For{independent nominal trials $i=1,\ldots,M$}
  \For{$t=0,\ldots,T-1$}
    \State Compute $m_t(h_t^{(i)})$ and draw the complete correlated proposal block
    \State Select the first proposal accepted by $V_t(h_t^{(i)},\cdot)$
    \State If none is accepted, inspect $\mathcal L_t(h_t^{(i)})$ in its fixed order
    \State If no fallback is accepted, record $\bot$ and stop without a transition
    \State Otherwise execute the selected action, record the outcome, and update the history
  \EndFor
\EndFor
\State $X_i\gets\ind\{\text{public trace }i\in E\}$ for every $i$
\State $p_L\gets\mathcal B(X_1,\ldots,X_M,\delta)$
\If{$p_L<q$}
  \State \Return no positive certificate
\EndIf
\State $\mu_{\rm cert}\gets\Phi^{-1}(p_L)-\Phi^{-1}(q)$
\State \Return predictable shifts satisfying
\Statex \hspace{\algorithmicindent}$\sum_t\kappa_t\|\Delta_t\|_2^2/\sigma_t^2\leq\mu_{\rm cert}^2$ on every reachable history
\end{algorithmic}
\end{algorithm}

\subsection{Learned-controller evaluation}
\label{app:goal2-trajectory}

We trained three PPO-Lagrangian controllers for $10$ million environment steps
each and retained their terminal actor and observation-normalizer files. The
controller seeds are $1120275774$, $1346831933$, and $658990255$. Evaluation
uses SafetyPointGoal2-v0 in Safety-Gymnasium 1.0.0
\citep{ji2023safetygymnasium}. One observation-only verifier restricts the two
action coordinates. A deterministic clipped-controller action is used when no
proposal passes. The verifier is a fixed heuristic and is not a native-cost
oracle.

An earlier study used goal-before-cost as its event and did not certify
probability $0.5$. Those outcomes do not enter this calculation. A $16$-seed
pilot then informed the event below. The final $256$-seed cohort was specified
before evaluation. The audit reconstructs these seeds and verifies that they
are disjoint from the earlier confirmation cohort and all $36$ development
labels.

The certified event is no positive native cost until the first goal or $500$
executed steps. A positive cost, an early environment stop, or a nonfinite
transition is failure. Reaching the goal stops the event, so no claim concerns
later states. A timeout without cost is success. The target initial law is the
uniform mixture over the $256$ fixed reset seeds. We draw one independent
Gaussian trajectory for each seed and arm. If $S$ trajectories succeed, the
simultaneous lower bound is
\[
 p_L=\left[\frac{S}{256}-
 \sqrt{\frac{\log(6/0.001)}{2\cdot256}}\right]_+.
\]
Hoeffding's inequality applies to the independent, nonidentical Bernoulli
outcomes and targets the uniform-cohort average. It does not extend the result
to other reset distributions.

The correlated-four arm has $N=4$, $\sigma=0.2$, $\rho=59/63$, and
$\kappa=1.05$. The matched-one arm has $N=1$,
$\sigma=0.2/\sqrt{1.05}$, $\rho=0$, and $\kappa=1$. Both have Gaussian energy
charge $\kappa/\sigma^2=26.25$. The protocol records $0.2$ and $1.05$
as shared base values. The post-run integrity record distinguishes the literal
schedules above. The executed schedules, energy calculations, and radii use
the same charge.

At each step, the tested shift points opposite the clipped controller center
and has norm $0.02/\sqrt{500}$. The empirical shifted counts in
Table~\ref{tab:goal2-trajectory} are descriptive. The certificate uses the
nominal lower bound and Theorem~\ref{thm:adaptive-trajectory}, hence it covers
every predictable proposal-center sequence with cumulative Euclidean budget
at most $B=0.02$ under the same causal program.

\begin{table}[H]
\centering
\caption{SafetyPointGoal2 trajectory evaluation. Safe is the certified
event count. Goal and timeout partition that count. Cost is the nominal
positive-cost count. $L_{.02}$ is the familywise lower event probability at
$B=0.02$. Shift safe is the descriptive count under the tested shift. C1, C2,
and C3 follow the controller-seed order in the text.}
\label{tab:goal2-trajectory}
\begin{tabular}{@{}llrrrrrrrr@{}}
\toprule
Ctrl & Proposals & Safe & Goal & Timeout & Cost & $p_L$ & $B_{\rm cert}$ & $L_{.02}$ & Shift safe \\
\midrule
C1 & one  & 211 & 78  & 133 & 45 & $.694$ & $.0989$ & $.657$ & 213 \\
C1 & four & 210 & 85  & 125 & 46 & $.690$ & $.0968$ & $.653$ & 214 \\
C2 & one  & 187 & 60  & 127 & 69 & $.600$ & $.0495$ & $.560$ & 180 \\
C2 & four & 189 & 59  & 130 & 67 & $.608$ & $.0535$ & $.568$ & 187 \\
C3 & one  & 205 & 107 & 98  & 51 & $.670$ & $.0861$ & $.633$ & 204 \\
C3 & four & 207 & 106 & 101 & 49 & $.678$ & $.0903$ & $.641$ & 202 \\
\bottomrule
\end{tabular}
\end{table}

All six arms certify event probability above $0.5$ at $B=0.02$ with familywise
error at most $0.001$. Only $59$--$107$ nominal trajectories per arm reach the
goal before cost. Most certified successes are therefore timeouts. This result
does not certify task completion, observation perturbations, changes to the
verifier or dynamics, or forward invariance.

The raw result stores aggregate trajectory records rather than full per-step
traces. A post-run audit checks the record schema, all $3{,}072$ arm keys,
every reported summary, seed separation, finite energy, method schedules, and
source closure. It cannot replay each selection from the stored aggregate
records. The theorem's public trace is the arm-blind controller, selection,
fallback, dynamics, and stopping output. Attack budgets and energy diagnostics
are separate experimental metadata. The supplement includes the protocol,
results, audit, checksums, and environment versions.

\section{Relation of the formal results to prior work}
\label{app:novelty-matrix}

Table~\ref{tab:novelty-boundary} identifies the earlier result used by each
formal step and its application to conditional laws, their decision rules,
and center-dependent occupancy.

\begingroup
\begin{longtable}{@{}
>{\raggedright\arraybackslash}p{\dimexpr.21\linewidth-8pt\relax}
>{\raggedright\arraybackslash}p{\dimexpr.35\linewidth-8pt\relax}
>{\raggedright\arraybackslash}p{\dimexpr.44\linewidth-8pt\relax}@{}}
\caption{Result-level relation to prior work. The middle column gives the
earlier statement used here. The last column gives the result established in
this paper.}\label{tab:novelty-boundary}\\
\toprule
Current result & Used here & Established here \\
\midrule
\endfirsthead
\toprule
Current result & Used here & Established here \\
\midrule
\endhead
\bottomrule
\endfoot
Occupancy divergences
& Fixed-support R\'enyi log-partition identities and truncated-family KL and
skew-Jensen identities \citep{nielsen2011renyi,nielsen2022truncated}
& Gaussian occupancy specialization followed by the equivalence between
concave log occupancy and the stated ordered-pair KL and positive-order
R\'enyi rates \\

Covariance control
& Log-normalizer differentiation supplies standard moment and Fisher
identities
& Their specialization gives the finite-path KL integral, local R\'enyi rate,
and top-covariance diagnostic used here \\

Corollary~\ref{cor:convex-event}
& Strongly log-concave tradeoff comparison of
\citet[Theorem~13]{gopi2022private}
& Intrinsic affine-hull specialization with displacement scale
$\|a-b\|/\sigma$ \\

Proposition~\ref{prop:event-criterion}
& Neyman--Pearson ordering for the monotone likelihood ratio between two
conditioned Gaussian centers
& Necessary and sufficient projected-mass inequalities for every measurable
retained event under an arbitrary fixed set \\

Theorem~\ref{thm:joint-mass} and Algorithm~\ref{alg:joint-mass}
& Joint selected-label and selected-or-rejection masses
\citep[Theorem~3.1]{sheikholeslami2022rejection}
& Explicit competitor bounds and their simultaneous combination with the
rejection-complement bound for arbitrary fixed filters \\

Limits of conditional votes
& The cited joint-event certificate does not provide a radius from
conditional votes alone
& Sharpness of the joint-mass radius and the support-free conditional-only
impossibility result \\

Proposition~\ref{prop:diameter-radii}
& Categorical R\'enyi reversal bound \citep[Lemma~1]{li2019certified}, data
processing, Popoviciu's inequality, and likelihood-ratio oscillation
& Transfer to conditional laws under a uniform covariance bound, with the
$R/\alpha$ restriction when that bound holds only in a ball of radius $R$ \\

Theorem~\ref{thm:adaptive-trajectory}
& Adaptive trajectory and multi-step smoothing
\citep{kumar2022policy,lyu2024adaptive}, and fully adaptive Gaussian composition
\citep{smith2022adaptivegdp,koskela2023individual}
& Exact Mahalanobis cost for correlated proposal blocks followed by one common
causal selector, verifier, fallback, dynamics, and stopping program \\
\end{longtable}
\endgroup

\section{Experimental protocol and additional results}
\label{app:experiments}

\subsection{Released-model recertification}
\label{app:auditvotes}

We audit Theorem 2 in Section 5.2 and its proof in Appendix A.2 of the retained
AuditVotes v3 manuscript. The public
source\footnote{\url{https://github.com/Yuni-Lai/AuditVotes-Certified-Robustness}}
is identified by its exact revision in the supplement. The supplement also
authenticates the manuscript, released CIFAR-10 checkpoint, and inspected
\texttt{core.py}.
The upstream repository has no license file, so neither its source nor its
checkpoint is redistributed.

The retained manuscript defines the image predictor by conditional label
probabilities in Eq.~(15). Its Theorem~2 uses bounds on those probabilities
in the Gaussian radius, while Appendix~A.2 applies the Gaussian halfspace
comparison to joint retention-and-label indicators. In the released
\texttt{core.py}, lines 97--103 of \texttt{Smooth.\_sample\_noise} filter
predictions by confidence before counting them. Lines 47--55 of
\texttt{Smooth.certify} then replace the requested proposal count by the sum
of retained class counts, compute a binomial lower bound with that
denominator, and insert the bound into \texttt{sigma*norm.ppf}. The bound
therefore estimates a conditional
probability, whereas the joint indicator used in the halfspace comparison
has nominal mass $\occ(a)p_y(a)$. This trace concerns the Gaussian image
extension, not the separate graph certificate.
The released multiclass routine uses the one-sided radius
$\sigma[\Phi^{-1}(\underline p_A)]_+$, whereas
$r_{\mathrm{cond}}$ in Section~\ref{sec:problem} is the top-versus-runner
population expression. For $p_A(a)>1/2$, the one-sided population radius is
no larger than $r_{\mathrm{cond}}(a)$ because $p_B(a)\leq1-p_A(a)$.
All learned-model plots and endpoint tests use the released one-sided
calculation.

We use PyTorch 2.3.1, torchvision 0.18.1, and one NVIDIA RTX 4070 Ti. The full
test run uses $100$ proposals per image in the selection phase and $10{,}000$
in the estimation phase, using batches of $1024$. Random streams are derived
from a declared base seed and image index. The run takes $7562$ seconds.
The supplement includes all $10{,}000$ per-image records and the analysis
script. We do not call this an exact reproduction of the published table.
The released code does not seed the PyTorch proposal stream and the original
stream is unavailable. Instead, this is a seeded full-test evaluation
of the released checkpoint and filtering rule under the published sample
sizes.

The executed input path follows the pinned image code. CIFAR-10 test images are
converted to tensors in $[0,1]^d$. Isotropic noise is added in these pixel
coordinates before the model's fixed channel normalization. The noisy tensor
is not clipped. The ResNet logits determine the class by their first maximum,
and a proposal is retained only when its largest softmax probability is
strictly greater than $0.9$. The audited source functions are
\texttt{Smooth.certify} and \texttt{Smooth.\_sample\_noise} in the pinned
\texttt{core.py}. Our evaluator reproduces that proposal law and tie rule while
using explicit seeded streams and retaining both filtered and unfiltered
counts.

Table~\ref{tab:auditvotes-additional} separates three sound finite-sample
calculations. The explicit calculation reserves half of the error probability
for the selected lower bound and half for simultaneous competitor upper bounds.
The rejection-complement calculation reserves half for the selected lower bound
and half for the selected-or-rejected lower bound. These two calculations are
preselected separately. Algorithm~\ref{alg:joint-mass} combines both runner
bounds and assigns one third of the error probability to each confidence-event
family.

\begin{table}[H]
\centering
\caption{Additional full-test results. Positive is the fraction receiving a
strictly positive radius. Median is over all test images. The remaining columns
are correct fractions with radius strictly above the displayed value.}
\label{tab:auditvotes-additional}
\begin{tabular}{@{}lrrrrr@{}}
\toprule
Calculation & Positive & Median & $0.3$ & $0.5$ & $0.7$ \\
\midrule
Conditional substitution & $.8941$ & $.6668$ & $.6835$ & $.6007$ & $.4406$ \\
Joint complement & $.8868$ & $.3342$ & $.5163$ & $.3221$ & $.1416$ \\
Joint explicit & $.8872$ & $.3293$ & $.5114$ & $.3137$ & $.1323$ \\
Joint simultaneous & $.8875$ & $.3330$ & $.5153$ & $.3202$ & $.1394$ \\
Unfiltered Gaussian & $.9122$ & $.3924$ & $.5525$ & $.3917$ & $.2403$ \\
\bottomrule
\end{tabular}
\end{table}

The selected-label accuracy is $0.7603$ after filtering and $0.7867$ without
filtering. The mean retained fraction is $0.493386$. All reported radius
thresholds use a strict comparison.

The confidence statements above are mathematical real-arithmetic statements.
The evaluation uses conventional SciPy beta and Gaussian quantiles. We record
the numerical scope rather than claim adversarial floating-point soundness
\citep{voracek2023sound}. Every count file and post-processing script is retained
so a validated quantile implementation can be substituted without reevaluating
the network.

\subsection{CIFAR-10.1 evaluation}
\label{app:cifar101}

The CIFAR-10.1 v4 evaluation protocol was specified before model inference.
This version contains $2{,}021$ images and was the first version on which its
creators evaluated classifiers \citep{recht2018cifar101}. The supplement records
the exact dataset revision and file checksums. The image and label files are not
redistributed.

The protocol specifies the full evaluation, a hash-ranked attack cohort, the
random streams, and all confirmation tests. The model, filter, noise scale,
sample sizes, batching, and no-clipping proposal law match the CIFAR-10
evaluation. The run uses an independent base seed and took $1636$ seconds on
the same GPU with PyTorch 2.2.2 and torchvision 0.17.2. The supplement includes
the protocol, implementation, per-image records, and summary.

\begin{table}[H]
\centering
\caption{Full CIFAR-10.1 v4 results. Positive is the fraction receiving a
strictly positive radius. Median is over all $2{,}021$ images. The remaining
columns are correct fractions with radius strictly above the displayed value.}
\label{tab:cifar101-additional}
\begin{tabular}{@{}lrrrrr@{}}
\toprule
Calculation & Positive & Median & $0.3$ & $0.5$ & $0.7$ \\
\midrule
Conditional substitution & $.8461$ & $.5193$ & $.5270$ & $.4414$ & $.2796$ \\
Joint simultaneous & $.8362$ & $.2122$ & $.3449$ & $.1752$ & $.0643$ \\
Joint complement & $.8362$ & $.2125$ & $.3459$ & $.1757$ & $.0663$ \\
Joint explicit & $.8352$ & $.2095$ & $.3414$ & $.1707$ & $.0579$ \\
Unfiltered Gaussian & $.8733$ & $.2503$ & $.3909$ & $.2395$ & $.1192$ \\
\bottomrule
\end{tabular}
\end{table}

Table~\ref{tab:cifar101-additional} reports the full calculation. The filtered
and unfiltered selected-label accuracies are $0.6319$ and $0.6502$. The mean
retained fraction is $0.351061$. All methods use the same selection and
estimation proposal batches within each image.

\subsection{Fixed-cohort CIFAR-10.1 attack}
\label{app:cifar101-attack}

The protocol orders all $2{,}021$ indices by the SHA-256 hash of a fixed salt
and the index, then takes the first $128$. This selection does not use a
model output or image label. All cohort members remain in the primary
denominator. The released radius is positive for $107$ images and nonpositive
for $21$.

Every positive-radius image receives four projected-gradient restarts with
$150$ steps, $256$ Gaussian proposals per step, and temperatures $1$, $0.5$,
and $0.25$. Candidate centers are projected to $[0,1]^d$, while the Gaussian
proposals supplied to the model remain unclipped. The best and final centers
from each restart give eight candidates per positive-radius image. Each
candidate receives a disjoint $50{,}000$-proposal screen. The center with the
largest target margin is retained for that image. The search takes $8852$
seconds and produces twelve positive screens.
The supplement contains every search attempt and the preconfirmation summary.

Each positive screen then receives $250{,}000$ new proposals at the nominal and
shifted endpoints. One Bonferroni family reserves nine pairwise winner tests at
each endpoint and one nominal conditional-probability lower bound for every
possible cohort member. Its size is
$128[2(10-1)+1]=2{,}432$, its familywise error is $0.001$, and its
per-inference error is $4.112\times10^{-7}$. The confirmation takes $466$
seconds. Its result and row file are included in the supplement.

All twelve screened candidates have verified opposite endpoint labels and lie
below fresh lower bounds on the corresponding population substituted radii.
The largest endpoint test value among them is $5.27\times10^{-11}$. Table
\ref{tab:cifar101-fixed-cohort} gives the displacement, fresh radius bound, and
retained-share effect sizes. The primary fixed-cohort attack yield is $12/128$.
Nine of the twelve verified cases occur among the $80$ cohort images that are
correctly classified and have a positive released radius. These are results for
the declared attack and finite cohort. The generic output also contains a
binomial interval for the count. We do not use it because deterministic
hash-ranking does not define an independent probability sample from a model
population. Candidate tensors derived from CIFAR-10.1 images are not
redistributed.

\begin{table}[t]
\centering
\caption{Fresh confirmation for every positive screen in the fixed cohort.
The last two columns give the declared winner's advantage over the other listed
label as a percentage of all retained proposals. Every row is verified.}
\label{tab:cifar101-fixed-cohort}
\begin{tabular}{@{}rrcrrrrr@{}}
\toprule
Cohort & ID & labels & $\|\Delta\|_2$ & $r_L$ & $\|\Delta\|_2/r_L$ & nominal & shifted \\
\midrule
3 & 174 & $6\to2$ & $.1641$ & $.1852$ & $.886$ & $57.47$ & $18.19$ \\
11 & 1586 & $3\to2$ & $.3205$ & $.3707$ & $.864$ & $88.38$ & $29.30$ \\
15 & 879 & $2\to6$ & $.2744$ & $.2912$ & $.942$ & $77.59$ & $29.93$ \\
16 & 1060 & $7\to4$ & $.4674$ & $.4739$ & $.986$ & $94.80$ & $23.82$ \\
19 & 759 & $1\to9$ & $.2920$ & $.3014$ & $.969$ & $78.27$ & $5.80$ \\
37 & 215 & $0\to9$ & $.1325$ & $.1496$ & $.885$ & $47.36$ & $3.88$ \\
51 & 1986 & $8\to0$ & $.1649$ & $.1896$ & $.870$ & $57.37$ & $11.17$ \\
61 & 366 & $9\to2$ & $.3198$ & $.3421$ & $.935$ & $84.24$ & $21.36$ \\
79 & 775 & $1\to9$ & $.1799$ & $.1940$ & $.927$ & $57.96$ & $2.99$ \\
84 & 289 & $9\to8$ & $.3499$ & $.3687$ & $.949$ & $87.21$ & $19.66$ \\
99 & 1267 & $8\to0$ & $.2453$ & $.2544$ & $.964$ & $71.19$ & $15.82$ \\
125 & 791 & $8\to1$ & $.3082$ & $.3530$ & $.873$ & $86.34$ & $7.77$ \\
\bottomrule
\end{tabular}
\end{table}

\subsection{Output-selected boundary confirmation}
\label{app:auditvotes-boundaries}

The boundary protocol specifies twelve candidate images, the search procedure,
and disjoint search, screen, and confirmation seeds. Candidate images came
from the opened full-test output
and were correctly classified examples whose filtered and unfiltered selected
labels differed. Four projected-gradient restarts used $256$ proposals per
step. A fresh $50{,}000$-proposal screen selected at most six positive
alternative-minus-original differences. The selected order, labels, and image
tensors were then fixed before confirmation. Candidate centers were projected
to $[0,1]^d$, and Gaussian proposals remained unclipped.

Each endpoint uses one million fresh proposals under the released Gaussian law
and confidence filter. For a declared winner $w$ and competitor $c$, condition
on the number of retained proposals carrying either label. Under the null
$s_w\leq s_c$, the winner count is binomial with success probability at most
$1/2$. Its upper tail therefore gives an exact one-sided test. One Bonferroni
family contains all $6\mathbin{\times}2\mathbin{\times}9=108$ endpoint tests
and six nominal one-sided Clopper--Pearson lower bounds. The per-inference
error is $0.001/114$. If $L_A$ is a nominal conditional-probability lower
bound, $r_L=\sigma\Phi^{-1}(L_A)$ lower bounds the binary population radius
used by the released formula. Verification requires opposite endpoint winners
and $\|\Delta\|_2<r_L$. Table~\ref{tab:auditvotes-boundaries} reports every
endpoint test.

\begin{table}[H]
\centering
\caption{Independent endpoint confirmation. Count pairs give original-label
and alternative-label retained counts among one million proposals. A verified
row has both declared endpoint winners and a shift below $r_L$.}
\label{tab:auditvotes-boundaries}
\begin{tabular}{@{}rcrrrrc@{}}
\toprule
ID & labels & $\|\Delta\|_2$ & $r_L$ & nominal counts & shifted counts & verified \\
\midrule
4924 & $7\to5$ & $.4577$ & $.5097$ & $52295/965$ & $7182/17611$ & yes \\
5089 & $6\to3$ & $.3116$ & $.3585$ & $32339/2345$ & $9196/17483$ & yes \\
8654 & $2\to3$ & $.3316$ & $.4141$ & $8736/358$ & $2052/4782$ & yes \\
6665 & $4\to3$ & $.2843$ & $.3685$ & $18244/1172$ & $5741/7099$ & yes \\
634 & $7\to4$ & $.3068$ & $.4013$ & $36793/1873$ & $11556/12135$ & no \\
5211 & $7\to3$ & $.2570$ & $.3775$ & $27299/1245$ & $7754/7157$ & no \\
\bottomrule
\end{tabular}
\end{table}

All nominal endpoint maximum $p$-values are below $10^{-300}$. The largest
shifted-endpoint $p$-value among verified rows is
$2.09\times10^{-33}$. Pair 634 has a shifted value
$8.65\times10^{-5}$, above the simultaneous cutoff, and pair 5211 does not
reverse its empirical label ordering. Numerical underflowed values are not
reported as exact zeros. Four population substitution violations are verified
with familywise error at most $0.001$. Each verified shift is also below the
finite-sample radius stored by the released calculation during candidate
selection. The stronger confirmation statement uses fresh nominal samples to
lower bound the corresponding population substituted radius. The raw
confirmation records, analysis, and execution log are included in the
supplement. Since the candidates were selected from opened output, this result
establishes existence but does not estimate prevalence.

\subsection{Confidence-threshold selection}
\label{app:threshold-selection}

The threshold protocol was specified before any multithreshold result was
generated. The declared grid is
$0$, $0.5$, $0.7$, $0.8$, $0.85$, $0.9$, $0.925$, $0.95$, and $0.975$. Each
network evaluation is shared across every threshold. The proposal counts,
error probability, model, checkpoint, and noise scale match the full-test study,
while the base random seed is independent. The first $1{,}000$ ordered test
indices form the development set and the remaining $9{,}000$ form the
confirmation set. The supplement includes the per-image record and summary.

The objective averages explicit-joint certified accuracy over radii $0.25$,
$0.5$, and $0.75$. The development set selects threshold zero. Table
\ref{tab:threshold-selection} reports every candidate and confirms the same
ordering on the untouched indices. This study addresses the released
checkpoint only and does not rule out a benefit from confidence filtering after
different training.

\begin{table}[H]
\centering
\caption{Threshold selection. Dev and confirm are mean explicit-joint
certified accuracies over the three declared radii. Retained is the mean
confirmation proposal fraction.}
\label{tab:threshold-selection}
\begin{tabular}{@{}lrrrrrrrrr@{}}
\toprule
Threshold & $0$ & $.5$ & $.7$ & $.8$ & $.85$ & $.9$ & $.925$ & $.95$ & $.975$ \\
\midrule
Dev & $.400$ & $.395$ & $.376$ & $.355$ & $.344$ & $.324$ & $.308$ & $.283$ & $.240$ \\
Confirm & $.399$ & $.394$ & $.376$ & $.358$ & $.343$ & $.320$ & $.304$ & $.279$ & $.237$ \\
Retained & $1.000$ & $.876$ & $.698$ & $.607$ & $.555$ & $.493$ & $.455$ & $.407$ & $.338$ \\
\bottomrule
\end{tabular}
\end{table}

\subsection{Population feasible-action reanalysis}
\label{app:reach-avoid}

The reach--avoid reanalysis uses the already opened protocol-v2 population.
It contains seven deterministic reference states from each of $64$ layouts.
Analytic products of one-dimensional Gaussian CDF differences give occupancy
and label-event masses for each rectangular union. A bracketed scalar root
locates the first vertical label boundary with Brent tolerances
$2\times10^{-13}$. These are population calculations with standard
floating-point Gaussian CDF evaluations and contain no Monte Carlo confidence
interval. The smallest substituted-radius excess over the located boundary is
$4.19\times10^{-4}$. The smallest boundary excess over the joint radius is
$4.72\times10^{-4}$. The crossing statements use a $2\times10^{-10}$ numerical
margin. They are not outward-interval certificates.

The substituted conditional radius crosses the located boundary at all $448$
states. The joint-mass radius is positive at all states and crosses none. Its
median, fifth percentile, and ninety-fifth percentile are $0.2267$, $0.0197$,
and $0.4719$. The corresponding substituted values are $0.3307$, $0.0320$, and
$0.6540$. The joint radius divided by the boundary distance has median $0.7708$
and maximum $0.8628$. The joint radius divided by the substituted radius has
median $0.6872$.

\subsection{Geometric breadth}
\label{app:random-boxes}

A separate synthetic study crosses dimensions $2,5,10,20$, component
counts $4,8,16$, two geometry families, two Gaussian scales, two normalized
separations, and paired volume settings. The $6{,}144$ cells reuse $384$
independently seeded base geometries and are not independent observations.
Figure~\ref{fig:random-box-main} reports the geometry-level comparison.

\begin{figure}[H]
  \centering
  \includegraphics[width=\linewidth]{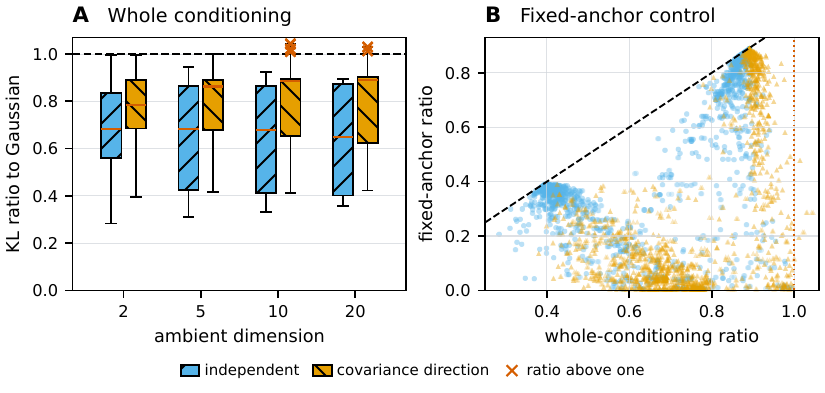}
  \caption{Product-box study. Panel A finds ten covariance-direction KL
  ratios above one from five base geometries. The maximum is $1.0448$, and no
  independent-direction ratio exceeds one. Every fixed-anchor control in Panel
  B stays below one. The study does not estimate prevalence.}
  \label{fig:random-box-main}
\end{figure}

\subsection{Held-out covariance replication}
\label{app:covariance-holdout}

The replication protocol was specified after the earlier study identified the
coherent $16$-box, balanced-volume family and before drawing the new geometry
and direction seeds. It uses dimensions $10$ and $20$, scales $0.25$ and
$0.5$, normalized distances $0.05$ and $0.15$, and $64$ base geometries per
dimension. Each geometry is reused for the fixed scale, distance, and direction
comparisons. The $128$ base geometries, not the $1{,}024$ factor rows, are
independent analysis units.

Fourteen geometries exceed the Gaussian KL rate in the anchor top-covariance
direction and none does so in the independently seeded direction. The
exact one-sided sign test on the paired exceedance indicators has value
$6.10\times10^{-5}$. The covariance-direction maximum exceeds the independent
maximum in every geometry, with median difference $0.06629$. The minimum
fixed-cell Spearman correlation between anchor directional covariance and
finite KL ratio is $0.9941$. Direct KL and the covariance path integral agree
within $1.77\times10^{-15}$. Figure~\ref{fig:covariance-holdout-full} gives the
full local diagnostic. The paired maxima are computed from the complete row
file, which is included with the result summary in the supplement.

\begin{figure}[H]
  \centering
  \includegraphics[width=\linewidth]{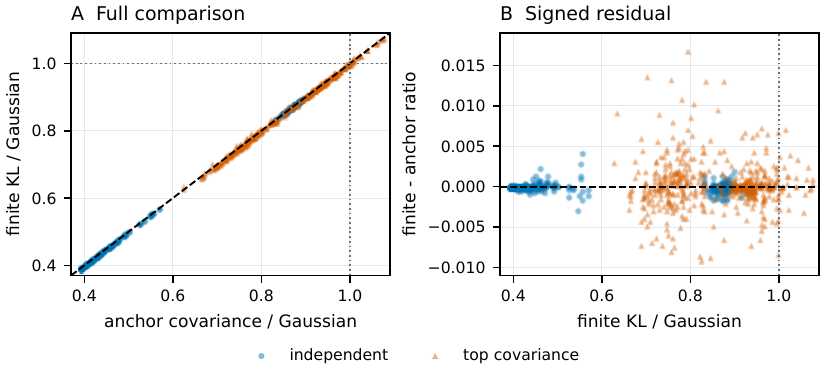}
  \caption{Held-out covariance replication. Panel A compares anchor
  directional covariance and finite KL over all $1{,}024$ factor rows. The
  diagonal is equality and dotted lines mark the Gaussian rate of one. Panel B
  shows the signed finite-minus-anchor ratio, exposing the finite-scale
  discrepancy hidden by the near-diagonal scatter. Twenty-seven repeated
  top-covariance rows from $14$ base geometries exceed one. No independent
  row does. The $128$ base geometries, not factor rows, are the independent
  analysis units. Direct KL and the path integral agree within
  $1.77\times10^{-15}$.}
  \label{fig:covariance-holdout-full}
\end{figure}

The family was selected from earlier results, so this is a targeted replication
rather than a prevalence study. The top-covariance direction is an
adaptive stress direction. The local covariance ratio and finite KL ratio test
Proposition~\ref{prop:local-diagnostic}. They do not certify a uniform
covariance bound or predict a label change.

\subsection{External projected-band confirmation}
\label{app:cifar102}

This study evaluates the geometry-controlled certificate in
Proposition~\ref{prop:diameter-radii} on a learned image classifier. The filter
was selected using CIFAR-10 training images and fixed before CIFAR-10.2 was
accessed. It retains a raw noisy proposal $z$ when
\[
  0.0625\leq |\langle u,z\rangle+0.1072130481|\leq0.25,
  \qquad \sigma=0.25.
\]
The unit vector $u$ and its exact representation are included in the
supplement.
For $T=\langle u,z\rangle+b$, retention implies $T\in[-\beta,\beta]$.
Popoviciu gives $\operatorname{Var}(T\mid R=1)\leq\beta^2\leq\sigma^2$.
The orthogonal Gaussian coordinates are unchanged and independent of the
retention event. Hence
$\operatorname{Cov}(Q_c)\preceq\sigma^2I$ for every center $c$. This is a
global analytic premise, not an empirical covariance estimate.

The official CIFAR-10.2 test set \citep{lu2020harder} contains $2{,}000$ images
with $200$ from each class. The supplement records its exact revision and file
checksum. Images are evaluated in stored order. The AuditVotes source,
checkpoint, and architecture are those in
Appendix~\ref{app:auditvotes}. The supplement records the dataset checksum and
contains the confirmation protocol and runner.

Each image receives four independent streams. Filtered and unfiltered label
selection use $10{,}000$ raw proposals each. Their independent estimation
streams use $100{,}000$ proposals each. Gaussian proposals are not clipped.
The filter is applied before model inference and rejected proposals are not
replaced. All images remain in the denominator. One Bonferroni family assigns
error $0.001$ across $64{,}000$ one-sided Clopper--Pearson bounds. These cover
conditional label probabilities, joint retained-label masses, the
independently selected joint runner used for an upper radius bound, and the
unfiltered probabilities.

Filter selection and a $200$-image replication used CIFAR-10 training images
only. These development results do not enter the reported CIFAR-10.2 evidence.
The supplement contains the development records, the final protocol, and the
complete execution history.

At radius $0.2$, the original forward-KL, joint-mass, and unfiltered calculations
correctly certify $829/2000$, $725/2000$, and $948/2000$ images. Forward KL
certifies $104$ images that joint mass does not, with none in the reverse
direction. The paired exact one-sided value is
$4.93\times10^{-32}$. This value is secondary evidence rather than sampling
uncertainty for the finite census. Among correctly selected filtered labels,
$329$ have $r_{\mathrm{cov},L}-r_{\mathrm{mass},U}\geq0.01$ under the same
simultaneous confidence event. This compares two certificate formulas. It is
not a lower bound on a difference between true robust radii.

\begin{table}[t]
\centering
\caption{CIFAR-10.2 results by class. Retained is the mean estimation-stream
fraction. F-acc and U-acc are filtered and unfiltered selected-label accuracy.
KL and Joint are correct certified fractions at radius $0.2$ in the original
forward-KL analysis. Strong counts correct images whose forward-KL lower
radius exceeds the joint-mass upper radius by at least $0.01$.}
\label{tab:cifar102-classwise}
\begin{tabular}{@{}lrrrrrr@{}}
\toprule
Class & Retained & F-acc & U-acc & KL & Joint & Strong \\
\midrule
airplane   & $.420$ & $.505$ & $.505$ & $.280$ & $.250$ & $26$ \\
automobile & $.418$ & $.640$ & $.640$ & $.420$ & $.385$ & $35$ \\
bird       & $.437$ & $.505$ & $.510$ & $.275$ & $.210$ & $34$ \\
cat        & $.424$ & $.545$ & $.540$ & $.250$ & $.190$ & $44$ \\
deer       & $.432$ & $.585$ & $.585$ & $.340$ & $.275$ & $34$ \\
dog        & $.415$ & $.660$ & $.655$ & $.490$ & $.420$ & $34$ \\
frog       & $.437$ & $.665$ & $.665$ & $.470$ & $.410$ & $35$ \\
horse      & $.425$ & $.745$ & $.740$ & $.520$ & $.490$ & $32$ \\
ship       & $.437$ & $.770$ & $.770$ & $.585$ & $.535$ & $29$ \\
truck      & $.430$ & $.695$ & $.690$ & $.515$ & $.460$ & $26$ \\
\midrule
all        & $.428$ & $.632$ & $.630$ & $.415$ & $.363$ & $329$ \\
\bottomrule
\end{tabular}
\end{table}

The mean retained fraction is $0.427570$. Filtered and unfiltered selected-label
accuracies are $0.6315$ and $0.6300$. The filtered streams make $94{,}064{,}290$
model calls from $220$ million proposals. The unfiltered streams make $220$
million calls. The run took $26{,}992$ seconds on one RTX 4070 Ti with PyTorch
2.2.2. The supplement contains the manifest, complete row file, and summary.
Ordinary unfiltered smoothing remains stronger at radius $0.2$. This finite
census covers one checkpoint, noise scale, and training-selected filter, not
other models, filters, or systems.

\subsection{Conditional divergence reanalysis}
\label{app:renyi-reanalysis}

We applied Proposition~\ref{prop:diameter-radii} to the same stored counts
after the original experiment. The filter, predictions, and confidence bounds
are unchanged. The finite order set is
\[
 \{1\}\cup\{1+k/20\mid k=1,\ldots,80\}\cup\{6,8,12,16,32,64\}.
\]
For each image, we maximize the valid radius over these orders using its
original conditional lower and upper probability bounds. This choice adds no
confidence events or model evaluations. The covariance premise is global,
so $R=\infty$ and $\Lambda=1$. This is a deterministic reanalysis, not a new
independent confirmation.

\begin{table}[t]
\centering
\caption{Correctly certified CIFAR-10.2 images out of $2{,}000$. All filtered
methods use identical counts and selected labels. The unfiltered predictor
uses its separate original streams. Matched calls uses a random subset of
those streams with the same model-call count as the filtered methods.}
\label{tab:renyi-reanalysis}
\begin{tabular}{@{}lrrr@{}}
\toprule
Certificate & $r\geq0.2$ & $r\geq0.3$ & $r\geq0.5$ \\
\midrule
Conditional forward KL & $829$ & $0$ & $0$ \\
Conditional reverse KL & $888$ & $698$ & $329$ \\
Conditional R\'enyi & $892$ & $713$ & $381$ \\
Joint mass & $725$ & $417$ & $0$ \\
Unfiltered, original & $948$ & $814$ & $529$ \\
Unfiltered, matched calls & $945$ & $806$ & $522$ \\
\bottomrule
\end{tabular}
\end{table}

Table~\ref{tab:renyi-reanalysis} separates the effect of reversing KL from
the additional improvement obtained by optimizing the R\'enyi order. The
forward-KL radius cannot exceed $0.25\sqrt{2\log2}=0.29435\ldots$ for this
covariance bound. Reverse KL removes that ceiling. At radius $0.2$, R\'enyi
certifies $167$ correct images that joint mass does not, with none in the
reverse direction. For $877$ correctly selected labels, the R\'enyi lower
radius exceeds the joint-mass upper radius by at least $0.01$. These bounds
share the original familywise error $0.001$ and compare certificate formulas,
not true robust radii.

\paragraph{Outward numerical validation.}
We recomputed the full-count conditional R\'enyi, reverse-KL, joint-mass, and
unfiltered certificates using Arb interval arithmetic at $128$-bit precision.
Each binomial endpoint is accepted only after its tail inequality is proved
against the exact rational error $1/64{,}000{,}000$. A positive-term binomial
recurrence bounds its remaining terms by a geometric series. Gaussian
quantiles and the finite-order divergence formulas are rounded outward.
The certified counts at $0.2$, $0.3$, and $0.5$, and all $877$ separation
statements, are unchanged. The largest change to a conditional probability
endpoint is below $3.51\times10^{-12}$. Reporting thresholds are compared as
exact rational numbers. This validates inference from the stored integer
counts. It does not validate noise generation, filter evaluation, neural
inference, the matched-call calculation, or the separate AuditVotes endpoint
study. 

\begin{figure}[t]
  \centering
  \includegraphics[width=\linewidth]{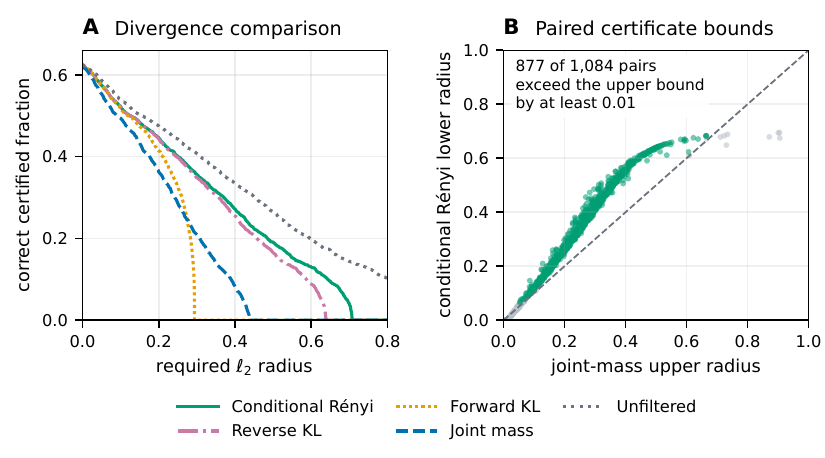}
  \caption{Conditional divergence certificates on the retained CIFAR-10.2
  counts. Panel A includes all $2{,}000$ images and displays the forward-KL
  ceiling and its removal by reverse KL. Panel B shows all $1{,}084$ correctly
  selected labels with finite paired bounds. Green points have a conditional
  R\'enyi lower radius at least $0.01$ above their joint-mass upper radius.
  The dashed diagonal denotes equality.}
  \label{fig:cifar102-strong-separation}
\end{figure}

\paragraph{Matching model calls.}
The filtered streams evaluate only retained proposals. To compare equal model
call counts, we draw multivariate-hypergeometric subsets of each original
unfiltered count vector. Each subset size equals the corresponding filtered
stream's model-call count, giving $94{,}064{,}290$ calls in total. Selection
and estimation subsets are drawn separately with seed $20260924$. Their sizes
depend on independent filtered streams, not on unfiltered labels. Thus they
represent uniformly sampled subsets of the unfiltered IID proposals. We
reselect labels on the smaller selection batch and use the original per-tail
error $1.5625\times10^{-8}$ for $20{,}000$ one-sided binomial bounds. Six selected labels
change. Unfiltered smoothing still certifies more images at all three tabulated
radii. This is one randomized count-level comparison, not a new network or
timing run. Its confidence family is separate from the original evaluation.
Its familywise error is at most $0.0003125$, and the combined error is at most
$0.0013125$.

\end{document}